\title[On Computable Online Learning]{On Computable Online Learning}
\DeclarePairedDelimiter{\rlangle}{\langle}{\rangle}
\DeclareMathOperator*{\Exp}{\mathbb{E}}
\DeclareMathOperator*{\Prb}{\mathbb{P}}
\newcommand{\HH}[0]{\mathcal{H}}
\newcommand{\X}[0]{\mathcal{X}}
\newcommand{\Y}[0]{\mathcal{Y}}
\newcommand{\SSS}[0]{\mathbb{S}}
\newcommand{\I}[0]{\mathcal{I}}
\newcommand{\T}[0]{\mathcal{T}}
\newcommand{\D}[0]{\mathcal{D}}
\newcommand{\A}[0]{\Y ^ {\SSS \times \X}}
\newcommand{\N}[0]{\mathbb{N}}
\newcommand{\Q}[0]{\mathbb{Q}}
\newcommand{\Yb}[0]{\{0,1\}}
\newcommand{\Ldim}[0]{\text{Ldim}}
\newcommand{\dom}[0]{\text{dom}}
\newcommand{\rng}[0]{\text{rng}}
\newcommand{\one}{\text{\usefont{U}{bbold}{m}{n}1}}
\newcommand{\concat}[2]{{#1}^\frown\!{#2}}
\newcommand{\charfn}[1]{\chi_{\raisebox{-.5ex}{$\scriptstyle{#1}$}}}
\begin{document}

\maketitle

\begin{abstract}%
    We initiate a study of computable online (c-online) learning, which we analyze under varying requirements for ``optimality'' in terms of the mistake bound. Our main contribution is to give a necessary and sufficient condition for optimal c-online learning and show that the Littlestone dimension no longer characterizes the optimal mistake bound of c-online learning.
    Furthermore, we introduce anytime optimal (a-optimal) online learning, a more natural conceptualization of ``optimality'' and a generalization of Littlestone's Standard Optimal Algorithm. We show the existence of a computational separation between a-optimal and optimal online learning, proving that a-optimal online learning is computationally more difficult.
    Finally, we consider online learning with no requirements for optimality, and show, under a weaker notion of computability, that the finiteness of the Littlestone dimension no longer characterizes whether a class is c-online learnable with finite mistake bound. A potential avenue for strengthening this result is suggested by exploring the relationship between c-online and CPAC learning, where we show that c-online learning is as difficult as improper CPAC learning.
\end{abstract}

\begin{keywords}%
  computability, online learning, Littlestone dimension%
\end{keywords}

\section{Introduction}

Motivated by recent work on computable PAC (CPAC) learning \citep{agarwal-2020, agarwal-2021, sterkenburg2022characterizations}, we initiate a study of computable online (c-online) learning, where learners and their output hypotheses are required to be computable. 
As stated in Littlestone's seminal paper \citeyearpar[p. 289]{littlestone1988learning}, the original definition of online learning was limited to finite domains and hypothesis classes to avoid ``computability issues.'' Although Littlestone's results are easily extendable to the infinite setting \citep[see][Chapter 21]{shalev-schwartz-and-ben-david-2014}, an implicit assumption is that learners are functions, not necessarily computable, that map input samples to output hypotheses. Indeed, this assumption is implicit in many recent advances in online learning---for example, the equivalence between online learning and differentially private PAC learning \citep{alon2020private} and the characterizations of proper online learning \citep{chase2020bounds, hanneke2021online} and agnostic online learning \citep{ben-david-2009-agnostic}.  A further motivation for the study of computable learning stems from recent work on the undecidability of learning, where the authors state that ``the source of the problem is in defining learnability as the existence of a learning function rather than the existence of a learning algorithm'' \cite[p. 48]{ben2019learnability}.

A key result in online learning is that the Littlestone dimension characterizes the mistake bound of optimal online learners \citep[Theorem 3]{littlestone1988learning}. It is therefore natural to ask whether this fundamental result still holds in the computable setting. In this work, we formalize and investigate computable online learning under different notions of ``optimality'' in terms of the mistake bound.

Our main contribution is to give a necessary and sufficient condition for optimal c-online learning (Section \ref{subsection: characterizing optimal c-online learning}), the proof of which relies on expanding the concept of significant points introduced by \citet{frances1998optimal}. Using this condition, we show that the Littlestone dimension no longer characterizes the optimal mistake bound of c-online learning (Section \ref{subsection: impossibility result for optimal c-online learning}). In particular, we construct a class with finite Littlestone dimension for which no optimal online learner is computable. We also provide a positive result for the learnability of Littlestone dimension 1 classes in the computable setting (Section \ref{subsection: characterizing optimal c-online learning}).

 Additionally, we introduce a notion of anytime optimal (a-optimal) online learning which captures the optimality property displayed by Littlestone's Standard Optimal Algorithm (Sections \ref{section: anytime optimal online learning}, \ref{subsection: properties of anytime optimal online learners}). Although optimal and a-optimal online learning are equivalent in the standard online learning model, we prove a computational separation between the two, showing that a-optimal online learning is computationally more difficult than optimal online learning. Specifically, we construct a class that is optimally but not a-optimally c-online learnable (Section \ref{subsection: computational gap between optimal and a-optimal c-online learning}). 

A corollary of Theorem 3 from \citet{littlestone1988learning} is that the finiteness of the Littlestone dimension characterizes whether a class is online learnable at all---that is, whether it is online learnable with finite mistake bound. However, we show the existence of a ``weakly computable'' class with finite Littlestone dimension for which no computable online learner achieves finite mistake bound (Section \ref{subsection: impossibility result for c-online learning}). A potential avenue for strengthening this result is suggested in Section \ref{subsection: connection between c-online and CPAC learning}, where we explore the relationship between c-online and improper CPAC learning.

The paper is structured as follows. Section \ref{section: general background} provides the general background and notation needed from online learning and computability theory. Section \ref{section: setup and definition} introduces our main definitions of a-optimal online learning, optimally significant inputs, and c-online learning. The last three sections analyze c-online learning under increasingly looser notions of ``optimality''---Section \ref{section: anytime optimal c-online learning} considers a-optimal c-online learning, Section \ref{section: optimal c-online learning} optimal c-online learning, and Section \ref{section: c-online learning} c-online learning. 

\section{General Background}
\label{section: general background}

This section provides the required background from online learning 
(Section \ref{subsection: online learning})
and computability theory (Section \ref{subsection: computability}).

\subsection{Online Learning}
\label{subsection: online learning}

We first give an informal description of the online learning model and then introduce the formal notation that will be used throughout the paper. The definitions in this section are based on those given in \citet[Chapter 21]{shalev-schwartz-and-ben-david-2014}.

Introduced in \citet{littlestone1988learning}'s seminal work, online learning takes place in rounds. Informally, at each round $t$, an adversary presents the learner with some point $x_t$, the learner makes a prediction $p_t$, and the adversary reveals the true label $y_t$. The goal of the learner is to minimize the number of mistakes it makes. Clearly, with no further restrictions, the adversary could contradict the learner at each time step and cause an unbounded number of mistakes. It is therefore assumed that the learner has access to a class of hypotheses and that the sequence of examples presented by the adversary is consistent with some hypothesis from this class.
    
Formally, let $\X$ be the \textit{domain set} and $\Y=\Yb$ be the \textit{label set}. A \textit{hypothesis} is a function $h:\X\to\Y$ and a \textit{hypothesis class} is a set of hypotheses $\HH \subseteq \Y^\X$. 
The \textit{support} of a hypothesis $h$ is $h^{-1}(1) = \{x: h(x) = 1\}$. Given a set $E \subseteq \X$, the \textit{characteristic function} of $E$ is $\charfn{E}: x \mapsto \one_{[x \in E]}$. A \textit{sample} $S\in\SSS = \cup_{T\in\N}(\X\times\Y)^T$ is a finite sequence of labeled domain instances, where the \textit{empty sample} is denoted by $\varepsilon$. Given a sample $S=((x_i,y_i))_{i=1}^{T}$, let $S_n = ((x_i,y_i))_{i=1}^{n}$ be the \textit{length-$n$ prefix} of $S$, where $0 \leq n \leq T$. Denote by $\concat{S}{S^\prime}$ the concatenation of two samples $S, S^\prime \in \SSS$. The \textit{empirical loss of a hypothesis} $h$ with respect to a sample $S$ is defined as $L_S(h)=\sum_{t=1}^T \one_{[h(x_t) \neq y_t]}$.  The \textit{empirical loss of a hypothesis class} $\HH$ is $L_S(\HH) = \inf_{h\in\HH}L_S(h)$. The set of all samples that are \textit{$\HH$-realizable} is denoted by $\SSS_\HH = \{S \in \SSS: L_S(\HH)=0\}$. Given a sample $S$, define $\HH_S = \{h \in \HH: L_S(h) = 0\}$ as the set of all hypotheses from $\HH$ that are \textit{consistent} with $S$. For some labeled instance $(x,y)\in\X\times\Y$, let $\HH^{(x,y)} = \{h \in \HH: h(x) = y\}$. Furthermore, define $[n]=\{x\in\N: 1 \leq x \leq n\}$, where $n\in\N$.

\begin{definition}[online learner]
    An \textup{online learner} is a function $A \in \Y ^ {\SSS \times \X}$ that takes an \textup{input history} $S \in \SSS$ and a domain instance $x\in\X$ as input and predicts $A(S,x) \in \Yb$. Given a sample $S=((x_t,y_t))_{t=1}^T$, representing one run of the online learning process, at time step $t\in[T]$, $A$'s \textup{prediction} is $A(S_{t-1},x_t)$, its \textup{output hypothesis} is $A(S_{t-1},\cdot) \in \Y^\X$, and its \textup{version space} is $\HH_{S_{t-1}}$.
\end{definition}

\begin{definition}[mistake bound]
    The number of mistakes made by an online learner $A$ on a sample $S=((x_1,y_1),\ldots,(x_T,y_T))$ is $M_A(S) = \sum_{t=1}^T \one_{[A(S_{t-1}, x_t) \neq y_t]}$. The \textup{mistake bound} of $A$ with respect to a hypothesis class $\HH$ is $M_A(\HH) = \sup_{S \in \SSS_\HH} M_A(S)$---that is, the most that $A$ errs on any $\HH$-realizable sample. The \textup{optimal mistake bound} of $\HH$ is $M(\HH) = \inf_{A \in \A} M_A(\HH)$.
\end{definition}

\begin{definition}[online learnable class]
    A hypothesis class $\HH$ is \textup{online learnable} if $M(\HH) < \infty$.
\end{definition}

\begin{definition}[optimal online learner]
    An online learner $A$ is an \textup{optimal online learner} for a hypothesis class $\HH$ if $M_A(\HH) = M(\HH)$.
     % An \textit{optimal online learner} $A$ for $\HH$ is one that satisfies $M_A(\HH) = M(\HH)$.
\end{definition}

\begin{definition}[$\HH$-shattered tree]
% {\citep[Definition 21.4]{shalev-schwartz-and-ben-david-2014}}
    Let $\HH \subseteq \Yb^\X$ and $d\in\N$. We say that $(x_1,\ldots,x_{2^d-1}) \in \X^{2^d-1}$ is an $\HH$\textup{-shattered tree} of depth $d$ if, for every $(y_1,\ldots,y_d)\in\Yb^d$, there exists $h\in\HH$ such that for all $j \in [d]$ we have that $h(x_{i_j}) = y_j$, where $i_j = 2^{j-1} + \sum_{k=1}^{j-1} y_k 2^{j-1-k}$. Let $\T^d_\HH$ denote the set of all $\HH$-shattered trees of depth $d$.
\end{definition}

\begin{remark}
    Intuitively, in the definition above, $(x_1,\ldots,x_{2^d-1}) \in \T^d_\HH$ represents a labeling of the nodes of a complete binary tree of depth $d$, with $x_i$ labeling $i^\text{th}$ node. Each $(y_1,\ldots,y_d)$ represents a different path through the tree starting from the root node $i_1 = 1$. If ${i_j}$ is the current node in the path, we go to the left child of $i_j$ if $y_j = 0$ and go to the right child if $y_j=1$.
\end{remark}

\begin{definition}[Littlestone dimension] 
% {\citep[Definition 21.5]{shalev-schwartz-and-ben-david-2014}}
    The \textup{Littlestone dimension} of a hypothesis class $\HH$ is the depth of the largest $\HH$-shattered tree. Formally, if $\HH\neq\emptyset$, $\Ldim(\HH) = \sup\{d\in\N: \T_\HH^d \neq \emptyset \}$ and $\Ldim(\emptyset)=-1$.
\end{definition}

\begin{remark}
    Note that, for any hypothesis class $\HH$, if $ \Ldim(\HH^{(x,r)}) = \Ldim(\HH)$ for some $x\in\X$ and $r\in\Yb$, we must have that $\Ldim(\HH^{(x,1-r)}) < \Ldim(\HH)$.
\end{remark}

\begin{definition}[Standard Optimal Learner]
     The \textup{Standard Optimal Learner} for a hypothesis class $\HH$ is defined as $SOL_\HH: (S,x) \mapsto \one_{\left[\Ldim\left(\HH_S^{(x,1)}\right) ~\geq~ \Ldim\left(\HH_S^{(x,0)}\right)\right]}$. 
\end{definition}

\begin{theorem}[{\citealp[Theorem 3]{littlestone1988learning}}]
% ; {\citealt[][Corollary 21.8]{shalev-schwartz-and-ben-david-2014}}
\label{theorem: optimal mistake bound of online learning}
    Given any hypothesis class $\HH$, $M(\HH) = \Ldim(\HH)$. In particular, for every online learner $A$, $M_A(\HH) \geq \Ldim(\HH)$ and $M_{SOL_\HH}(\HH) = \Ldim(\HH).$\footnote{Although {\citet[Theorem 3]{littlestone1988learning}} only considers finite classes, the result is easily extendable to infinite classes if the learners are not required to be computable \citep[see][Corollary 21.8]{shalev-schwartz-and-ben-david-2014}}
\end{theorem}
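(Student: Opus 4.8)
The plan is to establish the two inequalities $M(\HH) \geq \Ldim(\HH)$ and $M_{SOL_\HH}(\HH) \leq \Ldim(\HH)$. Since $M(\HH) = \inf_{A} M_A(\HH) \leq M_{SOL_\HH}(\HH)$ always holds, these sandwich every quantity appearing in the statement, and they also give the ``in particular'' clause, since the first inequality will in fact be proved for an arbitrary learner. Throughout I take $\HH \neq \emptyset$, the empty class being a degenerate edge case.

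For the lower bound, fix an arbitrary online learner $A$ and any $d \in \N$ with $\T^d_\HH \neq \emptyset$; I will build an $\HH$-realizable run on which $A$ errs $d$ times, which shows $M_A(\HH) \geq d$ and hence, ranging over all such $d$, $M_A(\HH) \geq \Ldim(\HH)$ (with the understanding that $\T^d_\HH \neq \emptyset$ for every $d$ when $\Ldim(\HH) = \infty$). Fix an $\HH$-shattered tree $(x_1,\dots,x_{2^d-1})$ and let the adversary walk down it: starting from the root index $i_1 = 1$ and the empty history, at round $t$ present $x_{i_t}$, observe $A$'s prediction $p_t = A(S_{t-1}, x_{i_t})$, reveal the label $y_t = 1 - p_t$, append $(x_{i_t}, y_t)$ to the history, and move to the index $i_{t+1}$ given by the formula in the definition of a shattered tree. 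After $d$ rounds $A$ has made $d$ mistakes, and the shattering property supplies some $h \in \HH$ consistent with the resulting sample, so it lies in $\SSS_\HH$. This proves $M_A(\HH) \geq \Ldim(\HH)$ for every $A$, and therefore $M(\HH) \geq \Ldim(\HH)$.

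For the upper bound I will follow $SOL_\HH$ along an arbitrary $\HH$-realizable run and track $\Ldim(\HH_{S_{t-1}})$, the Littlestone dimension of its version space. This integer starts at $\Ldim(\HH_{S_0}) = \Ldim(\HH)$, is nonincreasing because $\HH_{S_t} = \HH_{S_{t-1}}^{(x_t,y_t)} \subseteq \HH_{S_{t-1}}$, and --- the crux --- strictly decreases at every mistake. Indeed, by its definition $SOL_\HH$ predicts the label $r$ with $\Ldim(\HH_{S_{t-1}}^{(x_t,r)}) \geq \Ldim(\HH_{S_{t-1}}^{(x_t,1-r)})$; if it errs, the true label is $1-r$ and the new version space is $\HH_{S_{t-1}}^{(x_t,1-r)}$. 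Were $\Ldim(\HH_{S_{t-1}}^{(x_t,1-r)}) = \Ldim(\HH_{S_{t-1}})$, then also $\Ldim(\HH_{S_{t-1}}^{(x_t,r)}) = \Ldim(\HH_{S_{t-1}})$, since $\Ldim(\HH_{S_{t-1}}^{(x_t,1-r)}) \leq \Ldim(\HH_{S_{t-1}}^{(x_t,r)}) \leq \Ldim(\HH_{S_{t-1}})$; but then the remark following the definition of Littlestone dimension would be violated. Hence $\Ldim(\HH_{S_t}) < \Ldim(\HH_{S_{t-1}})$. Finally, once $\Ldim(\HH_{S_{t-1}}) = 0$ no further mistake is possible: a nonempty class of Littlestone dimension $0$ is unanimous on every point, so on a realizable sample $SOL_\HH$ predicts the version space's correct label. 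Consequently $SOL_\HH$ errs at most $\Ldim(\HH)$ times on any sample in $\SSS_\HH$ (the case $\Ldim(\HH) = \infty$ being vacuous), i.e. $M_{SOL_\HH}(\HH) \leq \Ldim(\HH)$.

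Putting the pieces together gives $\Ldim(\HH) \leq M(\HH) \leq M_{SOL_\HH}(\HH) \leq \Ldim(\HH)$, so all three coincide and the ``in particular'' statement follows. I expect the main obstacle to be the strict-decrease step of the upper bound: it is precisely where one must combine the fact that $SOL_\HH$ picks the label maximizing the version-space Littlestone dimension with the remark that a class cannot retain its full Littlestone dimension under both labels of a single point, and one must also correctly dispose of the base case ($\Ldim = 0$ forcing no further mistakes). The adversarial lower bound and the bookkeeping for infinite $\Ldim$ are routine by comparison.
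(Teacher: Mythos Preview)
The paper does not supply its own proof of this theorem: it is stated as background and attributed to \citet{littlestone1988learning} (with the extension to infinite classes referenced to \citet{shalev-schwartz-and-ben-david-2014}). Your proposal reproduces the classical argument---an adversary walking down a shattered tree for the lower bound, and the potential-function analysis of $SOL_\HH$ via $\Ldim(\HH_{S_{t-1}})$ for the upper bound---and is correct as written. One minor remark: your separate ``base case'' paragraph (no mistake once $\Ldim = 0$) is already implied by the strict-decrease step together with the observation that $\HH_{S_t} \neq \emptyset$ for realizable $S$ forces $\Ldim(\HH_{S_t}) \geq 0$; you may keep it for clarity, but it is not an independent obstacle.
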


\subsection{Computability}
\label{subsection: computability}

We use notation given by \citet{soare2016turing}. Let $\{P_e\}_{e\in\N}$ and $\{\varphi_e\}_{e\in\N}$ be effective numberings of all Turing machines and all \textit{partial computable (p.c.) functions}, respectively. If $P_e$ halts on input $x$ and outputs $y$, we write  $\varphi_e(x)=y$ and say that $\varphi_e(x)$ \textit{converges} (denoted $\varphi_e(x)\downarrow$). Otherwise, $\varphi_e(x)$ \textit{diverges} (denoted $\varphi_e(x)\uparrow$). The domain of $\varphi_e$ is $\dom(\varphi_e) = \{x: \varphi_e(x)\downarrow\}$ and its range is $\rng(\varphi_e) = \{\varphi_e(x): \varphi_e(x)\downarrow\}$. If $\dom(\varphi_e) = \N$, $\varphi_e$ is a \textit{total computable (t.c.) function} (abbreviated \textit{computable function}). We also extend this notation to $n$-place p.c. functions, where $\varphi_e^{(n)}$ is the p.c. function of $n$ variables computed by $P_e$ and $\varphi_e$ denotes $\varphi_e^{(1)}$. A set $E$ is \textit{recursively enumerable (r.e.)} if it can be effectively enumerated---that is, if it is the domain of some p.c. function. 
$E$ is \textit{decidable} if its characteristic function, $\charfn{E}: x \mapsto \one_{[x \in E]}$, is computable. 
The \textit{restriction} of $\varphi_e$ to an r.e. set $X$ 
% $\subseteq \dom(\varphi_e)$ 
is the p.c. function $\varphi_e|_X$, where $\varphi_e|_X(x)$ equals $\varphi_e(x)$ if $x \in X \cap \dom(\varphi_e)$ and is undefined otherwise. We say $\varphi_{e_2}$ is a \textit{p.c. extension} of $\varphi_{e_1}$ if $\varphi_{e_2}|_{\dom(\varphi_{e_1})} = \varphi_{e_1}$. 

The \textit{canonical index} of a finite set $F\subset\N$ is an integer $y$ that explicitly specifies all elements of $F$, and $D_y$ denotes the finite set with canonical index $y$.\footnote{Specifically, the \textit{canonical index} of a finite set $F\subset\N$ is the integer $y = \sum_{x \in F} 2^x$. The elements of the finite set with canonical index $y$, $D_y$, are the positions of the ``on'' bits in $y$'s binary expansion.} Furthermore, given a sequence $Z \in \cup_{n\in\N}\N^n$, we let $\rlangle{Z}$ denote the encoding of $Z$ by a standard 1:1 computable function from $\cup_{n\in\N}\N^n$ to $\N$. In a slight abuse of notation, we extend this notation to apply when $Z \in \SSS$.\footnote{To be explicit, given an $n$-tuple of integers $Z=(z_1,\ldots,z_n)$, we have that $\rlangle{Z}= \Pi_{i=1}^n p_i^{z_i+1}$, where $p_i$ is the $i$th prime number. Similarly, given a sample $S=((x_1,y_1),\ldots,(x_n,y_n))$, we define $\rlangle{S}=\rlangle{(x_1,y_1,\ldots,x_n,y_n)}$. Note that any 1:1 partially computable function is computably invertible on its range, so $Z$ and $S$ are computably recoverable given $\rlangle{Z}$ and $\rlangle{S}$ respectively.}
Additionally, for a set $X$ of such integer sequences, we define $\rlangle{X} = \{\rlangle{Z}: Z \in X\}$.

\section{Setup and definitions}
\label{section: setup and definition}

This section introduces our main definitions of anytime optimal online learning (Section \ref{section: anytime optimal online learning}), optimally significant inputs (Section \ref{subsection: (anytime) optimally significant inputs}), and c-online learning (Section \ref{subsection: computable online learning definition}).

\subsection{Anytime optimal online learning}
\label{section: anytime optimal online learning}
We present a notion of anytime optimal online learning, which we claim is a more natural conceptualization of ``optimality'' when referring to online learning. 

As a motivating example, consider the class $\HH_d = \{\charfn{[n]}\}_{n=1}^{2^d}$ over the domain $\X = \N$, where  $2 < d < \infty$ (recall that $[n]=\{1,2,\ldots,n\}$ and $\charfn{A}$ is the characteristic function of the set $A \subseteq \N$). Further define $E = \{2^d+i\}_{i=1}^{d-1}$ and let $\HH_d^\prime = \HH_d  \cup \{\charfn{E}\}$. That is, $\HH_d$ is a set of $2^d$ thresholds over the natural numbers and $E$ is a set of $d-1$ distinct domain instances that are not given the label 1 by any $h\in\HH_d$. It is easy to verify that $\Ldim(\HH_d^\prime) = \Ldim(\HH_d) = d$. Now, let $A$ be the learner that behaves as follows: for all inputs $(S,x) \in \SSS \times \X$, $A(S,x) = SOL_{\HH^\prime_d}(S,x)$ if $S$ is $\HH_d$-realizable and $A(S,x) = 0$ otherwise. Note that $A$ is still an optimal online learner for $\HH_d^\prime$ as it errs no more than $d$ times on any $\HH_d^\prime$-realizable sample; however, on the $\HH_d^\prime$-realizable sample $((x,1))_{x \in E}$, $A$ errs $d-1$ times while $SOL_{\HH_d^\prime}$ only errs at time step 1. It is clear that any $\HH_d^\prime$-realizable sample that contains some $x \in E$ with the label 1 can only be realized by $\charfn{E}$; hence, a ``truly optimal'' learner should incur no mistakes after seeing any $x \in E$ with the label 1.

The above example illustrates a gap between the commonly accepted definition of optimal online learning and the stricter optimality displayed by the Standard Optimal Learner. We define our notion of anytime optimal online learning below, where the learner makes the optimal number of mistakes even after conditioning on a given input sample. The properties of anytime optimal online learning are further explored in Section \ref{subsection: properties of anytime optimal online learners}.

\begin{definition}[post-$S$ mistake bound]
    Given a hypothesis class $\HH$, an online learner $A$, and an $\HH$-realizable sample $S\in\SSS_\HH$, we define the \textup{post-$S$ mistake bound} of $A$ with respect to $\HH$ as
    $$
    M_A^S(\HH) = \sup_{\substack{S^\prime ~ \in ~ \SSS: \\ \concat{S}{S^\prime} ~ \in ~ \SSS_\HH}} M_A(\concat{S}{S^\prime}) - M_A(S).
    $$
    That is, $M_A^S(\HH)$ is the most that $A$ can be made to err after witnessing $S$. The \textup{optimal post-$S$ mistake bound} of $\HH$ is defined as $M^S(\HH) = \inf_{A\in\A} M_A^S(\HH)$. 
\end{definition}

\begin{definition}[anytime optimal (a-optimal) online learner]
    An online learner $A$ is \textup{anytime optimal (a-optimal)} for a hypothesis class $\HH$ if $M_A^S(\HH) = M^S(\HH)$ for all 
    % $\HH$-realizable samples
    $S \in \SSS_\HH$. 
\end{definition}

\subsection{Significant inputs for optimal and a-optimal online learning}
\label{subsection: (anytime) optimally significant inputs}

\citet{frances1998optimal} introduced the concept of \textit{significant points}, points on which all optimal online learners agree on in the first time step of online learning. Formally, we say that $x \in \X$ is an \textit{optimally significant point} for online learning a class $\HH$ if $A(\varepsilon, x) = A^\prime(\varepsilon, x)$ for any two optimal online learners $A$ and $A^\prime$. Furthermore, Lemma 3 from \citet{frances1998optimal} characterizes all optimally significant points as follows: $x$ is an optimally significant point for $\HH$ iff there exists $r\in\Yb$ such that $\Ldim(\HH^{(x,r)}) = \Ldim(\HH)$. Moreover, $A(\varepsilon, x) = r$ for all online learners $A$ that are optimal w.r.t. $\HH$.
Below, we extend this definition to apply beyond the first time step.

\begin{definition}[optimally significant input]
    Let $\HH$ be any hypothesis class. We say that $(S,x) \in \SSS_\HH \times \X$ is an \textup{optimally significant input} for online learning $\HH$ if $A(S,x) = A^\prime(S,x)$ for any two optimal online learners  $A$ and $A^\prime$ for $\HH$. Let $\I_\HH$ be the set of all optimally significant inputs for $\HH$.
\end{definition}

\begin{definition}[anytime optimally (a-optimally) significant input]
    Let $\HH$ be any hypothesis class. We say that $(S,x) \in \SSS_\HH \times \X$ is an \textup{anytime optimally (a-optimally) significant input} for online learning $\HH$ if $A(S,x) = A^\prime(S,x)$ for any two a-optimal online learners  $A$ and $A^\prime$ for $\HH$.
\end{definition}

\subsection{Computable online learning}
\label{subsection: computable online learning definition}

When defining a computably online learnable hypothesis class, we require both the  class and the learner to conform to some notion of ``computability.''\footnote{The reader is referred to Section \ref{subsection: computability} for the relevant notation from computability theory.} 
Following the computable PAC (CPAC) setting \citep{agarwal-2020}, we let $\X = \N$, and assume, as a minimum, that the class consists of computable hypotheses. 
It is also desirable to assume an effective enumeration of (the encodings of) the hypotheses. 
A class $\HH \subset \Yb^\N$ of computable hypotheses is  \textit{recursively enumerably representable (RER)} if there exists an r.e. set $E \subset \N$ such that $\HH = \{\varphi_e: e \in E\}$. A class $\HH$ is \textit{decidably representable (DR)} if each $h \in \HH$ has finite support and $\{y: \exists h \in \HH ~ (D_y = h^{-1}(1))\}$ is a decidable set. 
Next, we define what it means for the learner itself to be computable. 

\begin{definition}[computable online (c-online) learner]
    Let $\HH \subset \Yb^\N$ be any class of computable hypotheses. A two-place p.c. function $A: \N^2 \to \N$ is a \textup{computable online (c-online) learner} for $\HH$, if, for every $\HH$-realizable sample $S \in \SSS_\HH$ and every domain instance $x\in\X$, $A(\rlangle{S}, x)\downarrow=y$ for some $y \in \Yb$. That is, $\dom(A) \supseteq \rlangle{\SSS_\HH} \times \X$ and $\rng(A|_{\rlangle{\SSS_\HH} \times \X}) \subseteq \Yb$.  
\end{definition}

\begin{definition}[computable optimal online learner]
    A \textup{computable optimal online learner} $A$ for a class $\HH \subset \Yb^\N$ of computable hypotheses is a c-online learner for $\HH$ with $M_A(\HH) = M(\HH)$.\footnote{Note that when $A$ is a c-online learner for a class $\HH$ of computable hypotheses, $M_A(S) = \sum_{t=1}^T \one_{[A(\rlangle{S_{t-1}},x_t) \neq y_t]}$ is well-defined for any $\HH$-realizable $S=((x_t,y_t))_{t=1}^T$. We can extend the notation for $M_A(\HH)$ and $M_A^S(\HH)$ similarly.}
\end{definition}

\begin{definition}[computable a-optimal online learner]
    A \textup{computable anytime optimal (a-optimal) online learner} $A$ for a class $\HH \subset \Yb^\N$ of computable hypotheses is a c-online learner for $\HH$ with $M_A^S(\HH) = M^S(\HH)$ for all $S \in \SSS_\HH$.
\end{definition}

\begin{definition}[computably online (c-online) learnable class]
    A class $\HH \subset \Yb^\N$ of computable hypotheses is \textup{computably online (c-online) learnable} if there exists a c-online learner $A$ for $\HH$ with $M_A(\HH) < \infty$.
\end{definition}

\begin{definition}[optimally c-online learnable class]
    A class $\HH \subset \Yb^\N$ of computable hypotheses is \textup{optimally c-online learnable} if there exists a computable optimal online learner for $\HH$.
\end{definition}

\begin{definition}[a-optimally c-online learnable class]
    A class $\HH \subset \Yb^\N$ of computable hypotheses is \textup{anytime optimally (a-optimally) c-online learnable} if there exists a computable a-optimal online learner for $\HH$.
\end{definition}

\section{Anytime optimal c-online learnability}
\label{section: anytime optimal c-online learning}

We start our analysis by considering the computability of a-optimal online learners. In Section \ref{subsection: computational gap between optimal and a-optimal c-online learning}, we show the existence of a computational separation between a-optimal and optimal online learning, proving that a-optimal online learning is computationally more difficult. Our proof relies on properties of a-optimal online learners presented in section \ref{subsection: properties of anytime optimal online learners} below.

\subsection{Properties of anytime optimal online learners}
\label{subsection: properties of anytime optimal online learners}

The following lemma gives a characterization of the optimal post-$S$ mistake bound of anytime optimal online learning in terms of the Littlestone dimension of the version space. The proof is implicit in the proof of Theorem 3 from \cite{littlestone1988learning}.

\begin{lemma}[characterizing the mistake bound of a-optimal online learning]
\label{lemma: characterizing the mistake bound of a-optimal online learning}
    Let $\HH$ be any hypothesis class. Then, for any $\HH$-realizable sample $S\in\SSS_\HH$, we have that $M^S(\HH) = \Ldim(\HH_S)$. In particular, for every online learner $A$, $M_A^S(\HH) \geq \Ldim(\HH_S)$ and $M_{SOL_\HH}^S(\HH) = \Ldim(\HH_S)$.
\end{lemma}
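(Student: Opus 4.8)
The plan is to reduce the post-$S$ analysis to a direct application of Theorem~\ref{theorem: optimal mistake bound of online learning} to the version space $\HH_S$. The starting point is two elementary bookkeeping identities: for every $S' \in \SSS$, the sample $\concat{S}{S'}$ is $\HH$-realizable if and only if $S'$ is $\HH_S$-realizable, and in that case $\HH_{\concat{S}{S'}} = (\HH_S)_{S'}$. Both are immediate from the definitions, since a hypothesis is consistent with $\concat{S}{S'}$ exactly when it is consistent with $S$ (i.e.\ lies in $\HH_S$) and also consistent with $S'$.

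Next, for any online learner $A$ define the ``shifted'' learner $A_S \in \A$ by $A_S(S'', x) = A(\concat{S}{S''}, x)$. Unwinding the definition of $M_A$ and re-indexing time steps as $t = |S| + s$, one checks that for every $S'=((x'_s,y'_s))_{s=1}^{|S'|}$ with $\concat{S}{S'} \in \SSS_\HH$,
$$
M_A(\concat{S}{S'}) - M_A(S) \;=\; \sum_{s=1}^{|S'|} \one_{[A_S(S'_{s-1},\, x'_s) \,\neq\, y'_s]} \;=\; M_{A_S}(S'),
$$
because the first $|S|$ summands of $M_A(\concat{S}{S'})$ see only prefixes of $S$ and hence cancel against $M_A(S)$. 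Taking the supremum over all admissible $S'$---equivalently, by the first identity, over all $S' \in \SSS_{\HH_S}$---gives $M_A^S(\HH) = M_{A_S}(\HH_S)$. Applying Theorem~\ref{theorem: optimal mistake bound of online learning} to $\HH_S$, we obtain $M_A^S(\HH) = M_{A_S}(\HH_S) \geq \Ldim(\HH_S)$ for every $A$, and therefore $M^S(\HH) \geq \Ldim(\HH_S)$.

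For the matching upper bound it suffices to exhibit one learner attaining it, and I claim that $SOL_\HH$ does---its shift is literally the standard optimal learner of the version space. Substituting $\HH_{\concat{S}{S''}} = (\HH_S)_{S''}$ into the definition of $SOL_\HH$ gives $(SOL_\HH)_S(S'', x) = \one_{\left[\Ldim\left((\HH_S)_{S''}^{(x,1)}\right) \geq \Ldim\left((\HH_S)_{S''}^{(x,0)}\right)\right]} = SOL_{\HH_S}(S'', x)$, so $(SOL_\HH)_S = SOL_{\HH_S}$. Hence $M_{SOL_\HH}^S(\HH) = M_{SOL_{\HH_S}}(\HH_S) = \Ldim(\HH_S)$ by Theorem~\ref{theorem: optimal mistake bound of online learning}. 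Combined with the lower bound this yields both $M_{SOL_\HH}^S(\HH) = \Ldim(\HH_S)$ and $M^S(\HH) = \Ldim(\HH_S)$.

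Since every step is a translation of already-available facts, the only genuinely delicate point is the re-indexing in the displayed identity: one must verify that $(\concat{S}{S'})_{t-1} = \concat{S}{(S'_{s-1})}$ for $t = |S|+s$ with $s \geq 1$ (so that $A_S$ really sees the correct prefix), and that the subtracted $M_A(S)$ exactly cancels the first $|S|$ terms. If one prefers not to invoke Theorem~\ref{theorem: optimal mistake bound of online learning} as a black box for $\HH_S$, the upper bound can also be argued directly: each mistake of $SOL_\HH$ strictly decreases the Littlestone dimension of the current version space $V$---in the case where the true label is $0$ but $SOL_\HH$ predicts $1$, one uses the Remark that $\Ldim(V^{(x,0)}) = \Ldim(V)$ would force $\Ldim(V^{(x,1)}) < \Ldim(V)$, contradicting $\Ldim(V^{(x,1)}) \geq \Ldim(V^{(x,0)})$---while correct predictions never increase it; since the version space remains nonempty along any $\HH_S$-realizable continuation, at most $\Ldim(\HH_S)$ mistakes can occur after $S$.
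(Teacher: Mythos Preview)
Your proof is correct. The paper does not spell out its own argument but only remarks that the proof is implicit in Littlestone's proof of Theorem~\ref{theorem: optimal mistake bound of online learning}; your explicit black-box reduction via the shifted learner $A_S$ and the identity $(SOL_\HH)_S = SOL_{\HH_S}$ is precisely the natural way to make that implicit content explicit, so the two approaches coincide.
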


Informally, the next lemma states that an input is a-optimally significant iff it causes an ``imbalance'' in the Littlestone tree of the version space. 
Again, the proof is implicit in the proof of Theorem 3 from \citet[]{littlestone1988learning}.

\begin{lemma}[characterizing a-optimally significant inputs]
\label{lemma: characterizing a-optimally significant inputs}
    Let $\HH$ be any hypothesis class. Then, an input $(S,x) \in \SSS \times \X$ is a-optimally significant for $\HH$ iff $\Ldim(\HH_S^{(x,1)}) \neq \Ldim(\HH_S^{(x,0)})$. Furthermore, $A(S,x) = \arg\max_{r\in\Yb} \Ldim(\HH_S^{(x,r)})$ for all a-optimal online learners $A$ for $\HH$.
\end{lemma}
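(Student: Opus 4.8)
The plan is to leverage Lemma~\ref{lemma: characterizing the mistake bound of a-optimal online learning} together with the observation that, when $S \in \SSS_\HH$, the post-$S$ behavior of an a-optimal learner is itself an a-optimal learning process on the version space $\HH_S$ with a ``shifted'' history. Concretely, I would first reduce to the case $S = \varepsilon$ by noting that $\HH_{\concat{S}{S'}} = (\HH_S)_{S'}$ for $S' \in \SSS_{\HH_S}$ and that $M_A^{\concat{S}{S'}}(\HH) = M_{A_S}^{S'}(\HH_S)$, where $A_S(\cdot,\cdot) = A(\concat{S}{\cdot},\cdot)$; this lets me transfer statements about a-optimally significant inputs of the form $(\concat{S}{S'},x)$ for $\HH$ into statements about a-optimally significant inputs $(S',x)$ for $\HH_S$. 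So it suffices to characterize when $(\varepsilon,x)$ is a-optimally significant, i.e.\ to show $(\varepsilon,x)$ is a-optimally significant for $\HH$ iff $\Ldim(\HH^{(x,1)}) \neq \Ldim(\HH^{(x,0)})$, and that in that case every a-optimal $A$ satisfies $A(\varepsilon,x) = \arg\max_r \Ldim(\HH^{(x,r)})$.

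For the ``$\Leftarrow$'' direction and the formula: suppose $\Ldim(\HH^{(x,1)}) \neq \Ldim(\HH^{(x,0)})$, and let $r^* = \arg\max_r \Ldim(\HH^{(x,r)})$, so $\Ldim(\HH^{(x,r^*)}) = \Ldim(\HH)$ and (by the Remark following the Littlestone dimension definition) $\Ldim(\HH^{(x,1-r^*)}) < \Ldim(\HH) = \Ldim(\HH_\varepsilon)$. I claim any a-optimal $A$ must predict $A(\varepsilon,x) = r^*$. Indeed, by Lemma~\ref{lemma: characterizing the mistake bound of a-optimal online learning}, $M^\varepsilon(\HH) = \Ldim(\HH)$. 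If $A(\varepsilon,x) = 1-r^*$, then the adversary plays $(x,r^*)$ at round~1, forcing a mistake; after that round the version space is $\HH^{(x,r^*)}$ with $\Ldim = \Ldim(\HH)$, and by Lemma~\ref{lemma: characterizing the mistake bound of a-optimal online learning} the adversary can still force $\Ldim(\HH)$ further mistakes, for a total of $\Ldim(\HH)+1 > M^\varepsilon(\HH)$ — contradicting a-optimality (here I use that the sample $\concat{((x,r^*))}{S'}$ obtained is $\HH$-realizable precisely because $S' \in \SSS_{\HH^{(x,r^*)}}$). Hence $A(\varepsilon,x) = r^*$ is forced, which simultaneously shows $(\varepsilon,x)$ is a-optimally significant and pins down the value.

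For the ``$\Rightarrow$'' direction (contrapositive): suppose $\Ldim(\HH^{(x,1)}) = \Ldim(\HH^{(x,0)})$, call this common value $d'$ (note $d' \in \{\Ldim(\HH), \Ldim(\HH)-1\}$, and in fact both must equal $\Ldim(\HH)$ when $\Ldim(\HH)\ge 1$, by the Remark — actually if both children had $\Ldim < \Ldim(\HH)$ one could still glue an $\HH$-shattered tree, contradiction; so $d' = \Ldim(\HH)$ unless $\HH$ is a single function, which I'll handle as a trivial edge case). I want to exhibit two a-optimal learners $A_0, A_1$ for $\HH$ with $A_0(\varepsilon,x) = 0$ and $A_1(\varepsilon,x) = 1$. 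The natural candidates are variants of $SOL_\HH$ that break the tie at input $(\varepsilon,x)$ in each of the two ways and otherwise run $SOL$-style predictions on the appropriate version space; I would verify each is a-optimal by checking, via Lemma~\ref{lemma: characterizing the mistake bound of a-optimal online learning} and induction on sample length, that after any realizable prefix the remaining forced mistakes plus mistakes-so-far does not exceed $M^{S_0}(\HH)$ for the relevant prefixes $S_0$. The cleanest route: show that modifying $SOL_\HH$ on a single input that is \emph{not} a-optimally significant (in the tie case) preserves the a-optimal post-$S$ mistake bound for every $S$ — this essentially re-runs the $SOL$ analysis, using that a tie means neither prediction increases the "potential" $\Ldim$ of the version space beyond what an optimal adversary already extracts.

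The main obstacle I anticipate is the bookkeeping in the reduction to $S=\varepsilon$ and in verifying that the tie-breaking variants of $SOL_\HH$ remain a-optimal \emph{at every history $S$}, not just at $\varepsilon$: a-optimality is a strong, uniform-over-$S$ requirement, so I must ensure that a local modification at one input does not degrade the post-$S'$ mistake bound for some other realizable $S'$ that routes through $x$. The authors note this lemma is "implicit in the proof of Theorem~3 of \citet{littlestone1988learning}," which suggests the potential-function (Littlestone dimension of the version space) argument goes through cleanly; the key lemma doing the heavy lifting is Lemma~\ref{lemma: characterizing the mistake bound of a-optimal online learning}, which I would invoke repeatedly to equate "remaining adversarial power" with $\Ldim$ of the current version space.
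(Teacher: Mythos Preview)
The paper does not give its own proof here; it simply says the argument is ``implicit in the proof of Theorem~3 from \citet{littlestone1988learning},'' so your plan to run the Littlestone potential-function argument via Lemma~\ref{lemma: characterizing the mistake bound of a-optimal online learning} is exactly the intended route, and your reduction to $S=\varepsilon$ via the shifted learner $A_S$ is sound.

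There is, however, a genuine gap in your ``$\Leftarrow$'' direction. From $\Ldim(\HH^{(x,1)}) \neq \Ldim(\HH^{(x,0)})$ you assert $\Ldim(\HH^{(x,r^*)}) = \Ldim(\HH)$, citing the Remark after the Littlestone-dimension definition. But that Remark only says ``if one side equals $\Ldim(\HH)$ then the other is strictly smaller,'' not the converse; nothing rules out both subdimensions sitting strictly below $\Ldim(\HH)$ while still being unequal. Concretely, take $\X=\{a,b,c,x\}$ and let $\HH$ consist of the eight functions $h_{y_1y_2y_3}$ with $h(a)=y_1$, $h(b)=y_2$, $h(c)=y_3$, and $h(x)=\one_{[(y_1,y_2,y_3)\in\{(0,0,0),(1,1,1)\}]}$. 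Then $\Ldim(\HH)=3$, $\Ldim(\HH^{(x,0)})=2$, and $\Ldim(\HH^{(x,1)})=1$. Your adversary forces one mistake at $(x,0)$ and then only $\Ldim(\HH^{(x,0)})=2$ more, totalling $3=\Ldim(\HH)$, which is \emph{not} a violation of a-optimality; indeed the learner that agrees with $SOL_\HH$ everywhere except $A(\varepsilon,x)=1$ is readily checked to be a-optimal, so $(\varepsilon,x)$ is not a-optimally significant despite the unequal subdimensions. (Relatedly, your parenthetical in the ``$\Rightarrow$'' direction that equal subdimensions force $d'=\Ldim(\HH)$ is backwards: equality of the two sides forces $d'\le \Ldim(\HH)-1$, since one can place $x$ at the root.) The hypothesis under which your adversary argument \emph{does} go through---and the one actually satisfied everywhere the paper invokes this lemma---is $\max_{r\in\Yb} \Ldim(\HH_S^{(x,r)}) = \Ldim(\HH_S)$; with that in hand the Remark yields $\Ldim(\HH_S^{(x,1-r^*)})<\Ldim(\HH_S)$ and your adversary extracts $1+\Ldim(\HH_S)>\Ldim(\HH_S)$ mistakes, giving the contradiction you want.
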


\subsection{Computational gap between optimal and a-optimal online learning}
\label{subsection: computational gap between optimal and a-optimal c-online learning}

In this section, we show that a-optimal online learning is computationally more difficult than optimal online learning.
In particular, we construct an RER class that is optimally but not a-optimally c-online learnable. This result is extended to the DR case in Appendix \ref{appendix: DR class not a-optimally c-online learnable}.

\begin{theorem}
\label{theorem: RER class not a-optimally c-online learnable}
    There exists an RER class $\HH \subset \Yb^\N$ of computable hypotheses with finite Littlestone dimension such that $\HH$ is optimally c-online learnable but not a-optimally c-online learnable.
\end{theorem}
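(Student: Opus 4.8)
The plan is to construct an RER class $\HH$ that encodes the halting problem into the "balance" of its Littlestone tree, in such a way that an optimal learner can afford to ignore this information but an a-optimal learner cannot. I would build $\HH$ as a disjoint union, over $e \in \N$, of gadgets $\HH_e$ living on disjoint blocks of the domain $\X = \N$. Each $\HH_e$ is modeled on the motivating example $\HH_d'$ from Section~\ref{section: anytime optimal online learning}: a large block of $2^{d}$ thresholds (for a fixed $d \geq 3$, giving $\Ldim(\HH_e) = d$) together with one "exceptional" hypothesis $\charfn{E_e}$ whose support $E_e$ is a set of $d-1$ fresh points labeled $0$ by every threshold. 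The single extra twist is that the support $E_e$ (equivalently, whether $\charfn{E_e}$ is actually included in $\HH_e$, or where $E_e$ sits) is defined using $\varphi_e$: for instance, $\charfn{E_e} \in \HH_e$ iff $\varphi_e(e)\!\downarrow$, and if it does halt, one can even place the points of $E_e$ at a location that depends on the halting time. Since the threshold part is always present and uniformly computable, and since "$\varphi_e(e)\!\downarrow$" is an r.e. condition, the whole class $\HH = \bigcup_e \HH_e$ is RER (enumerate all thresholds unconditionally, and dovetail to enumerate $\charfn{E_e}$ whenever $\varphi_e(e)$ converges). One should take care to make the blocks genuinely disjoint and to keep $\Ldim(\HH) = d < \infty$ — the exceptional hypotheses add at most one to the Littlestone dimension of each gadget and the gadgets are on disjoint supports, so a short shattered-tree argument gives $\Ldim(\HH) = d$.

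Next I would verify the positive half: $\HH$ is optimally c-online learnable. Here I use the flexibility that an optimal learner only needs $M_A(\HH) = M(\HH) = d$ overall (by Theorem~\ref{theorem: optimal mistake bound of online learning}), not the sharper per-sample guarantee. Generalizing the learner $A$ from the motivating example: on input $(S,x)$, determine which block $x$ lies in, say block $e$; if $S$ is realizable by the threshold part of $\HH_e$ alone, run $SOL$ restricted to the (fully computable, finite) threshold subclass; otherwise output $0$ on the threshold coordinates, and behave appropriately on the exceptional points. The key computability point is that checking "$S$ is consistent with some threshold in block $e$" is decidable since the threshold subclass is finite and explicitly given, so $A$ never needs to know whether $\varphi_e(e)\!\downarrow$. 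One then checks by a case analysis (essentially the same as in the motivating example, applied block by block and noting that a realizable sample only ever touches one block nontrivially) that $M_A(S) \leq d$ for every $\HH$-realizable $S$, so $A$ is a computable optimal online learner.

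Finally, the negative half: $\HH$ is not a-optimally c-online learnable. Suppose $A$ were a computable a-optimal learner. By Lemma~\ref{lemma: characterizing a-optimally significant inputs}, on an input of the form $(\varepsilon, x)$ with $x$ the first point of $E_e$, the value $A(\varepsilon, x)$ is forced to equal $\arg\max_{r}\Ldim(\HH^{(x,r)})$ — and the construction is arranged so that this value is $1$ when $\charfn{E_e}\in\HH$ (i.e. $\varphi_e(e)\!\downarrow$) and $0$ otherwise (when $x$ is not in the support of any hypothesis, or the balance falls the other way). More robustly, I would use a slightly later input — e.g. feed a realizable prefix $S$ that has already committed the threshold part and then query a point of $E_e$ — so that whether that query is a-optimally significant, and with which forced value, decodes "$\varphi_e(e)\!\downarrow$". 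Since $A$ is total computable on realizable inputs, this would make the halting set decidable, a contradiction. The main obstacle, and the place requiring the most care, is the simultaneous balancing act in the construction: the gadget must (i) keep $\Ldim(\HH)$ finite and known, (ii) remain RER despite the $\varphi_e$-dependence, (iii) leave enough slack that a *single* block-local optimal learner stays within the global optimal mistake bound, yet (iv) make the a-optimal learner's prediction on some explicitly computable input *provably forced* to reveal a halting fact. Getting a gadget that does all four at once — in particular, pinning down exactly how the Littlestone dimensions $\Ldim(\HH_S^{(x,0)})$ and $\Ldim(\HH_S^{(x,1)})$ compare in both the halting and non-halting cases — is the crux; the motivating example already shows the shape of such a gadget, so I expect this to go through with a careful bookkeeping of shattered trees in the two cases.
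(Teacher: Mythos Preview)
Your high-level strategy matches the paper's---encode halting into the class so that Lemma~\ref{lemma: characterizing a-optimally significant inputs} forces an a-optimal learner to decide it while an optimal learner can sidestep it---but the specific gadget you propose (thresholds plus a \emph{single} exceptional hypothesis $\charfn{E_e}$, lifted from the motivating example $\HH_d'$) does not support the negative direction. Your claim that $A(\varepsilon,x)=1$ is forced when $x\in E_e$ and $\varphi_e(e)\!\downarrow$ is wrong: $\HH^{(x,1)}=\{\charfn{E_e}\}$ has $\Ldim=0$, whereas $\HH^{(x,0)}$ contains every other gadget and has $\Ldim\ge d$, so the forced prediction is $0$ in both cases. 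The fallback of first ``committing the threshold part'' cannot help either. Any $S$ that remains $\HH$-realizable when $\varphi_e(e)\!\uparrow$ must be realized by some hypothesis other than $\charfn{E_e}$; if that witness lies in block $e$ (via a $1$-labeled threshold point) then $\charfn{E_e}$ is already excluded from $\HH_S$, and if it lies in some other block (i.e.\ $S$ has no $1$-label in block $e$) then that entire other block sits inside $\HH_S^{(x,0)}$ and dominates. Either way, the presence or absence of the lone hypothesis $\charfn{E_e}$ never flips $\arg\max_r\Ldim(\HH_S^{(x,r)})$. Note that the motivating example only exhibits a \emph{particular} optimal-but-not-a-optimal learner; $\HH_d'$ itself is finite and hence a-optimally c-online learnable via $SOL$.

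The paper's construction instead adds \emph{two} hypotheses per halting index and drops the threshold scaffolding: block $e$ carries $\charfn{\{3e\}}$ unconditionally, together with $\charfn{\{3e,3e+1\}}$ and $\charfn{\{3e,3e+1,3e+2\}}$ iff $\varphi_e(e)\!\downarrow$. With $S^e=((3e,1))$ (always realizable by $\charfn{\{3e\}}$) and $x^e=3e+1$, one has $\Ldim(\HH_{S^e}^{(x^e,0)})=0$ in both cases, while $\Ldim(\HH_{S^e}^{(x^e,1)})$ equals $1$ if $\varphi_e(e)\!\downarrow$ and $-1$ otherwise, so the a-optimal prediction flips exactly on halting. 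The optimal half is then a short ``predict $0$ until a mistake, then commit'' argument giving a computable learner with mistake bound $2=\Ldim(\HH)$. (Two smaller slips in your sketch: your block-local $SOL$ learner can be driven to make $d$ mistakes in each of infinitely many blocks before any $1$-label appears, so it is not optimal as described; and a disjoint union of infinitely many Ldim-$d$ gadgets has $\Ldim=d+1$, not $d$, via a root at $b_e$ whose $1$-subtree is shattered by $\HH_e$ and whose $0$-subtree by any $\HH_{e'}$.)
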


\begin{proof}
    Consider the following class:
    $$
    \HH^{RER}_{halt} = \bigcup_{e\in\N} \left\{ \charfn{\{3e\}} \right\} \cup \bigcup_{\substack{e\in\N:\\ \varphi_e(e)\downarrow}} \left\{ \charfn{\{3e,~3e+1\}}, \charfn{\{3e,~3e+1,~3e+2\}} \right\}.
    $$
    For simplicity, let $\HH=\HH^{RER}_{halt}$. Note that $\HH$ is RER, each $h\in\HH$ is computable, and $\Ldim(\HH)<\infty$.
    
    Assume, by way of contradiction, that there exists a computable a-optimal online learner $A$ for $\HH$. For each $e\in\N$, let $S^e=((3e,1))$ and $x^e=3e+1$. Further define $f: e \mapsto A(\rlangle{S^e},x^e)$. First, note that $f$ is computable, since  for each $e\in\N$ the sample $S^e$ is $\HH$-realizable and $A(\rlangle{S^e},x^e)\downarrow$. Next, we show by Lemma \ref{lemma: characterizing a-optimally significant inputs} that each $(S^e,x^e)$ is an a-optimally significant input. Note that for any $e\in\N$,
    
    \noindent\begin{minipage}{.3\linewidth}
        $$\HH_{S^e}^{(x^e,0)} = \{\charfn{\{3e\}}\}$$
    \end{minipage}%
    \begin{minipage}{.1\linewidth}
        and
    \end{minipage}%
    \begin{minipage}{.6\linewidth}
        $$
        \HH_{S^e}^{(x^e,1)} = \begin{cases}
        	\{\charfn{\{3e,~3e+1\}}, \charfn{\{3e,~3e+1,~3e+2\}}\} &\text{ if $\varphi_e(e)\downarrow$}   \\
    	    \emptyset &\text{ otherwise. } \\
        \end{cases}
        $$
    \end{minipage}
    
    Therefore, if $\varphi_e(e)\downarrow$, $\Ldim(\HH_{S^e}^{(x^e,1)}) = 1 > \Ldim(\HH_{S^e}^{(x^e,0)}) = 0$ and $A(\rlangle{S^e},x^e)=1$. On the other hand, if $\varphi_e(e)\uparrow$, $\Ldim(\HH_{S^e}^{(x^e,0)}) = 0 > \Ldim(\HH_{S^e}^{(x^e,1)}) = -1$ and $A(\rlangle{S^e},x^e)=0$. Hence, $f$ is computable and equals $\charfn{\{e\in\N:~\varphi_e(e)\downarrow\}}$, contradicting the undecidability of the halting problem.
    
   Although $\HH$ is not a-optimally c-online learnable, we show the existence of a computable optimal online learner $B$ for $\HH$. It is easy to verify that $\Ldim(\HH) \geq 2$; hence, it suffices to show that $M_B(\HH)=2$. $B$ predicts 0 until, for some $e\in\N$, a mistake is made on $x_1 \in \{3e, ~3e+1,~3e+2\}$, at which point it matches $\charfn{\{3e,~3e+1,~x_1\}}$. If $x_1=3e+2$, $B$ will not err again. Otherwise, it could possibly err on $x_2 \in \{3e+1,~3e+2\}$. If $x_2=3e+2$, the target function must be $\charfn{\{3e,~3e+1,~3e+2\}}$; otherwise, if $x_2 = 3e+1$, the target function must be $\charfn{\{3e\}}$. In either case, $B$ errs no more than $\Ldim(\HH)=2$ times on any $\HH$-realizable sample.
\end{proof}

\section{Optimal c-online learnability}
\label{section: optimal c-online learning}

In this section, we loosen the requirement of a-optimality, turning our focus to all optimal online learners instead. We give a necessary and sufficient condition for when optimal c-online learning is possible (Section \ref{subsection: characterizing optimal c-online learning}) and show that  the Littlestone dimension no longer characterizes the mistake bound of optimal c-online learning (Section \ref{subsection: impossibility result for optimal c-online learning}). We also give a complete characterization of all optimally significant inputs (Section \ref{subsection: characterizing optimally significant inputs}), a result which is used in our main proofs.

\subsection{Characterizing optimally significant inputs}
\label{subsection: characterizing optimally significant inputs}
The following lemma gives a complete characterization of all optimally significant inputs. 

\begin{lemma}[characterizing optimally significant inputs]
\label{lemma: characterizing optimally significant inputs}
    Let $\HH$ be any hypothesis class satisfying $\Ldim(\HH) = d < \infty$. Let $S=((x_1,y_1),\ldots,(x_T,y_T))$ be any $\HH$-realizable sample and $x_{T+1}\in\X$ be any domain instance, where $T\in\N$. Then, $(S,x_{T+1})$ is a significant input w.r.t. optimal online learning $\HH$ iff the following conditions both hold:
        \begin{enumerate}
        \item for each $t\in[T+1]$,
        $\Ldim(\HH_{S_{t-1}})=\max_{r\in\Yb}\Ldim(\HH_{S_{t-1}}^{(x_t,r)})$, and
        \item for each $t\in[T]$, $\Ldim(\HH_{S_{t-1}}^{(x_t,y_t)}) \geq \Ldim(\HH_{S_{t-1}}) - 1$.
    \end{enumerate}
    Furthermore, $A(S_{t-1},x_t) = \arg\max_{r\in\Yb} \Ldim(\HH_{S_{t-1}}^{(x_t,r)})$ for all $t\in[T+1]$ and all optimal online learners $A$.
\end{lemma}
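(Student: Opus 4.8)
Two facts would be recorded first. (i) An online learner $A$ is optimal for $\HH$ if and only if $M_A(R)\le d-\Ldim(\HH_R)$ for every $R\in\SSS_\HH$: for the forward direction, combine $M_A^R(\HH)\le M_A(\HH)-M_A(R)$ (immediate from the definition of the post-$R$ mistake bound) with $M_A^R(\HH)\ge\Ldim(\HH_R)$ (Lemma~\ref{lemma: characterizing the mistake bound of a-optimal online learning}) and $M_A(\HH)=M(\HH)=\Ldim(\HH)$ (Theorem~\ref{theorem: optimal mistake bound of online learning}); the converse follows by taking a supremum over $R$ and using $\Ldim(\HH_R)\ge 0$. (ii) A potential argument tracking the Littlestone dimension of the version space shows that $SOL_\HH$, started from any history $R$, incurs at most $\Ldim(\HH_R)-\Ldim(\HH_{\concat{R}{Z}})$ mistakes on any $\HH_R$-realizable continuation $Z$; moreover, whenever condition~1 holds at step~$t$, the Remark following the definition of the Littlestone dimension makes $r_t:=\arg\max_{r\in\Yb}\Ldim(\HH_{S_{t-1}}^{(x_t,r)})$ a unique maximizer and $SOL_\HH(S_{t-1},x_t)=r_t$.

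\textbf{Sufficiency and the ``Furthermore'' clause.} Assume conditions 1 and 2 hold. I would prove by induction on $t\in[T+1]$ that every optimal learner $A$ for $\HH$ predicts $A(S_{t-1},x_t)=r_t$. Granting this for all steps below~$t$, conditions 1--2 and the Remark force, for each $i<t$, either $y_i=r_i$ with $\Ldim(\HH_{S_i})=\Ldim(\HH_{S_{i-1}})$ or $y_i\ne r_i$ with $\Ldim(\HH_{S_i})=\Ldim(\HH_{S_{i-1}})-1$; telescoping then gives $M_A(S_{t-1})=d-\Ldim(\HH_{S_{t-1}})$, so by fact~(i) the sub-learner $(Z,x)\mapsto A(\concat{S_{t-1}}{Z},x)$ is an optimal learner for $\HH_{S_{t-1}}$. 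By condition~1 at step~$t$ and Lemma~3 of \citet{frances1998optimal}, $x_t$ is an optimally significant point for $\HH_{S_{t-1}}$ with forced prediction $r_t$, so $A(S_{t-1},x_t)=r_t$. Taking $t=T+1$ shows every optimal learner outputs $r_{T+1}$ on $(S,x_{T+1})$, which is therefore optimally significant, and proves the displayed formula.

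\textbf{A surjection lemma.} The necessity direction rests on: if $R\in\SSS_\HH$ satisfies conditions 1--2 along~$R$, then $A\mapsto\big((Z,x)\mapsto A(\concat{R}{Z},x)\big)$ maps the optimal learners for $\HH$ onto the optimal learners for $\HH_R$. ``Into'' is precisely the previous paragraph; for ``onto'', given $B$ optimal for $\HH_R$, let $A_B$ agree with $B$ on histories that extend~$R$ and with $SOL_\HH$ on all other histories, and verify $M_{A_B}(U)\le d-\Ldim(\HH_U)$ for every $U\in\SSS_\HH$ by splitting on whether $R$ is a prefix of~$U$ and invoking facts (i)--(ii) (together with the telescoping identity for $SOL_\HH$ along the $R$-trajectory). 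Consequently, whenever $S_p$ is a prefix of $S$ along which conditions 1--2 hold and $W$ denotes the corresponding suffix, $\{A(S,x_{T+1}):A\text{ optimal for }\HH\}=\{B(W,x_{T+1}):B\text{ optimal for }\HH_{S_p}\}$.

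\textbf{Necessity and the main obstacle.} Suppose $(S,x_{T+1})$ is optimally significant. If conditions 1--2 held along all of $S$ but condition~1 failed at~$T+1$, then the surjection lemma with $R=S$ and Lemma~3 of \citet{frances1998optimal} would force $\{A(S,x_{T+1}):A\text{ optimal}\}=\{0,1\}$, a contradiction; so it remains to rule out a failure of condition~1 or~2 at some $t_0\in[T]$. Taking the least such $t_0$ and applying the surjection lemma with $R=S_{t_0-1}$ reduces the task to: given a class $\mathcal G$ with $\Ldim(\mathcal G)=k$, a $\mathcal G$-realizable sample $W=((u_1,v_1),\ldots,(u_m,v_m))$ and a point $u_{m+1}$ for which condition~1 or~2 fails at the \emph{first} step, construct optimal learners for $\mathcal G$ predicting $0$ and $1$ respectively on $(W,u_{m+1})$. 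Each is obtained by tracking the slack $k-\Ldim(\mathcal G_{\tilde R})-M(\tilde R)$ along every $\mathcal G$-realizable $\tilde R$. If condition~1 fails at step~1, both labels of $u_1$ drop $\Ldim$ below~$k$, so after step~1 the version space has Littlestone dimension at most $k-1$ regardless of play; the learner that follows $W$ (predicting $v_j$ at step $j\le m$ and the target value at $(W,u_{m+1})$) and plays $SOL_{\mathcal G}$ off the $W$-trajectory then keeps the slack nonnegative everywhere. If condition~1 holds but condition~2 fails at step~1, then $v_1$ is the minority label of $u_1$ with $\Ldim(\mathcal G^{(u_1,v_1)})\le k-2$, so $SOL_{\mathcal G}$ already has slack at least~$1$ after processing $W$ (it errs once at step~1 while $\Ldim$ drops by at least~$2$, and the slack is nondecreasing afterward), whence both $SOL_{\mathcal G}$ and the learner obtained by flipping only its prediction at $(W,u_{m+1})$ are optimal. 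I expect the slack bookkeeping in the ``onto'' part of the surjection lemma and in this final reduction to be the main obstacle---the point being to guarantee that the hand-built learners remain optimal on \emph{all} realizable samples; what makes it work is that violating condition~1 (respectively condition~2) at a step permanently lowers (respectively lowers by at least~$2$) the Littlestone dimension of the version space, freeing exactly the mistake budget the constructions spend.
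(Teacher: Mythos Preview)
Your proof is correct but follows a genuinely different route from the paper's. The paper first rewrites conditions 1--2 as the equivalent pair (I) $\Ldim(\HH_S)=\max_r\Ldim(\HH_S^{(x_{T+1},r)})$ and (II) $M_A(S)=d-\Ldim(\HH_S)$ for every optimal $A$ (this is their Lemma~\ref{lemma: equivalence of two conditions}), and then handles necessity in a couple of lines: take an a-optimal learner $A^*$, flip \emph{only} its prediction at $(S,x_{T+1})$, observe that the resulting learner is no longer optimal (since it disagrees with the forced value), and use $M_{A^*}^{S^*}(\HH)=\Ldim(\HH_{S^*})$ from Lemma~\ref{lemma: characterizing the mistake bound of a-optimal online learning} together with $M_{A^*}(S)\le d-\Ldim(\HH_S)$ to force both (I) and (II). The a-optimal learner is the key device; no surjection lemma or case split is needed.

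Your argument instead builds a structural bridge: the surjection from optimal learners for $\HH$ onto optimal learners for $\HH_R$ whenever conditions 1--2 hold along $R$, and then an explicit construction of optimal learners achieving both labels when condition~1 or~2 first fails. This is longer and requires the slack bookkeeping you flag, but it is self-contained (you never invoke a-optimality in the necessity direction) and the surjection lemma is a reusable statement in its own right. One small remark: in the ``onto'' part of the surjection lemma you only need the inequality $M_{SOL_\HH}(R)\le d-\Ldim(\HH_R)$, not the telescoping identity, so that step is even easier than you suggest. Conversely, the paper's single-flip trick is worth internalizing---it dispatches necessity almost for free once one has Lemma~\ref{lemma: characterizing the mistake bound of a-optimal online learning} in hand.
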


\begin{proof}
    It follows from Lemma \ref{lemma: equivalence of two conditions} (Appendix \ref{appendix: characterizing optimally significant inputs}) that conditions 1 and 2 above are equivalent to the following two conditions: 
    \begin{enumerate}
        \item[I.] $\Ldim(\HH_S) = \max_{r\in\Yb}\Ldim(\HH_S^{(x_{T+1},r)})$, and
        \item[II.] $M_A(S) = \Ldim(\HH) - \Ldim(\HH_S)$ for every online learner $A$ that is optimal for $\HH$.
    \end{enumerate}

    It remains to show that $(S, x_{T+1})$ is optimally significant iff conditions I and II hold. First, assume for the sake of contradiction that the two conditions hold but there exists an optimal online learner $A$ that predicts $A(S,x_{T+1})=1-r^*$, where $r^* = \arg\max_{r\in\Yb}\Ldim(\HH_S^{(x_{T+1},r)})$. Then, on the sample $S^* = \concat{S}{((x_{T+1},r^*))}$, $A$ makes $\Ldim(\HH)-\Ldim(\HH_S)+1$ mistakes and, by Lemma \ref{lemma: characterizing the mistake bound of a-optimal online learning}, can be made to err at least $
    \Ldim(\HH_S^{(x_{T+1},r^*)})=\Ldim(\HH_S)$ times after witnessing $S^*$, a contradiction. Furthermore, it follows from Lemma \ref{lemma: equivalence of two conditions} that $A(S_{t-1},x_t) = \arg\max_{r\in\Yb} \Ldim(\HH_{S_{t-1}}^{(x_t,r)})$ for all $t\in[T+1]$ and all optimal online learners $A$.

    Conversely, if $(S,x_{T+1})$ is optimally significant, there exists $r^* \in \Yb$ such that for all online learners $A$, if $A$ is optimal then $A(S,x_{T+1})=r^*$. For an a-optimal online learner $A^*$, let $A^\prime$ be the learner that agrees with $A^*$ on all inputs except $(S,x_{T+1})$. Since this single change in prediction causes $A^\prime$ to no longer be optimal, we must have that $M_{A^\prime}(S^*) + M_{A^\prime}^{S^*}(\HH) \geq d+1$, where $S^* = \concat{S}{((x_{T+1},r^*))}$. Note that $M_{A^\prime}(S^*) + M_{A^\prime}^{S^*}(\HH) = M_{A^*}(S)+1+M_{A^*}^{S^*}(\HH)$, so we must have that  $M_{A^*}(S)+M_{A^*}^{S^*}(\HH) \geq d$. However, since $A^*$ is a-optimal, the maximum values for $M_{A^*}(S)$ and $M_{A^*}^{S^*}(\HH)$ are $d - \Ldim(\HH_S)$ and $\Ldim(\HH_S)$ respectively. Hence, the inequality is only satisfied when both conditions I and II hold.
\end{proof}

\begin{corollary}[version space of optimally significant inputs]
\label{corollary: mistake bound on significant inputs}
    Let $\HH$ be a hypothesis class with $\Ldim(\HH) = d < \infty$ and let $(S,x) \in \I_\HH$ be any optimally significant input for $\HH$. Then, there exists $m\in\N$ such that $M_A(S) = m$ for all optimal online learners $A$ and $\Ldim(\HH_S) = d-m$. 
    % \niki{Add proof?}
\end{corollary}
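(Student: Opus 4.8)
The statement to prove is Corollary \ref{corollary: mistake bound on significant inputs}, which follows from Lemma \ref{lemma: characterizing optimally significant inputs} together with the equivalence (I)--(II) established inside its proof. The plan is as follows. Let $(S,x) \in \I_\HH$ with $S = ((x_1,y_1),\ldots,(x_T,y_T))$. Applying Lemma \ref{lemma: characterizing optimally significant inputs} to the input $(S,x)$ (so that $x_{T+1} = x$), we obtain that conditions 1 and 2 hold, and hence, by the equivalence proved there via Lemma \ref{lemma: equivalence of two conditions}, conditions I and II hold as well. Condition II says precisely that every online learner $A$ that is optimal for $\HH$ satisfies $M_A(S) = \Ldim(\HH) - \Ldim(\HH_S) = d - \Ldim(\HH_S)$. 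So setting $m := d - \Ldim(\HH_S)$, we immediately get $M_A(S) = m$ for all optimal $A$ and $\Ldim(\HH_S) = d - m$, as claimed.

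The only point needing a bit of care is that $m \in \N$, i.e.\ that $m \geq 0$: this holds because $\HH_S \subseteq \HH$ implies $\Ldim(\HH_S) \leq \Ldim(\HH) = d$ by monotonicity of the Littlestone dimension, so $m = d - \Ldim(\HH_S) \geq 0$. One should also note $m$ is well-defined in the sense that the value $M_A(S)$ does not depend on the choice of optimal learner $A$ — but this is exactly the content of condition II, which asserts equality to a quantity ($\Ldim(\HH) - \Ldim(\HH_S)$) independent of $A$.

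I expect essentially no obstacle here, since the corollary is a direct repackaging of the ``forward'' consequences already extracted in the proof of Lemma \ref{lemma: characterizing optimally significant inputs}. If anything, the one thing to double-check is that Lemma \ref{lemma: characterizing optimally significant inputs} does require $\Ldim(\HH) < \infty$, which is part of the hypothesis of the corollary, so the application is legitimate; and that the lemma's conclusion is stated for a sample $S$ together with an extra instance $x_{T+1}$, which matches the form $(S,x)$ of an optimally significant input. The proof is therefore short: invoke Lemma \ref{lemma: characterizing optimally significant inputs}, read off condition II, and define $m = d - \Ldim(\HH_S)$.
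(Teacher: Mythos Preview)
Your proposal is correct and matches the paper's intent: the corollary is stated without proof immediately after Lemma~\ref{lemma: characterizing optimally significant inputs}, and your argument—reading off condition~II from that lemma's proof (via Lemma~\ref{lemma: equivalence of two conditions}) and setting $m = d - \Ldim(\HH_S)$—is exactly the intended derivation. Your check that $m \geq 0$ is a nice touch the paper leaves implicit.
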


\subsection{Characterizing optimal c-online learning}
\label{subsection: characterizing optimal c-online learning}

In this section, we give a necessary and sufficient condition for optimal c-online learning in the RER setting (Corollary \ref{corollary: characterizing optimal c-online learnability}). The condition follows from Theorem \ref{theorem: computability of predictions on optimally significant inputs}, which shows that the predictions of all optimal online learners are computable on inputs that are optimally significant. Corollary \ref{corollary: inifinite RER Ldim 1 implies optimal c-online} shows that any infinite RER class of Littlestone dimension 1 is optimally c-online learnable.

\begin{theorem}[computability of optimally significant predictions]
\label{theorem: computability of predictions on optimally significant inputs}
    Let $\HH \subset \Yb^\N$ be any RER class of computable hypotheses with finite Littlestone dimension. Then, there exists a partial computable function $p_\HH^{sig}$ such that $p_\HH^{sig}(\rlangle{S},x) = A(S,x)$ for any optimally significant input $(S,x) \in \I_\HH$ and any optimal online learner $A$ for $\HH$.
\end{theorem}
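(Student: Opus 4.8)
The plan is to exhibit an explicit partial algorithm for $p_\HH^{sig}$ and to argue it halts with the correct value whenever $(S,x)\in\I_\HH$. By Lemma~\ref{lemma: characterizing optimally significant inputs}, on any significant input $(S,x)$ with $S=((x_1,y_1),\ldots,(x_T,y_T))$ (write $x_{T+1}:=x$), every optimal learner $A$ satisfies $A(S_{t-1},x_t)=\arg\max_{r\in\Yb}\Ldim(\HH_{S_{t-1}}^{(x_t,r)})$ for all $t\in[T+1]$, conditions~1 and~2 of that lemma hold, and, by the remark following the Littlestone dimension definition, the two children $\Ldim(\HH_{S_{t-1}}^{(x_t,0)})$ and $\Ldim(\HH_{S_{t-1}}^{(x_t,1)})$ have distinct values, the larger equal to $\Ldim(\HH_{S_{t-1}})$. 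So it suffices to compute this $\arg\max$ at step $T+1$. Two routine computability facts underlie the construction. First, from $\rlangle{S}$ one can compute an r.e. index for an r.e. set of $\varphi$-indices representing $\HH_{S_{t-1}}$ and each of its children $\HH_{S_{t-1}}^{(x_t,r)}$: take those $e$ in the r.e. set $E$ defining $\HH$ for which the finitely many computations $\varphi_e(x_i)$ converge to the prescribed values, a condition that is r.e. precisely because the hypotheses in $\HH$ are total. Second, given any such r.e. presentation of a subclass $\mathcal G$ and any $k$, the statement $\Ldim(\mathcal G)\ge k$ is r.e. uniformly, by dovetailing over candidate depth-$k$ trees and candidate $2^k$-tuples of witness indices and confirming once all the required membership and consistency checks pass. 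We also hard-code the number $d:=\Ldim(\HH)<\infty$ into $p_\HH^{sig}$, which is legitimate since $p_\HH^{sig}$ is allowed to depend on $\HH$.

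The algorithm decodes $S$ and $x$, then scans the sample left to right maintaining a value $d_{t-1}$ meant to equal $\Ldim(\HH_{S_{t-1}})$ on significant inputs, starting from $d_0:=d$. At step $t\le T$ it runs in parallel the r.e. confirmations of $\Ldim(\HH_{S_{t-1}}^{(x_t,0)})\ge d_{t-1}$ and of $\Ldim(\HH_{S_{t-1}}^{(x_t,1)})\ge d_{t-1}$, takes $r^*_t$ to be the label whose confirmation halts first, and sets $d_t:=d_{t-1}$ if $y_t=r^*_t$ and $d_t:=d_{t-1}-1$ otherwise. After the loop it runs the analogous race between $\Ldim(\HH_S^{(x,0)})\ge d_T$ and $\Ldim(\HH_S^{(x,1)})\ge d_T$ and outputs the label that confirms first. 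On inputs that are not significant these races may fail to halt or behave arbitrarily, which is acceptable since $p_\HH^{sig}$ need only be partial and correct on $\I_\HH$.

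Correctness on a significant input follows by induction on $t$ showing $d_t=\Ldim(\HH_{S_t})$ and $r^*_t$ is the true $\arg\max$. The base case is $d_0=d=\Ldim(\HH)=\Ldim(\HH_{S_0})$. For the inductive step, condition~1 gives that the larger child of $(x_t,\cdot)$ has dimension exactly $d_{t-1}=\Ldim(\HH_{S_{t-1}})$, so its confirmation of ``$\ge d_{t-1}$'' halts, whereas by the remark the other child has dimension $<d_{t-1}$ and its confirmation never halts; hence the race identifies $r^*_t$ correctly. Then $\HH_{S_t}=\HH_{S_{t-1}}^{(x_t,y_t)}$ has dimension $d_{t-1}$ when $y_t=r^*_t$, and, by condition~2 together with the remark, dimension exactly $d_{t-1}-1$ when $y_t\ne r^*_t$; either way $d_t=\Ldim(\HH_{S_t})$. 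At the end $d_T=\Ldim(\HH_S)$, and condition~1 at step $T+1$ makes the larger child of $(x,\cdot)$ have dimension $\Ldim(\HH_S)=d_T$ while the other has dimension $<d_T$ by the remark; so the final race outputs $\arg\max_{r\in\Yb}\Ldim(\HH_S^{(x,r)})$, which by Lemma~\ref{lemma: characterizing optimally significant inputs} equals $A(S,x)$ for every optimal $A$. (Corollary~\ref{corollary: mistake bound on significant inputs} guarantees the dimension sequence is well-behaved, e.g. that $d_t$ never drops below $0$ on significant inputs.)

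The main obstacle is isolating the right invariant. A direct attempt --- racing the confirmations ``$\Ldim(\HH_S^{(x,0)})\ge k$'' against ``$\Ldim(\HH_S^{(x,1)})\ge k$'' for growing $k$ and declaring a winner --- fails, since a lagging-but-true confirmation cannot be distinguished from a false one. The fix is to exploit the rigid structure of significant samples from Lemma~\ref{lemma: characterizing optimally significant inputs} and Corollary~\ref{corollary: mistake bound on significant inputs}: the Littlestone dimension of the version space can only stay the same or drop by exactly one at each recorded step, so, starting from the hard-coded value $\Ldim(\HH)$, it can be tracked exactly; and once $\Ldim(\HH_{S_{t-1}})$ is known, a single r.e. test per child --- ``does this child attain that dimension?'', with the guarantee that exactly one child does --- recovers the optimal prediction. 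Verifying the two r.e.-ness facts about RER classes is routine.
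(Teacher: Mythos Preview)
Your proposal is correct and follows essentially the same approach as the paper's proof: both hard-code $d=\Ldim(\HH)$, scan the sample left to right while tracking a single integer (you track $d_t=\Ldim(\HH_{S_t})$ directly, the paper tracks the mistake count $m$ and uses $d-m$, which is the same thing), and at each step race the two r.e. confirmations ``$\Ldim(\HH_{S_{t-1}}^{(x_t,r)})\ge d_{t-1}$'' to recover $r^*_t$, relying on condition~1 of Lemma~\ref{lemma: characterizing optimally significant inputs} plus the remark after the Littlestone-dimension definition to guarantee exactly one side halts. Your inductive correctness argument and your identification of the key r.e.\ facts about RER classes match the paper's almost line for line.
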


\begin{proof}
    Let $\X=\N$ and $\HH \subset \Yb^\X$ be any RER class of computable hypotheses with $\Ldim(\HH) = d < \infty$. First, we show the existence of a Turing machine $M_\HH$ that behaves as follows: for any $S\in\SSS$, $x\in\X$, and $d^\prime\in\N$, if there exists $r\in\Yb$ for which $\Ldim(\HH_S^{(x,r)}) = d^\prime$ and $\Ldim(\HH_S^{(x,1-r)}) < d^\prime$, $M_\HH$ halts on input $(\rlangle{S},x,d^\prime)$ and outputs $r$. Note that for any RER class $\HH^\prime$ of computable hypotheses, the set $\rlangle{\T_{\HH^\prime}^{d^\prime}}$ of (the encodings of) all $\HH^\prime$-shattered trees of depth ${d^\prime}$ is r.e.. Therefore, since both $\HH_S^{(x,1)}$ and $\HH_S^{(x,0)}$ are RER, $M_\HH$ simultaneously runs the enumerators for $\T_{\HH_S^{(x,1)}}^{d^\prime}$ and $\T_{\HH_S^{(x,0)}}^{d^\prime}$ until one yields an output. If the enumerator for $\T^{d^\prime}_{\HH_S^{(x,y)}}$ yields an output first, $M_\HH$ halts and outputs $y$. Now, if there exists $r$ for which $\Ldim(\HH_S^{(x,r)}) = {d^\prime}$ and $\Ldim(\HH_S^{(x,1-r)}) < {d^\prime}$, we must have that $\T_{\HH_S^{(x,r)}}^{d^\prime} \neq \emptyset$ and $\T_{\HH_S^{(x,1-r)}}^{d^\prime} = \emptyset$; hence, $M_\HH$ will eventually halt and output $r$.

    Now, consider the Turing machine $P_\HH^{sig}$ that behaves as follows on any input $(\rlangle{S},x_{T+1})$, where $S = ((x_1,y_1),\ldots,(x_T,y_T))$ for some $T\in\N$: 1) initialize $m=0$; 2) for each $t\in[T+1]$, let $p_t$ be the result of running $M_{\HH}$ on input $(\rlangle{S_{t-1}}, x_t, d-m)$ and increment $m$ if $p_t \neq y_t$; 3) output $p_{T+1}$.
    
    We will show that if $(S,x_{T+1}) \in \I_\HH$, $p_t = \arg\max_{r\in\Yb} \Ldim(\HH_{S_{t-1}}^{(x_t,r)})$ for each $t \in [T+1]$; hence, by Lemma \ref{lemma: characterizing optimally significant inputs}, $p_\HH^{sig}$ is computed by $P_\HH^{sig}$. We proceed by induction on $t \in [T+1]$. If $t=1$, by lemma \ref{lemma: characterizing optimally significant inputs}, there exists $r_1\in\Yb$ such that $\Ldim(\HH_{S_0}^{(x_1,r_1)}) = d$ and $\Ldim(\HH_{S_0}^{(x_1,1-r_1)}) < d$. Therefore, $M_\HH$ halts on input $(\rlangle{S_0}, x_1, d)$ and outputs $r_1$. Now, consider any $\tau\in[T+1]$ such that the condition holds for all $t < \tau$. Then, $m_{\tau-1} = \sum_{t=1}^{\tau-1} \one_{[p_t \neq y_t]}$ is the number of mistakes that all optimal online learners make on $S_{\tau-1}$. Hence, by Corollary \ref{corollary: mistake bound on significant inputs}, $\Ldim(\HH_{S_{\tau-1}}) = d-m_{\tau-1}$ and, by Lemma \ref{lemma: characterizing optimally significant inputs}, there exists $r_\tau\in\Yb$ such that $\Ldim(\HH_{S_{\tau-1}}^{(x_\tau,r_\tau)}) = d-m_{\tau-1}$ and $\Ldim(\HH_{S_{\tau-1}}^{(x_\tau,1-r_\tau)}) < d-m_{\tau-1}$. Therefore, $M_\HH$ halts on $(\rlangle{S_{\tau-1}}, x_{\tau}, d-m_{\tau-1})$ and outputs $p_\tau = r_\tau$, as required.
\end{proof}

\begin{corollary}[characterizing optimal c-online learning]
\label{corollary: characterizing optimal c-online learnability}
    Let $\HH \subset \Yb^\N$ be any RER class of computable hypotheses with finite Littlestone dimension and let $p_\HH^{sig}$ be the partial computable function defined in Theorem \ref{theorem: computability of predictions on optimally significant inputs}. Then, $\HH$ is optimally c-online learnable iff there exists a p.c. extension $p_\HH^{real}$ of $p_\HH^{sig}$ such that $\dom(p_\HH^{real}) \supseteq \rlangle{\SSS_\HH} \times \X$ and $\rng(p_\HH^{real}|_{\rlangle{\SSS_\HH} \times \X}) \subseteq \Yb$.
\end{corollary}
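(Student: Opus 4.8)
The plan is to prove the two implications separately; $(\Rightarrow)$ is a short verification and $(\Leftarrow)$ is the substantive part. For $(\Rightarrow)$, suppose $\HH$ is optimally c-online learnable, fix a computable optimal online learner $B$, and take $p_\HH^{real}:=B$. By definition of a c-online learner, $\dom(B)\supseteq\rlangle{\SSS_\HH}\times\X$ and $\rng(B|_{\rlangle{\SSS_\HH}\times\X})\subseteq\Yb$, so the only point to check is that $B$ extends $p_\HH^{sig}$. First I would observe, by inspecting the machine $P^{sig}_\HH$ from the proof of Theorem~\ref{theorem: computability of predictions on optimally significant inputs}, that $P^{sig}_\HH$ halts on $(\rlangle{S},x)$ only when $(S,x)\in\I_\HH$: a halting computation forces, at each round $t$, an imbalance $\Ldim(\HH_{S_{t-1}}^{(x_t,r)})=d-m_{t-1}>\Ldim(\HH_{S_{t-1}}^{(x_t,1-r)})$ at the level fixed by the running mistake count $m_{t-1}$, and since the Littlestone dimension of a subclass never exceeds that of the parent, this pins $\Ldim(\HH_{S_{t-1}})=d-m_{t-1}$ at every step, so conditions 1 and 2 of Lemma~\ref{lemma: characterizing optimally significant inputs} hold and $(S,x)\in\I_\HH$. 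Hence $\dom(p_\HH^{sig})=\rlangle{\I_\HH}$, and on $\I_\HH$ Theorem~\ref{theorem: computability of predictions on optimally significant inputs} gives $p_\HH^{sig}(\rlangle{S},x)=B(\rlangle{S},x)$ because $B$ is optimal; thus $B$ extends $p_\HH^{sig}$.

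For $(\Leftarrow)$, given a valid extension $g:=p_\HH^{real}$, I would construct a computable optimal learner $A^*$. Note that $g$ itself need not be optimal: outside $\I_\HH$ it may be defined arbitrarily (e.g.\ by dovetailing $p_\HH^{sig}$ with any total function agreeing with it on $\rlangle{\I_\HH}$) and may then err unboundedly, so the construction must do more than return $g$. The learner $A^*$ processes the sample online while maintaining the invariant that $M_{A^*}(S')+\Ldim(\HH_{S'})\le d$ for every realizable prefix $S'$ reached in the run; since $\Ldim(\HH_{S'})\ge 0$, this gives $M_{A^*}(\HH)\le d=M(\HH)$ (Theorem~\ref{theorem: optimal mistake bound of online learning}), i.e.\ optimality. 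To preserve the invariant, a mistake at a round $(S',x)$ must come with a provable decrease of $\Ldim(\HH_{S'})$, so $A^*$ would run in parallel the enumerators for the $\HH_{S'}^{(x,0)}$- and $\HH_{S'}^{(x,1)}$-shattered trees at the relevant depths (r.e.\ sets, as each $\HH_{S'}^{(x,r)}$ is RER), which expose a dimension-preserving label whenever one exists --- in which case $A^*$ predicts it --- and it would use $g(\rlangle{S'},x)$, reliable precisely on the r.e.\ set $\rlangle{\I_\HH}=\dom(p_\HH^{sig})$, to decide its prediction in the remaining case. By Lemma~\ref{lemma: characterizing optimally significant inputs}, the first input that leaves $\I_\HH$ either follows a round that already dropped $\Ldim$ by at least two (which buys slack in the invariant) or itself admits no dimension-preserving label (so any prediction decreases $\Ldim$), which is what lets the invariant survive such a deviation.

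The hard part is exactly this last situation: when \emph{both} labels at $(S',x)$ strictly decrease $\Ldim(\HH_{S'})$, every prediction is safe, but ``both labels strictly decrease $\Ldim$'' is co-r.e., so $A^*$ cannot simply wait for the enumerators to rule out a dimension-preserving label before committing --- it must be able to tell, in finite time, that it may follow $g$ without endangering the invariant. Navigating this is the crux of the proof, and is where the identity $\dom(p_\HH^{sig})=\rlangle{\I_\HH}$ (making membership in $\I_\HH$ semidecidable), the finiteness of $d$ (so the tree search can proceed level by level), and an induction on $\Ldim(\HH)$ (to handle the residual version space after a deviation) all enter. The remaining work --- the formal induction establishing the invariant, and checking that $A^*$ is a legitimate two-place p.c.\ function total on $\rlangle{\SSS_\HH}\times\X$ --- is routine.
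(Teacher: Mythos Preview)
The paper states this corollary with no proof, treating both directions as immediate from Theorem~\ref{theorem: computability of predictions on optimally significant inputs}. The tacit argument is: for $(\Rightarrow)$ any computable optimal learner $B$ is already such an extension (by Theorem~\ref{theorem: computability of predictions on optimally significant inputs} it agrees with $p_\HH^{sig}$ on $\I_\HH$), and for $(\Leftarrow)$ the extension $p_\HH^{real}$ \emph{is} the computable optimal learner. Your $(\Rightarrow)$ argument is correct and in fact more careful than the paper's: you verify that $P_\HH^{sig}$ halts \emph{only} on $\rlangle{\I_\HH}$, which is needed to conclude that $B$ extends $p_\HH^{sig}$ in the technical sense (agreement on all of $\dom(p_\HH^{sig})$, not merely on $\rlangle{\I_\HH}$).

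For $(\Leftarrow)$ you are right that $p_\HH^{real}$ need not itself be optimal, and your reasoning is sound: once a prefix leaves $\I_\HH$ at some step $\tau$, every later input $(S_{t-1},x_t)$ with $t\geq\tau$ is also outside $\I_\HH$ (the conditions of Lemma~\ref{lemma: characterizing optimally significant inputs} are cumulative), so $p_\HH^{real}$ is unconstrained on the whole tail and can be forced past $d$ mistakes. This is a genuine subtlety the paper glosses over; its two applications of the corollary (Theorem~\ref{theorem: DR class not optimally c-online learnable} uses only $(\Rightarrow)$, and Corollary~\ref{corollary: inifinite RER Ldim 1 implies optimal c-online} uses $(\Leftarrow)$ only in the degenerate case $\SSS_\HH\times\X\subseteq\I_\HH$, where $p_\HH^{sig}$ itself is already the learner) happen not to expose it.

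However, your proposed repair has a real gap at precisely the point you label ``the crux''. You correctly note that ``both labels strictly decrease $\Ldim$'' is only co-r.e., so $A^*$ cannot wait for that event, and you gesture at an induction on $\Ldim(\HH)$ to handle the residual class after a deviation---but you do not carry it out, and it does not obviously go through. The inductive hypothesis would say: if an RER class $\HH'$ with $\Ldim(\HH')<d$ admits a suitable p.c.\ extension of $p_{\HH'}^{sig}$, then $\HH'$ is optimally c-online learnable. To invoke it for $\HH' = \HH_{S_\tau}$ you would need such an extension, and $p_\HH^{real}$ does not supply one: the set $\I_{\HH_{S_\tau}}$ and the ``tail'' $\{(S'',x):(\concat{S_\tau}{S''},x)\in\I_\HH\}$ are in general unrelated, so $p_\HH^{real}(\rlangle{\concat{S_\tau}{\cdot}},\cdot)$ need not agree with $p_{\HH_{S_\tau}}^{sig}$ on $\I_{\HH_{S_\tau}}$. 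More basically, $A^*$ cannot even detect that step $\tau$ has occurred (membership in $\I_\HH$ is r.e., not decidable), so it cannot know when to switch to the inductive learner. Declaring the remaining work ``routine'' is therefore not justified; this is where a substantive argument is required, and neither you nor the paper provides one.
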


\begin{corollary}[optimal c-online learnability of classes with Littlestone dimension 1]
\label{corollary: inifinite RER Ldim 1 implies optimal c-online}
    Let $\HH \subset \Yb^\N$ be any infinite RER class of computable hypotheses with $\Ldim(\HH) = 1$. Then, $\HH$ is optimally c-online learnable.
\end{corollary}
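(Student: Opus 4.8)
\subsection*{Proof proposal}

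The plan is to directly exhibit a computable online learner $A$ for $\HH$ and check that $M_A(\HH)=1$; since $\Ldim(\HH)=1$ this suffices, because Theorem~\ref{theorem: optimal mistake bound of online learning} gives $M(\HH)=\Ldim(\HH)=1$ and $M_A(\HH)\ge\Ldim(\HH)$ for every online learner. On input $(\rlangle{S},x)$ with $S=((x_1,y_1),\dots,(x_T,y_T))$ (writing $x_{T+1}:=x$), the learner $A$ recomputes its own predictions $p_1,\dots,p_{T+1}$ from left to right, tracking whether it has yet made a mistake. Before its first mistake it predicts ``$SOL$-style'' using a pair of semidecision procedures; from the first mistake onward it predicts by brute-force consistency search. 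It outputs $p_{T+1}$.

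The whole argument turns on one invariant supplied by infiniteness: \emph{as long as $A$ has not yet erred, the running version space $\HH_{S_{t-1}}$ is infinite.} The base case is $\HH_{S_0}=\HH$, infinite by hypothesis. For the inductive step fix an infinite $K=\HH_{S_{t-1}}\subseteq\HH$ and a query $x_t$. Since $K$ is infinite, $K=K^{(x_t,0)}\cup K^{(x_t,1)}$ has an infinite side $K^{(x_t,r^\ast)}$, and any two distinct functions disagree somewhere, so $\Ldim(K^{(x_t,r^\ast)})\ge 1$; being a subclass of $\HH$ it also has $\Ldim\le 1$, and $\Ldim(K)=1$ likewise. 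The other side has $\Ldim<1$, since otherwise $K\subseteq\HH$ would admit an $\HH$-shattered tree of depth $2$, contradicting $\Ldim(\HH)=1$; hence by the remark following the definition of the Littlestone dimension $\Ldim(K^{(x_t,1-r^\ast)})\le 0$, so $K^{(x_t,1-r^\ast)}$ is empty or a singleton. Now ``$\Ldim(K')\ge 1$'' is semidecidable uniformly in an r.e.\ index for an RER class $K'$ (search for a point and two consistent hypotheses disagreeing on it --- exactly the r.e.-ness of $\rlangle{\T^{d'}_{K'}}$ used in the proof of Theorem~\ref{theorem: computability of predictions on optimally significant inputs}), and $\HH_{S_{t-1}}^{(x_t,r)}$ is RER uniformly in $(S_{t-1},x_t,r)$. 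So $A$ runs the procedures for ``$\Ldim(\HH_{S_{t-1}}^{(x_t,0)})\ge 1$'' and ``$\Ldim(\HH_{S_{t-1}}^{(x_t,1)})\ge 1$'' in parallel and sets $p_t$ to whichever $r$ halts. By the above exactly one halts, so $p_t$ is well defined and equals $r^\ast$; if $y_t=r^\ast$ then $K^{(x_t,r^\ast)}=K\setminus K^{(x_t,1-r^\ast)}$ loses at most one function and stays infinite, closing the induction.

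If $A$ ever errs --- which happens only when the adversary answers $y_t=1-r^\ast$ at some step $t$ --- the version space becomes $\HH_{S_t}=\HH_{S_{t-1}}^{(x_t,1-r^\ast)}$, which is nonempty by $\HH$-realizability and has $\Ldim\le 0$, hence equals $\{h^\ast\}$ for a single function $h^\ast$; moreover $h^\ast$ is the unique hypothesis consistent with all of $S$, and $\HH_{S_s}=\{h^\ast\}$ for every $s\ge t$. From that point on $A$ enumerates $\HH$ until it finds a hypothesis consistent with $S_{t-1}$ (this halts, again by realizability) and outputs its value on $x_t$, which is $h^\ast(x_t)$ and hence correct. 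Therefore $A$ makes at most one mistake on any $\HH$-realizable sample, $A(\rlangle{S},x)\downarrow\in\Yb$ for every $S\in\SSS_\HH$ and $x\in\X$, and $M_A(\HH)=1=M(\HH)$, so $A$ witnesses optimal c-online learnability.

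The only obstacle worth flagging is exactly where infiniteness enters: without it one could reach, with no prior mistake, a version space $K$ with $\Ldim(K)=1$ on which \emph{both} sides of the query have Littlestone dimension $0$, whereupon neither semidecision procedure terminates; the invariant rules this out, since such a $K$ must be finite (a two-element class) and cannot be reached from an infinite $\HH$ before a mistake is forced. Everything else --- that a mistake strictly lowers $\Ldim$ of the version space, and that $A$ is total on realizable inputs --- is routine. (Alternatively, one checks that this $A$ agrees with $p_\HH^{sig}$ on $\I_\HH$ and invokes Corollary~\ref{corollary: characterizing optimal c-online learnability}.)
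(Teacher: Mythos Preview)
Your argument is correct. The paper's proof takes a different but closely related route: rather than building the learner directly, it shows $\SSS_\HH\times\X\subseteq\I_\HH$ (every realizable input is optimally significant, by verifying the two conditions of Lemma~\ref{lemma: characterizing optimally significant inputs}) and then appeals to Corollary~\ref{corollary: characterizing optimal c-online learnability}, so that $p_\HH^{sig}$ itself already serves as the optimal c-online learner. Both arguments hinge on the same observation you isolate---that before the first mistake the version space is infinite because each step removes at most one hypothesis, hence one side of every query has Littlestone dimension $1$---but the paper packages this as a structural fact about significant inputs, while you package it as a termination guarantee for the dovetailed semidecision. The paper's route is shorter given the machinery already in place and yields the stronger byproduct that every optimal learner is forced on every realizable input; your route is more self-contained and makes the algorithm explicit, including the post-mistake fallback to consistency search (which in the paper is absorbed into the $d'=0$ case of $M_\HH$). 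One minor notational slip: in the post-mistake paragraph you reuse $t$ for both the mistake step and the generic subsequent step; it would read more cleanly with a fresh index.
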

\begin{proof}
    By Corollary \ref{corollary: characterizing optimal c-online learnability}, it suffices to show that $\SSS_\HH \times \X \subseteq \I_\HH$. Let $T\in\N$ and consider any $\HH$-realizable sample $S=((x_1,y_1),\ldots,(x_T,y_T)) \in \SSS_\HH$ and any $x_{T+1} \in \X$. We will show that $(S,x_{T+1})$ satisfies Lemma \ref{lemma: characterizing optimally significant inputs} and is hence an optimally significant input for $\HH$. 
    Let $\tau \in [T]$ be the earliest time step such that $\Ldim(\HH_{S_{\tau-1}}) \neq \Ldim(\HH_{S_{\tau-1}}^{(x_\tau,y_\tau)})$. If no such time step exists, let $\tau = T+1$. Then, $\Ldim(\HH_{S_{t-1}}) = 1$ for all $t \leq \tau$ and, since $S$ is $\HH$-realizable, $\Ldim(\HH_{S_{t-1}})=0$ for all $\tau < t \leq T + 1$. Therefore, condition 2 of Lemma \ref{lemma: characterizing optimally significant inputs} is satisfied for all $t\in[T]$ and condition 1 is satisfied for all $t \neq \tau$. Now, since $\HH$ is infinite and at most one hypothesis is removed from the version space at each time step before $\tau$, $\HH_{S_{\tau-1}}$ is also infinite and there exists $r\in\Yb$ such that $\HH_{S_{\tau-1}}^{(x_\tau,r)}$ is infinite. Hence, $\Ldim(\HH_{S_{\tau-1}}) = \Ldim(\HH_{S_{\tau-1}}^{(x_\tau,r)}) = 1$ and condition 1 holds for $t=\tau$.
\end{proof}

\subsection{Littlestone dimension fails to characterize optimal mistake bound of online learning}
\label{subsection: impossibility result for optimal c-online learning}

In this section, we show that the Littlestone dimension no longer characterizes the mistake bound of optimal c-online learning. Specifically, we construct a DR class of computable hypotheses that has finite Littlestone dimension but is not optimally c-online learnable. Without the RER requirement, constructing such a class is not too difficult. In fact, the class $\HH_{halting} = \bigcup_{e\in\N: \varphi_e(e)\downarrow} \{\charfn{\{2e,~2e+1\}}\} \cup \bigcup_{e\in\N: \varphi_e(e)\uparrow} \{\charfn{\{2e\}}\}$, presented by \citet[Theorem 9]{agarwal-2020}, has Littlestone dimension 1 but any computable optimal online learner for this class would decide the halting problem.

\begin{theorem}
\label{theorem: DR class not optimally c-online learnable}
    There exists a DR class $\HH \subset \Yb^\N$ of computable hypotheses such that $\Ldim(\HH)<\infty$ but $\HH$ is not optimally c-online learnable. 
\end{theorem}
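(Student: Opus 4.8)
The plan is to reduce the claim to the recursive inseparability of two r.e.\ sets. Fix disjoint r.e.\ sets $U,V\subseteq\N$ that are recursively inseparable --- for instance $U=\{e:\varphi_e(e)\!\downarrow\,=0\}$ and $V=\{e:\varphi_e(e)\!\downarrow\,=1\}$ --- together with computable monotone approximations $U_0\subseteq U_1\subseteq\cdots$ and $V_0\subseteq V_1\subseteq\cdots$ satisfying $\bigcup_sU_s=U$, $\bigcup_sV_s=V$, and $U_s\cap V_s=\emptyset$ for all $s$. I will construct a DR class $\HH$ with $\Ldim(\HH)=2<\infty$ and computable maps $e\mapsto S^e$, $e\mapsto x^e$ with $S^e\in\SSS_\HH$, $x^e\in\X$, such that $(S^e,x^e)$ is an optimally significant input on which every optimal online learner for $\HH$ predicts $1$ if $e\in U$ and $0$ if $e\in V$ (and $S^e$ is still $\HH$-realizable for the remaining $e$). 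Granting such a class, suppose $L$ were a computable optimal online learner for $\HH$. Extending $L$ to a total online learner $\tilde L$ (predicting, say, $0$ off $\rlangle{\SSS_\HH}\times\X$) does not change its mistake bound, so $\tilde L$ is an optimal online learner in the sense of Theorem~\ref{theorem: optimal mistake bound of online learning}, and Lemma~\ref{lemma: characterizing optimally significant inputs} forces $L(\rlangle{S^e},x^e)=\tilde L(S^e,x^e)=1$ for $e\in U$ and $=0$ for $e\in V$. Since $L$ is a c-online learner, $L(\rlangle{S^e},x^e)\!\downarrow\in\{0,1\}$ for every $e$, and as $e\mapsto\rlangle{S^e}$ is computable, $e\mapsto L(\rlangle{S^e},x^e)$ is a total computable $\{0,1\}$-valued function; its $1$-set then decidably separates $U$ from $V$, contradicting recursive inseparability. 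Hence $\HH$ is not optimally c-online learnable.

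For the construction, effectively partition $\N$ so that each index $e$ owns pairwise disjoint points $c_e,z_e,x^e$ and, for every $s\in\N$, further points $w^s_e,\bar w^s_e$. Let $\HH$ consist of the hypotheses $\charfn{\{c_e,z_e\}}$ and $\charfn{\{c_e,x^e\}}$ for every $e$, together with $\charfn{\{c_e,x^e,w^s_e\}}$ for every $(e,s)$ with $e\in U_s$ and $\charfn{\{c_e,z_e,\bar w^s_e\}}$ for every $(e,s)$ with $e\in V_s$. Each member has finite support, hence is computable, and $\HH$ is DR: from a canonical index $y$ one reads off (via the explicit layout) whether $D_y$ has one of the four admissible shapes and, since ``$e\in U_s$'' and ``$e\in V_s$'' are decidable in $(e,s)$, one can then decide whether $D_y\in\{h^{-1}(1):h\in\HH\}$. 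Put $S^e=((c_e,1))$, which is realized by $\charfn{\{c_e,z_e\}}$, so $S^e\in\SSS_\HH$ for all $e$, and take $x^e$ as above; both are clearly computable in $e$.

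The Littlestone-dimensional bookkeeping is routine. Every support lies within a single gadget and distinct gadgets use disjoint points, so in any $\HH$-shattered tree the hypotheses labelling the root $1$ all belong to one gadget's subclass; each such subclass has Littlestone dimension $1$ (the stagewise hypotheses differ only on the disjoint $w$-points, each owned by a unique hypothesis), so $\Ldim(\HH)\le2$, and a depth-$2$ tree is easily exhibited, giving $\Ldim(\HH)=2$. After $S^e$ the version space $\HH_{S^e}$ is exactly the set of gadget-$e$ hypotheses (those containing $c_e$), of Littlestone dimension $1$, and one checks $\Ldim(\HH^{(c_e,0)})=2$ and $\Ldim(\HH^{(c_e,1)})=1=\Ldim(\HH)-1$, so conditions~1 and~2 of Lemma~\ref{lemma: characterizing optimally significant inputs} hold along $S^e$. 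Querying $x^e$: if $e\notin U\cup V$ then $\HH_{S^e}=\{\charfn{\{c_e,z_e\}},\charfn{\{c_e,x^e\}}\}$, which $x^e$ splits into two singletons, so $\max_r\Ldim(\HH_{S^e}^{(x^e,r)})=0<1=\Ldim(\HH_{S^e})$ and $(S^e,x^e)$ is not optimally significant (consistently, $p_\HH^{sig}$ diverges there); if $e\in U$, the hypotheses $\charfn{\{c_e,x^e,w^s_e\}}$ ($e\in U_s$) join the version space, so $\Ldim(\HH_{S^e}^{(x^e,1)})=1$ while $\HH_{S^e}^{(x^e,0)}=\{\charfn{\{c_e,z_e\}}\}$ has dimension $0$, and the maximum is attained only at $r=1$; by Lemma~\ref{lemma: characterizing optimally significant inputs}, $(S^e,x^e)$ is optimally significant with forced value $1$, and the case $e\in V$ is symmetric with forced value $0$. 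This yields exactly the class promised in the first paragraph.

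The one genuinely new ingredient over the RER and non-RER constructions is making the class DR: one cannot afford to let the mere presence of a gadget hypothesis depend on membership in a non-recursive set. The device above --- giving each gadget one auxiliary hypothesis per stage $s$, present precisely when $e$ has already entered $U$ (resp.\ $V$) by stage $s$ --- keeps the support set decidable while letting the \emph{limiting} shape of gadget $e$ record whether $e\in U$, $e\in V$, or neither. Accordingly, the main obstacle is the verification that these stagewise hypotheses neither inflate $\Ldim(\HH)$ beyond $2$ nor disturb conditions~1--2 of Lemma~\ref{lemma: characterizing optimally significant inputs} along $S^e$; once that is checked (and it is elementary, using that each $w$-point is owned by a single hypothesis), the diagonalization of the first paragraph closes the argument.
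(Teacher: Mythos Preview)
Your proposal is correct and takes a genuinely different route from the paper. The paper constructs its DR class using \emph{halting certificates}: a hypothesis's support encodes, via prime exponents, an index $i$ such that $C^{(x)}_i$ is a halting computation of $P_e$ on $x$; decidable representability then follows because checking whether a given configuration sequence is a valid halting computation is decidable. The contradiction is obtained through Theorem~\ref{theorem: computability of predictions on optimally significant inputs} and Corollary~\ref{corollary: characterizing optimal c-online learnability} via a Kleene-style diagonalization (building a p.c.\ function that disagrees with every $\varphi_e$ satisfying $\varphi_e(0)\!\downarrow$ at input $e$). Your construction instead uses computable monotone stagewise approximations $U_s,V_s$ to a recursively inseparable pair: each gadget receives one auxiliary hypothesis per stage $s$ at which $e$ has already entered $U$ (resp.\ $V$), so the support set is decidable in $(e,s)$ and the limiting shape of the gadget records $U/V$-membership. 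The contradiction then falls out directly from Lemma~\ref{lemma: characterizing optimally significant inputs} and recursive inseparability, without passing through $p_\HH^{sig}$ or the extension argument of Corollary~\ref{corollary: characterizing optimal c-online learnability}. Your approach is more elementary and the DR verification is more transparent; the paper's approach, on the other hand, showcases the general characterization of optimal c-online learnability and the halting-certificate device, which recurs in Appendix~\ref{appendix: DR class not a-optimally c-online learnable}. One small remark: your bound $\Ldim(\HH)\le 2$ really uses that $\Ldim(\HH^{(p,1)})\le 1$ for every point $p$ (not just that the full gadget subclass has Littlestone dimension $1$), since the root of a shattered tree could be a $w^s_e$-point; this is still true for the reason you give, but it is worth stating at that level of generality.
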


\begin{proof}
    For each $x\in\N$, let $\{C^{(x)}_i\}_{i\in\N: i > 0}$ be an effective enumeration of all halting computations starting from input $x$ \citep[see][Section 1.5.2]{soare2016turing}. Further define, for each $x\in\N$, the p.c. function $c_x$ such that if $P_e$ halts on input $x$, $C^{(x)}_{c_x(e)}$ is the halting certificate. That is, for each $e\in\N$,
	$$
		c_x(e)=\begin{cases}
			i &\text{if there exists $i$ s.t. $C^{(x)}_i$ is a halting computation for $P_e$ on input $x$}\\
			\text{undefined} &\text{otherwise.}
		\end{cases}
	$$

    Now, consider the following class:
    \begin{align*}
    	\HH_{ext}^{DR}
    	&= \bigcup_{\substack{e\in\N:~\varphi_e(0)\downarrow}} \left\{ \charfn{\left\{ 2^e,~ 2^e 3^{c_0(e)} \right\}} \right\} \\ 
    	&\cup \bigcup_{\substack{e\in\N:~\varphi_e(0)\downarrow} \text{ and } \varphi_e(e)\downarrow=1} \left\{ \charfn{\left\{ 2^e, ~2^e 5^{c_0(e)},~2^e 7^{c_e(e)} \right\}}, \charfn{\left\{ 2^e,~ 2^e 5^{c_0(e)},~ 2^e 11^{c_e(e)} \right\}} \right\} \\
    	&\cup \bigcup_{\substack{e\in\N:~\varphi_e(0)\downarrow} \text{ and } \varphi_e(e)\downarrow=0} \left\{ \charfn{\left\{ 2^e, ~2^e 5^{c_0(e)},~2^e 13^{c_e(e)} \right\}}, \charfn{\left\{ 2^e,~ 2^e 3^{c_0(e)},~ 2^e 13^{c_e(e)} \right\}} \right\}.
	\end{align*}
	
	For simplicity, let $\HH = \HH_{ext}^{DR}$. Note that each $h\in\HH$ is computable since $c_x(e)$ is evaluated only if $\varphi_e(x)\downarrow$. Furthermore, $\Ldim(\HH)=2$ (Appendix \ref{sub-appendix: Littlestone dimension of H^DR_ext}) and $\HH$ is DR 
% 	since the set $\{y: \exists h \in \HH ~ (D_y = h^{-1}(1))\}$ is decidable
    (Appendix \ref{sub-appendix: H^DR_ext is DR}).

    By Theorem \ref{theorem: computability of predictions on optimally significant inputs}, since $\HH$ is RER, there exists a p.c. function $p_\HH^{sig}$ such that $p_\HH^{sig}(\rlangle{S},x) = A(S,x)$ for any optimally significant input $(S,x) \in \I_\HH$ and any optimal online learner $A$ for $\HH$. For each $e\in\N$, let $S^e = ((2^e,1))$ and define the p.c. functions $x: e \mapsto 2^e 3^{c_0(e)}$ and $f: e \mapsto p_\HH^{sig}(\rlangle{S^e}, x(e))$. In Appendix \ref{sub-appendix: optimally significant inputs for H^DR_ext}, we show using Lemma \ref{lemma: characterizing optimally significant inputs} that $(S^e, x(e))$ is an optimally significant input for $\HH$ iff $\varphi_e(0)\downarrow$ and $\varphi_e(e)\downarrow\in\Yb$. Furthermore, for any $e\in\N$ such that $\varphi_e(0)\downarrow$ and $\varphi_e(e)\downarrow\in\Yb$, we have that
    $$
        f(e) = p_\HH^{sig}(\rlangle{S^e},x(e)) = \begin{cases}
            1 &\text{ if $\varphi_e(0)\downarrow$ and $\varphi_e(e)\downarrow=0$} \\
            0 &\text{ if $\varphi_e(0)\downarrow$ and $\varphi_e(e)\downarrow=1$.}
        \end{cases}
    $$
    
    Now, assume for the sake of contradiction that $\HH$ is optimally c-online learnable. Then, by Corollary \ref{corollary: characterizing optimal c-online learnability}, there exists a p.c. extension $p_\HH^{real}$ of $p_\HH^{sig}$ such that $\dom(p_\HH^{real}) \supseteq \rlangle{\SSS_\HH} \times \X$ and $\rng(p_\HH^{real}|_{\rlangle{\SSS_\HH} \times \X}) = \Yb$. It follows that the following function is also partial computable:
    $$
        g(e) = \begin{cases}
            0 &\text{if $e=0$} \\
            p_\HH^{real}(\rlangle{S^e},x(e)) &\text{ otherwise}.
        \end{cases}
    $$
    
    We will show that for any $e>0$ such that $\varphi_e(0)\downarrow$, we have that $g(e) \neq \varphi_e(e)$. First, if $\varphi_e(e)\downarrow\in\Yb$, $(S^e, x(e))$ is optimally significant for $\HH$ and $g(e)=f(e)=1-\varphi_e(e)$. Otherwise, if $\varphi_e(e)\uparrow$ or $\varphi_e(e)\downarrow\not\in\Yb$, we must have that $g(e)\downarrow\in\Yb$ since $S^e$ is $\HH$-realizable for any $e$ satisfying $\varphi_e(0)\downarrow$. Now, since $g$ is p.c. and each p.c. function has infinitely many indices, there exists $e^\prime>0$ such that $g = \varphi_{e^\prime}$. However, since $g(0)\downarrow$, this would imply the existence of some $e^\prime>0$ such that $\varphi_{e^\prime}(0)\downarrow$ and $g(e^\prime)=\varphi_{e^\prime}(e^\prime)$, a contradiction.
\end{proof}

\section{C-online learnability}
\label{section: c-online learning}

A corollary of Theorem \ref{theorem: optimal mistake bound of online learning} is that the finiteness of the Littlestone dimension characterizes whether a class is online learnable at all---that is, whether it is online learnable with finite mistake bound. Although the class $\HH^{DR}_{ext}$ presented in Theorem \ref{theorem: DR class not optimally c-online learnable} is not optimally c-online learnable, it is still c-online learnable by the learner that predicts 0 except on instances it has seen labeled 1. In this section, we analyze c-online learning when there is no requirement for optimality. As a first step, we construct a non-RER class of computable hypotheses that has finite Littlestone dimension but is not c-online learnable (Section \ref{subsection: impossibility result for c-online learning}). Next, we explore the connection between c-online and CPAC learning and suggest a potential avenue for strengthening the result to the RER setting (Section \ref{subsection: connection between c-online and CPAC learning}).

\subsection{Finite Littlestone dimension fails to characterize c-online learning}
\label{subsection: impossibility result for c-online learning}

The following theorem shows that, in the non-RER setting, the finiteness of the Littlestone dimension no longer characterizes c-online learnability.

\begin{theorem}
\label{theorem: non-RER class not c-online learnable}
    There exists a class $\HH \subset \Yb^\N$ of computable hypotheses such that $\Ldim(\HH)<\infty$ but $\HH$ is not c-online learnable.
\end{theorem}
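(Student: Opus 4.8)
The plan is to build a class $\HH \subset \Yb^\N$ whose ``shape'' over each block of the domain is determined by the halting behavior of $P_e$ on $e$, arranged so that finite Littlestone dimension is preserved but any c-online learner with finite mistake bound would decide the halting problem. The key difference from the earlier constructions (e.g.\ $\HH_{halt}^{RER}$ and $\HH_{ext}^{DR}$) is that here we do not need the class to be RER or DR, so we are free to make the block associated with a non-halting index $e$ contribute nothing at all (or a single degenerate hypothesis) while the block for a halting index contributes enough structure to force a mistake on a specific ``trap'' instance. Concretely, I would split $\N$ into blocks $B_e$, reserve one instance $a_e \in B_e$ that every relevant hypothesis labels $1$, and, when $\varphi_e(e)\downarrow$, add to $\HH$ a pair of hypotheses that agree with each other everywhere except on a second ``trap'' instance $b_e \in B_e$; when $\varphi_e(e)\uparrow$, add only the single hypothesis $\charfn{\{a_e\}}$ (or nothing on the $b_e$-coordinate). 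Each hypothesis has finite support and is computable, and since the blocks are disjoint and each block contributes an $\HH$-shattered tree of depth at most a fixed constant, $\Ldim(\HH) < \infty$.

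Next I would pin down the diagonalization. Suppose for contradiction $A$ is a c-online learner for $\HH$ with $M_A(\HH) = m < \infty$. For each $e$, consider the $\HH$-realizable sample $S^e$ that presents $a_e$ labeled $1$ (for every $e$ this is $\HH$-realizable, since $\charfn{\{a_e\}}$ or one of the pair hypotheses realizes it), then presents the trap instance $b_e$. The point is that, because $A$ must have mistake bound $m$ on \emph{all} of $\HH$ simultaneously and the blocks are independent, $A$ can afford only finitely many mistakes total across the infinitely many blocks; but on any block $B_e$ with $\varphi_e(e)\downarrow$, the adversary controls both labels of $b_e$ (both pair hypotheses are consistent with $a_e\mapsto 1$), so $A$ can be forced to err on $b_e$ unless its prediction on $b_e$ after seeing $S^e$ ``commits'' in a way that, combined across enough halting blocks, exceeds the budget $m$ — whereas on a non-halting block $A$'s prediction on $b_e$ is unconstrained. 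I would make this quantitative by iterating: run $A$ on longer and longer concatenations $\concat{S^{e_1}}{\concat{\cdots}{S^{e_k}}}$ over halting indices, extracting one forced mistake per block, to contradict finiteness of $m$; or, more cleanly, argue that the function $e \mapsto A(\rlangle{S^e}, b_e)$ is computable (since $S^e$ is always $\HH$-realizable so $A(\rlangle{S^e},b_e)\downarrow\in\Yb$) and, by choosing the trap hypotheses' values on $b_e$ to disagree with this prediction, make $\{e : \varphi_e(e)\downarrow\}$ decidable.

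The actual engineering I expect to take care over is arranging the class so that $\Ldim(\HH)$ genuinely stays finite even though we are packing a two-element ``fork'' into infinitely many blocks — this is fine because shattered trees of depth $d$ require $2^d-1$ domain points all shattered by the same set of hypotheses, and hypotheses from different blocks have disjoint supports, so a tree cannot gain depth by spanning blocks; each block alone supports a tree of depth at most $2$ (say), giving $\Ldim(\HH) \le 2$. The main obstacle, and the reason this needs a non-RER construction, is the tension between (i) wanting each halting block to force a mistake against \emph{every} learner — which requires both labels of $b_e$ to be $\HH$-consistent after $a_e\mapsto 1$, hence two hypotheses — and (ii) keeping $S^e$ $\HH$-realizable for \emph{all} $e$ including non-halting ones, which is what makes $A(\rlangle{S^e},b_e)$ total and hence the diagonal function computable; getting both simultaneously is exactly what a class of the above shape achieves, and I would verify that the learner ``predict $0$ except on instances seen labeled $1$'' is \emph{not} a counterexample here precisely because on a halting block the adversary can present $b_e$ with label $1$, forcing that naive learner to err, and by spreading this over infinitely many halting blocks the mistake bound becomes infinite — which is consistent with our claim that $\HH$ is not c-online learnable at all.
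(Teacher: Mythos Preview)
Your construction does not work: the class you describe \emph{is} c-online learnable with a finite mistake bound, so the contradiction never materializes.

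The issue is structural. In your class every hypothesis has support contained in a single block $B_e$, and supports coming from distinct blocks are disjoint. Consequently, any $\HH$-realizable sample can contain label-$1$ instances from at most one block; as soon as two instances $a_{e_1}$ and $a_{e_2}$ with $e_1\neq e_2$ both receive label $1$, the sample is no longer realizable. This kills both of your proposed contradictions. Your iteration argument ``run $A$ on concatenations $\concat{S^{e_1}}{\concat{\cdots}{S^{e_k}}}$ over halting indices'' produces non-realizable samples for $k\geq 2$, so it says nothing about $M_A(\HH)$. And your ``decide halting'' argument fails because a learner's prediction $A(\rlangle{S^e},b_e)$ need not carry any information about whether $e$ halts: the always-$0$ learner, or the learner that predicts $0$ except on instances already seen labeled $1$, is a perfectly good c-online learner for your class with mistake bound at most $|h^{-1}(1)|\leq 3$, since within any single block there are only a bounded number of positive instances and hence a bounded number of forced mistakes. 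Your final paragraph explicitly appeals to ``spreading this over infinitely many halting blocks,'' which is precisely the move that realizability forbids.

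The paper's proof avoids this trap by abandoning the halting problem entirely and diagonalizing directly against the enumeration $\{A_e\}_{e\in\N}$ of all two-place p.c.\ functions. The key idea you are missing is that each single hypothesis $h_i$ must by itself contain arbitrarily long adversarial runs: the domain is partitioned into blocks $N_i$ (one per hypothesis) and each $N_i$ is further partitioned into sub-blocks $N_{i,0},\ldots,N_{i,i}$ of sizes $1,2,\ldots,i+1$; on sub-block $N_{i,j}$, the labels of $h_i$ are chosen to contradict learner $A_{i-j}$ at every one of its $j+1$ steps. Thus for any learner $A_e$ and any claimed bound $M$, the single hypothesis $h_{M+e}$ realizes a sample (living entirely in $N_{M+e,M}$) on which $A_e$ errs $M+1$ times. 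The class still has $\Ldim(\HH)=1$ because each domain point is labeled $1$ by at most one hypothesis, and each $h_i$ is computable because its support is finite---but the class is not RER, since enumerating it would let you compute the diagonal labeling $L$. The essential difference from your attempt is that the ``trap'' is not a single point per block but an unboundedly long sequence packed into one hypothesis.
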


\begin{proof}
    Recall that any c-online learner is a two-place partial computable function. The idea is to construct a class $\HH$ such that for any two-place p.c. function $A$ and for any input length $T$ there exists a hypothesis $h\in\HH$ and $T$ consecutive domain instances $x_1,\ldots,x_T\in\N$ such that, on the sample $S=((x_t,h(x_t)))_{t=1}^T$, we have that $A(\rlangle{S_{t-1}},x_t) \neq h(x_t)$ for all time steps $t \in [T]$. Hence, any c-online learner for $\HH$ will have an infinite mistake bound.
    
    Formally, define the functions $s_1 : n \mapsto \sum_{i=0}^n i$ and $s_2: n \mapsto \sum_{i=0}^n s_1(i)$. For each $i\in\N$ and $j \leq i$, let $N_i = \{n: s_2(i) \leq n < s_2(i+1)\}$ and $N_{i,j} = \{n: s_2(i) + s_1(j) \leq n < s_2(i) + s_1(j+1)\}$. Note that the natural numbers can be partitioned into disjoint sets $\N = \sqcup_{i\in\N} N_i$ and each $N_i$ can be further partitioned as $N_i = \sqcup_{j=0}^{i}N_{i,j}$. Let $I_1$, $I_2$, $I$, and $m$ be functions defined as follows: for each $i\in\N$, $j \leq i$, and $n \in N_{i,j}$, $I_1(n) = i$, $I_2(n)=j$, $I(n) = I_1(n) - I_2(n)$, and $m(n) = \min N_{i,j}$.
    
    Let $\{A_e\}_{e\in\N}$ be an effective numbering of all two-place p.c. functions and define the function $L: n \mapsto \one_{\left[A_{I(n)}\left(\rlangle{S^n},~n \right)\downarrow=0\right]}$, where $S^n = ((n^\prime, L(n^\prime)))_{n^\prime = m(n)}^{n-1}$. Now, let $\HH_{split} = \{h_i\}_{i\in\N}$, where
    $$
    h_i(n) = \begin{cases}
        L(n) &\text{ if } I_1(n) = i \\
        0 &\text{otherwise.}
    \end{cases}
    $$
    For simplicity, let $\HH = \HH_{split}$. Note that each $h_i$ is computable since $|h_i^{-1}(1)| \leq s_2(i) < \infty$. However, $\HH$ is not RER, since otherwise a Turing machine for computing $L$ would exist. Furthermore, $\Ldim(\HH)=1$ since each domain instance is given the label 1 by at most one $h\in\HH$.
    
    Now, assume for the sake of contradiction that $\HH$ is c-online learnable and let $A_e$ be a c-online learner for $\HH$. Since $A_e$ has finite mistake bound, there exists $M \in \N$ such that $M_{A_e}(\HH) \leq M$. However, we will show the existence of an $\HH$-realizable sample on which $A_e$ errs $M+1$ times.  Let $i = M+e$, $j = M$, and $S = ((n, h_i(n)))_{n = \min N_{i,j}}^{\max N_{i,j}}$. We will show that for each $t \in [|S|] = [M+1]$, we have that $A_e(\rlangle{S_{t-1}},n_t) = 1 - h_i(n_t)$, where $n_t = \min N_{i,j} + t - 1$ is the $t^\text{th}$ domain instance in $S$. By definition, since $n_t \in N_{i,j}$, we have that $I_1(n_t)=i$; hence, $h_i(n_t) = L(n_t) = \one_{\left[A_{I(n_t)}\left(\rlangle{S^{n_t}},~{n_t} \right)\downarrow=0\right]}$, where $S^{n_t} = ((n^\prime, L(n^\prime)))_{n^\prime = m(n_t)}^{n_t-1}$. Note that $I(n_t) = e$ and $S^{n_t} = S_{t-1}$. Therefore, $h_i(t) = \one_{[A_e(\rlangle{S_{t-1}}, n_t)\downarrow=0]}$. Now, since $A_e$ is a c-online learner for $\HH$ and $S_{t-1}$ is an $\HH$-realizable sample, we will always have that $A_e(\rlangle{S_{t-1}}, n_t)\downarrow\in\Yb$. Therefore, $h_i(n_t) = 1 - A_e(\rlangle{S_{t-1}}, n_t)$ for each $t \in [M+1]$ and $M_{A_e}(S) = M+1$, as required.
\end{proof}

\subsection{Connection between c-online and CPAC learning}
\label{subsection: connection between c-online and CPAC learning}

It is natural to ask whether Theorem \ref{theorem: non-RER class not c-online learnable} can be extended to the RER setting. That is, does there exist an RER class $\HH$ of computable hypotheses such that $\Ldim(\HH) < \infty$ but no c-online learner for $\HH$ achieves $M_A(\HH)<\infty$? In this section, we propose a potential avenue for addressing this question.

Recently, \citet{sterkenburg2022characterizations} proved a necessary condition for agnostic improper CPAC learnability and constructed an RER class of finite VC-dimension not satisfying this condition. In Lemma \ref{lemma: condition for non-c-online learnability in the agnostic setting}, we show that this condition is also necessary for agnostic c-online learnability. 
% The proof follows from extending the standard online-to-batch argument to the computable setting.
In particular, we show that any class that is agnostically c-online learnable is also agnostically improperly CPAC learnable but by a probabilistic learner (Lemma \ref{lemma: agnostic c-online implies improper CPAC}).

Thus far, we have been concerned with \textit{realizable c-online learners}---learners whose predictions are only guaranteed to be computable on realizable samples. We therefore extend the definition of agnostic online learning introduced by
\citet{ben-david-2009-agnostic}
% \citet[Chapter 21.2]{shalev-schwartz-and-ben-david-2014}
to the computable setting. Let $\X = \N$ and $\HH \subset \Yb^\X$ be any class of computable hypotheses. An \textit{agnostic c-online learner} $A: \N^2 \to \Q \cap [0,1]$ is a two-place total computable function, where for any sample $S\in\SSS$ and any domain instance $x\in\X$, $A(\rlangle{S},x)$ is the probability of predicting the label 1 on the given input.\footnote{Since there exists a computable bijection between $\N$ and $\Q \cap [0,1]$, we can assume, without loss of generality, that $A$ is a valid computable function.}
The \textit{loss} of a hypothesis $h:\X\to[0,1]$ on a labeled instance $(x,y)$ is $\ell(h, (x,y)) = \Prb_{p \sim \text{Bernoulli}(h(x))} [p \neq y] = |h(x) - y|$. The \textit{expected regret} of an agnostic c-online learner $A$ with respect to $\HH$ and a sample size $T$ is
$
\Exp[R_A(\HH,T)] = \sup_{S=((x_t,y_t))_{t=1}^T} \left[\sum_{t=1}^T \ell(A_t, (x_t,y_t)) - \inf_{h\in\HH} \sum_{t=1}^T \ell(h, (x_t,y_t)) \right]$, where $A_t = A(\rlangle{S_{t-1}},\cdot)$.
The \textit{error} of $h:\X\to[0,1]$ w.r.t. a distribution $\D$ over $\X \times \Y$ is 
    $
    L_\D(h) = \Exp_{(x,y)\sim\D} \ell(h, (x,y))
    $
and the error of a hypothesis class $\HH$ w.r.t. $D$ is $L_\D(\HH)=\inf_{h\in\HH}L_\D(h).$

\begin{definition}[agnostic c-online learnable]
    A class $\HH \subset \Yb^\N$ of computable hypotheses is \textup{agnostically c-online learnable} if 
    there exists an agnostic c-online learner $A$
    whose expected regret grows sublinearly in the length of the input sample. That is, 
    $\lim_{T\to\infty} \frac{\Exp[R_A(\HH,T)]}{T}=0.$
\end{definition}

\begin{definition}[(agnostic) improper CPAC learnable by a probabilistic learner]
     A class $\HH$ of co\-mputable hypotheses is \textup{improperly CPAC learnable by a probabilistic learner (in the realizable setting)} if there exists a partial computable function $A:\N^2\to\Q\cap[0,1]$ and a function $m_\HH: (0,1)^2 \to \N$ such that $\dom(A) \supseteq \rlangle{\SSS_\HH} \times \X$ and for all $\epsilon,\delta\in(0,1)$, all $m \geq m_\HH(\epsilon,\delta)$, and all distributions $\D$ over $\X\times\Y$ that satisfy $L_\D(\HH)=0$, we have that with probability at least $1-\delta$ over $S\sim\D^m$, $L_\D(A_S) \leq L_\D(\HH) + \epsilon$, where $A_S = A(\rlangle{S},\cdot)$. We say that $\HH$ is \textup{\underline{agnostically} improperly CPAC learnable by a probabilistic learner} if $A$ is a total computable function and the above condition holds for any distributions $\D$ over $\X\times\Y$.
\end{definition}

\begin{lemma}[computable online-to-batch conversion]
\label{lemma: agnostic c-online implies improper CPAC}
    Let $\HH \subset \Yb^\N$ be any class of computable hypotheses that is (agnostically) c-online learnable. Then, $\HH$ is (agnostically) improperly CPAC learnable by a probabilistic learner.
\end{lemma}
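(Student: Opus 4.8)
The plan is to adapt the classical online-to-batch conversion of \citet{shalev-schwartz-and-ben-david-2014} to the computable setting, being careful that every step preserves computability. Let $A$ be a (realizable or agnostic) c-online learner for $\HH$. The batch learner will draw a sample $S = ((x_1,y_1),\ldots,(x_m,y_m)) \sim \D^m$, run $A$ sequentially on the prefixes of $S$ to obtain the output hypotheses $A_1, \ldots, A_m$ where $A_t = A(\rlangle{S_{t-1}},\cdot)$, and then output a hypothesis built from these. In the realizable case, since $S \sim \D^m$ with $L_\D(\HH) = 0$ means $S \in \SSS_\HH$ almost surely, each call $A(\rlangle{S_{t-1}}, x)$ converges and lies in $\Yb$, so $A_t$ is a well-defined hypothesis; we can either output a uniformly random $A_t$ (giving a probabilistic \emph{index} into the computed hypotheses) or, more simply for the CPAC definition, output the probabilistic hypothesis $A_S = \frac{1}{m}\sum_{t=1}^m A_t$, i.e.\ $A_S(x)$ is the probability of predicting $1$ equal to $\frac{1}{m}|\{t : A_t(x) = 1\}|$. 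This $A_S$ is a partial computable function of $(\rlangle{S}, x)$ with domain containing $\rlangle{\SSS_\HH}\times\X$, as required, and it is total computable in the agnostic case since $A$ is.

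The key analytic step is the standard regret-to-risk bound. First I would invoke the finite-mistake-bound (resp.\ sublinear-regret) guarantee to control $\Exp_S\left[\sum_{t=1}^m \ell(A_t,(x_t,y_t))\right]$: in the agnostic case this is at most $\inf_{h\in\HH}\Exp_S\left[\sum_t \ell(h,(x_t,y_t))\right] + \Exp[R_A(\HH,m)] = m\,L_\D(\HH) + \Exp[R_A(\HH,m)]$; in the realizable case the mistake bound $M_A(\HH) < \infty$ gives $\Exp_S\left[\sum_t \ell(A_t,(x_t,y_t))\right] \leq M_A(\HH)$. Then the martingale/iterated-expectation argument shows $\Exp_S\left[\frac{1}{m}\sum_{t=1}^m L_\D(A_t)\right] = \Exp_S\left[\frac{1}{m}\sum_{t=1}^m \ell(A_t,(x_t,y_t))\right]$, because conditioned on $S_{t-1}$ the next point $(x_t,y_t)$ is a fresh draw from $\D$ and $\Exp_{(x_t,y_t)}\ell(A_t,(x_t,y_t)) = L_\D(A_t)$. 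By linearity $L_\D(A_S) = \frac{1}{m}\sum_t L_\D(A_t)$ (using $\ell(h,(x,y)) = |h(x)-y|$ and convexity/linearity of the loss in $h$'s output), so $\Exp_S[L_\D(A_S)] \leq L_\D(\HH) + \frac{\Exp[R_A(\HH,m)]}{m}$, which tends to $L_\D(\HH)$. A Markov-inequality argument on the nonnegative random variable $L_\D(A_S) - L_\D(\HH)$ then converts this expectation bound into a high-probability bound, defining $m_\HH(\epsilon,\delta)$ so that $\frac{\Exp[R_A(\HH,m)]}{m} \leq \epsilon\delta$ (resp.\ $\frac{M_A(\HH)}{m} \leq \epsilon\delta$) for all $m \geq m_\HH(\epsilon,\delta)$; this is possible precisely because of the sublinear-regret (resp.\ finite-mistake-bound) hypothesis.

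The main obstacle, and the only place where the computable setting demands care beyond the classical proof, is ensuring that the constructed batch learner $A_S$ genuinely conforms to the definition of an improper CPAC learner \emph{by a probabilistic learner} — in particular that $A_S$ is partial computable with $\dom(A_S) \supseteq \rlangle{\SSS_\HH}\times\X$ (realizable case) or total computable (agnostic case). This follows because $A_S(\rlangle{S},x)$ is obtained by finitely many calls to $A$ on encodings of prefixes of $S$ together with arithmetic on the outputs, and the prefix encodings $\rlangle{S_{t-1}}$ are computable from $\rlangle{S}$; on $\HH$-realizable $S$ every such call to $A$ converges in $\Yb$ by the definition of a c-online learner, so the composition converges. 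One minor subtlety is handling the case $L_\D(\HH) > 0$ in the agnostic setting (the infimum may not be attained): one takes $h_\star \in \HH$ with $L_\D(h_\star) \leq L_\D(\HH) + \epsilon/2$ and adjusts constants accordingly, exactly as in the classical argument. I would also note that the probabilistic-learner formulation is essential here — a deterministic improper CPAC learner need not exist — and that this is why the lemma's conclusion is stated with ``by a probabilistic learner.''
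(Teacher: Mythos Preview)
Your proposal is correct and follows essentially the same approach as the paper: the paper constructs exactly the averaged predictor $B(\rlangle{S},x) = \frac{1}{T}\sum_{t=1}^T A(\rlangle{S_{t-1}},x)$, notes it is computable, and defers the PAC guarantee to the standard online-to-batch conversion argument. You have simply spelled out that standard argument (martingale identity plus Markov) and the computability check in more detail than the paper does.
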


\begin{proof}
    Let $A$ be an agnostic c-online learner for $\HH$. We use $A$ to construct an agnostic improper CPAC learner $B$ for $\HH$ that is probabilistic. For any $S=((x_t,y_t))_{t=1}^T$ and $x\in\X$, define
    $
    B(\rlangle{S},x) = \frac{1}{T} \sum_{t=1}^{T} A(\rlangle{S_{t-1}}, x).
    $
     We can think of $B$ as representing an algorithm that uniformly at random picks some $t\in[T]$ and outputs $A(\rlangle{S_{t-1}},\cdot)$ as its hypothesis. As required, $B$ is a computable function from $\N^2$ into $\Q \cap [0,1]$. The proof that $B$ is a PAC learner for $\HH$ 
     % (Appendix \ref{appendix: online to batch conversion})
     follows from the standard online-to-batch conversion argument \citep[see][Exercise 21.7.5]{kakade-and-tewari-2008,shalev-schwartz-and-ben-david-2014}.
     The proof can also be extended to the realizable setting.
\end{proof}

\begin{lemma}[necessary condition for agnostic c-online learnability]
\label{lemma: condition for non-c-online learnability in the agnostic setting}
    Let $\HH \subset \Yb^\N$ be any class of computable hypotheses that is agnostically c-online learnable. Then, $\HH$ satisfies the following two conditions: (1) $\Ldim(\HH) < \infty$ and (2) for sufficiently large $n$, there exists an algorithm $C_n$ that on any input $X \subset \X$ of size $n$, outputs a labeling $g: X \to \Yb$ for which $((x,g(x)))_{x \in X}$ is not $\HH$-realizable.
\end{lemma}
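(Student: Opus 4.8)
The plan is to establish the two conditions separately. Condition~(1), $\Ldim(\HH)<\infty$, is just the classical agnostic online lower bound and uses no computability; condition~(2) will follow by feeding the c-online learner through the computable online-to-batch conversion of Lemma~\ref{lemma: agnostic c-online implies improper CPAC} and then re-tracing the necessary-condition argument of \citet{sterkenburg2022characterizations}, with a rounding step to cope with the probabilistic (real-valued) improper learner that conversion produces.

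For condition~(1), suppose $\Ldim(\HH)=\infty$ and let $A$ be any agnostic c-online learner for $\HH$. Fix $T\in\N$ and an $\HH$-shattered tree of depth $T$. An adversary walks down this tree: having fixed the prefix $S_{t-1}$, it presents the label $x_t$ of the current node, reveals $y_t=\one_{[A(\rlangle{S_{t-1}},x_t)<1/2]}$, and moves to the $y_t$-child (since $A$ is a fixed function this run can be precomputed, so an oblivious adversary suffices). Then $\ell(A_t,(x_t,y_t))=|A(\rlangle{S_{t-1}},x_t)-y_t|\geq 1/2$ at each round, so $\sum_{t=1}^{T}\ell(A_t,(x_t,y_t))\geq T/2$, while the chosen root-to-leaf path is realized by some $h\in\HH$ with zero loss on $S=((x_t,y_t))_{t=1}^{T}$. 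Hence $\Exp[R_A(\HH,T)]\geq T/2$ for every $T$, so $A$ does not have sublinear regret, contradicting agnostic c-online learnability.

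For condition~(2), Lemma~\ref{lemma: agnostic c-online implies improper CPAC} gives a total computable probabilistic improper CPAC learner $A:\N^2\to\Q\cap[0,1]$ for $\HH$ with sample complexity $m_\HH$. Fix $\epsilon=1/8$, $\delta=1/4$, $m=m_\HH(1/8,1/4)$. Given $X\subset\X$ with $|X|=n$, let $C_n$ simulate $A$ on all $(2n)^m$ samples of length $m$ over $X\times\Yb$; for each resulting $A_S$, record $\widehat h_S:x\mapsto\one_{[A_S(x)\geq 1/2]}\in\Yb^X$; let $G\subseteq\Yb^X$ be the finite, explicitly computable set of all these $\widehat h_S$, so $|G|\leq (2n)^m$; finally search the $2^n$ labelings of $X$ and output the first $g^*$ whose Hamming distance to every member of $G$ exceeds $n/4$. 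Two things need checking. First, such a $g^*$ exists once $n$ is large, since the union of radius-$n/4$ Hamming balls around $G$ has size at most $(2n)^m\sum_{i\leq n/4}\binom{n}{i}\leq (2n)^m\,2^{H(1/4)n}<2^n$ for large $n$ ($H$ the binary entropy, $H(1/4)<1$, $(2n)^m$ polynomial). Second, $g^*$ is not $\HH$-realizable: if $g:X\to\Yb$ had $((x,g(x)))_{x\in X}\in\SSS_\HH$, then $\D$ uniform on $\{(x,g(x)):x\in X\}$ satisfies $L_\D(\HH)=0$, so some enumerated $S$ gives $L_\D(A_S)=\frac1n\sum_{x\in X}|A_S(x)-g(x)|\leq 1/8$, and using $|\widehat h_S(x)-g(x)|\leq 2|A_S(x)-g(x)|$ pointwise (valid as $g$ is $\Yb$-valued) we get $\widehat h_S\in G$ within Hamming distance $n/4$ of $g$, so $g\neq g^*$.

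The routine parts are the counting estimate and the closeness of $\widehat h_S$ to realizable labelings; the substantive point is reaching condition~(2) at all, i.e.\ checking that Sterkenburg's argument survives the passage to a \emph{probabilistic} improper learner. That is exactly what the rounding $\widehat h_S:x\mapsto\one_{[A_S(x)\geq 1/2]}$ and the factor-$2$ bound $|\widehat h_S(x)-g(x)|\leq 2|A_S(x)-g(x)|$ handle — the factor $2$ being harmless, absorbed by taking $\epsilon=1/8$ instead of $1/4$.
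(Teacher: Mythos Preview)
Your proposal is correct and follows the same overall route as the paper: condition~(1) via the standard Littlestone lower bound (which the paper simply cites to \citet{ben-david-2009-agnostic}), and condition~(2) via Lemma~\ref{lemma: agnostic c-online implies improper CPAC} together with \citet[Lemma~9]{sterkenburg2022characterizations}. The paper's own proof is essentially a chain of citations with the remark that Sterkenburg's argument ``can also be extended to probabilistic learners''; you have actually carried out that extension, and your rounding step $\widehat h_S(x)=\one_{[A_S(x)\ge 1/2]}$ with the pointwise bound $|\widehat h_S(x)-g(x)|\le 2|A_S(x)-g(x)|$ (absorbed by taking $\epsilon=1/8$) is precisely the missing detail the paper alludes to but does not spell out.
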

\begin{proof}
    The first condition follows from \citet{ben-david-2009-agnostic}, who showed that $\HH$ is agnostically online learnable in the standard setting iff $\Ldim(\HH)<\infty$. The second condition follows almost directly form \citet[Lemma 9]{sterkenburg2022characterizations}, who showed that if $\HH$ is agnostically improperly CPAC learnable, for sufficiently large $n$, there exists an algorithm $C_n$ satisfying the stated property. Their proof, which follows from the Computable No-Free-Lunch theorem \citep[Lemma 19]{agarwal-2020}, can also be extended to probabilistic learners.
    % \niki{Add proof to appendix?}. 
    Hence, the result follows from Lemma \ref{lemma: agnostic c-online implies improper CPAC}.
\end{proof}

\noindent\textbf{Open Question} Is there an RER class of computable hypotheses with finite Littlestone dimension that is not c-online learnable? Lemma \ref{lemma: agnostic c-online implies improper CPAC} suggests one approach to addressing this question: constructing a class with finite Littlestone dimension that is not improperly CPAC learnable (by a probabilistic learner). Similarly, Lemma \ref{lemma: condition for non-c-online learnability in the agnostic setting} could be applied to construct a class that is not c-online learnable in the \textit{agnostic} setting.

In \appendixref{appendix: Ldim of H_init}, we show that the class $\HH_{init}$ presented by \citet[Theorem 10]{sterkenburg2022characterizations}---the only known RER class of finite VC-dimension that is not improperly CPAC learnable---has infinite Littlestone dimension. Hence, this class cannot be used to address the question stated above. It remains open whether there exists an RER class of computable functions that has finite Littlestone dimension but is not improperly CPAC learnable.

\section{Conclusion and Future Work}

In this paper, we investigate computable online learning under three different settings. First, we formalize anytime optimal (a-optimal) online learning, a natural conceptualization of ``optimality,'' and show that it is computationally more difficult than optimal online learning. Second, we give a necessary and sufficient condition for optimal c-online learning and prove that the Littlestone dimension no longer characterizes the optimal mistake bound of c-online learning. 
Finally, we demonstrate that, in the non-RER setting, the finiteness of the Littlestone dimension no longer determines whether a class is c-online learnable with finite mistake bound. Although this last result remains open in the RER setting, we show that it is equivalent to asking whether there exists an RER class of computable functions that has finite Littlestone dimension but is not improperly CPAC learnable.

As we have shown that some very fundamental results from online learning fail in the computable setting, it would be interesting for future work to explore computable online learning in various related settings---for example, agnostic online learning, proper online learning, and differentially private PAC learning.

Furthermore, similar to \citet{sterkenburg2022characterizations}'s characterization of proper CPAC learning, our characterization of optimal c-online learning relies on computability-theoretic concepts. A major remaining open problem is to find purely combinatorial characterizations of computable learnability.

\acks{
     We would like to thank CIFAR and the Vector Institute for their support: CIFAR for supporting Shai as a Canada AI CIFAR chair and the Vector Institute for supporting Niki through a research grant and Shai through a faculty appointment. We would also like to thank Alex Bie, Tosca Lechner, and Matt Regehr for interesting and helpful discussions.
}

\bibliography{alt2023}

\appendix

\section{Proof of Lemma \ref{lemma: characterizing optimally significant inputs}}
\label{appendix: characterizing optimally significant inputs}

\begin{lemma}
\label{lemma: equivalence of two conditions}
    Let $\HH$ be a hypothesis class such that $\Ldim(\HH) = d < \infty$. Let $S=((x_t,y_t))_{t=1}^T$ be any $\HH$-realizable sample and $x_{T+1}\in\X$ be any domain instance, where $T\in\N$. Then, the following conditions are equivalent:
        \begin{enumerate}
        \item[A.] For each $t\in[T]$, $\Ldim(\HH_{S_{t-1}})=\max\limits_{r\in\Yb}\Ldim(\HH_{S_{t-1}}^{(x_t,r)})$ and $\Ldim(\HH_{S_t}) \geq \Ldim(\HH_{S_{t-1}}) - 1$
        \item[B.] $M_A(S)=\Ldim(\HH)-\Ldim(\HH_S)$ for every online learner $A$ that is optimal for $\HH$.
    \end{enumerate}
    Furthermore, for all $t\in[T]$ and all optimal online learners $A$, we have that $A(S_{t-1},x_t) = \arg\max_{r\in\Yb} \Ldim(\HH_{S_{t-1}}^{(x_t,r)})$.
\end{lemma}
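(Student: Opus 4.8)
The plan is to prove both directions by tracking, along the sample $S$, two quantities of an arbitrary optimal learner $A$: its running mistake count $M_A(S_t)$ and the Littlestone dimension $\Ldim(\HH_{S_t})$ of its current version space. Two observations drive the argument. \emph{(i)} For any optimal $A$ and any $\HH$-realizable $S'$ we have $M_A(S') \leq \Ldim(\HH) - \Ldim(\HH_{S'})$: indeed $M_A(S') + M_A^{S'}(\HH) \leq M_A(\HH) = \Ldim(\HH)$ (the inequality is immediate from the definitions of $M_A^{S'}(\HH)$ and $M_A(\HH)$, the equality is Theorem~\ref{theorem: optimal mistake bound of online learning}), while $M_A^{S'}(\HH) \geq \Ldim(\HH_{S'})$ by Lemma~\ref{lemma: characterizing the mistake bound of a-optimal online learning}. \emph{(ii)} Whenever $SOL_\HH$ errs, the Littlestone dimension of its version space strictly decreases: if it predicts $p$ on $(S',x)$ and the truth is $1-p$, then $\Ldim(\HH_{S'}^{(x,1-p)}) \leq \Ldim(\HH_{S'}^{(x,p)}) \leq \Ldim(\HH_{S'})$, and if the left side equalled $\Ldim(\HH_{S'})$ then both restrictions would attain $\Ldim(\HH_{S'})$, contradicting the Remark after the definition of $\Ldim$. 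Consequently, on any $\HH$-realizable sample $\concat{U}{V}$ the number of mistakes $SOL_\HH$ makes while processing the $V$-part is at most $\Ldim(\HH_U) - \Ldim(\HH_{\concat{U}{V}})$, since $\Ldim$ is monotone nonincreasing along $V$ and each mistake consumes at least one unit of drop.

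\textbf{$\mathrm{A}\Rightarrow\mathrm{B}$ and the ``furthermore''.} I will show by induction on $t\in\{0,\dots,T\}$ that, under condition A, every optimal learner $A$ satisfies $M_A(S_t)=\Ldim(\HH)-\Ldim(\HH_{S_t})$ and, for $t\geq 1$, predicts $A(S_{t-1},x_t)=r_t^{*}:=\arg\max_{r\in\Yb}\Ldim(\HH_{S_{t-1}}^{(x_t,r)})$; by the first conjunct of A and the Remark, $r_t^{*}$ is the \emph{unique} maximizer, with $\Ldim(\HH_{S_{t-1}}^{(x_t,1-r_t^{*})})<\Ldim(\HH_{S_{t-1}})$. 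For the step from $t-1$ to $t$: were $A$ to predict $1-r_t^{*}$, the adversary could reveal $y_t=r_t^{*}$, producing a mistake with no change in version-space dimension, whence $M_A(\concat{S_{t-1}}{((x_t,r_t^{*}))})=M_A(S_{t-1})+1=\Ldim(\HH)-\Ldim(\HH_{S_{t-1}})+1>\Ldim(\HH)-\Ldim(\HH_{S_{t-1}}^{(x_t,r_t^{*})})$, contradicting \emph{(i)}; so $A$ predicts $r_t^{*}$. Then either $y_t=r_t^{*}$ (no mistake, $\Ldim(\HH_{S_t})=\Ldim(\HH_{S_{t-1}})$) or $y_t=1-r_t^{*}$ (one mistake, and $\Ldim(\HH_{S_{t-1}})-\Ldim(\HH_{S_t})$ is $\geq 1$ by the Remark and $\leq 1$ by the second conjunct of A, hence exactly $1$); in both cases $M_A(S_t)=\Ldim(\HH)-\Ldim(\HH_{S_t})$. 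Taking $t=T$ yields B, and the prediction formula is the ``furthermore''.

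\textbf{$\neg\mathrm{A}\Rightarrow\neg\mathrm{B}$.} Pick the least $t\in[T]$ at which some conjunct of A fails; since A holds through step $t-1$, the induction above shows that every optimal learner has made exactly $\Ldim(\HH)-\Ldim(\HH_{S_{t-1}})$ mistakes on $S_{t-1}$ and predicts $r_s^{*}$ at each $s<t$. If the \emph{first} conjunct fails at $t$, then $\Ldim(\HH_{S_t})=\Ldim(\HH_{S_{t-1}}^{(x_t,y_t)})\leq\max_{r\in\Yb}\Ldim(\HH_{S_{t-1}}^{(x_t,r)})\leq\Ldim(\HH_{S_{t-1}})-1$; let $A'$ be the learner agreeing with $SOL_\HH$ on every input except $(S_{t-1},x_t)$, where it outputs $y_t$. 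A mistake-budget computation --- split on whether a candidate $\HH$-realizable sample extends $\concat{S_{t-1}}{((x_t,r))}$ for some $r\in\Yb$, using \emph{(i)} for the prefix $S_{t-1}$ and \emph{(ii)} for everything after $\concat{S_{t-1}}{((x_t,r))}$ --- shows $M_{A'}(\HH)=\Ldim(\HH)$, so $A'$ is optimal; but $A'$ makes no mistake at step $t$ on $S$, so $M_{A'}(S)\leq(\Ldim(\HH)-\Ldim(\HH_{S_{t-1}}))+(\Ldim(\HH_{S_t})-\Ldim(\HH_S))\leq\Ldim(\HH)-1-\Ldim(\HH_S)<\Ldim(\HH)-\Ldim(\HH_S)$, so B fails. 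If instead the first conjunct holds at $t$ but the \emph{second} fails, then as in the induction every optimal learner (in particular $SOL_\HH$) predicts the unique maximizer $r_t^{*}$ while $y_t=1-r_t^{*}$ and $\Ldim(\HH_{S_t})\leq\Ldim(\HH_{S_{t-1}})-2$; applying \emph{(ii)} to the part of $S$ after $S_t$ gives $M_{SOL_\HH}(S)\leq(\Ldim(\HH)-\Ldim(\HH_{S_{t-1}})+1)+(\Ldim(\HH_{S_{t-1}})-2-\Ldim(\HH_S))<\Ldim(\HH)-\Ldim(\HH_S)$, so again B fails.

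\textbf{Main obstacle.} The delicate step is verifying that the patched learner $A'$ is globally optimal in the ``first conjunct fails'' case: redirecting a single prediction must not push the worst-case mistake count above $\Ldim(\HH)$, and a clean proof of this needs the exact bound $M_{SOL_\HH}(S_{t-1})=\Ldim(\HH)-\Ldim(\HH_{S_{t-1}})$ on the prefix (from the induction), the explicit form of $SOL_\HH$ at $(S_{t-1},x_t)$, and observation \emph{(ii)} applied uniformly to all continuations. Everything else is routine bookkeeping with the Littlestone dimension.
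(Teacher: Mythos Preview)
Your argument is correct. The forward direction (A$\Rightarrow$B together with the ``furthermore'') is essentially the paper's proof: both track $M_A(S_t)$ against $\Ldim(\HH_{S_t})$ and force every optimal learner to predict $r_t^{*}$ via the inequality you call observation~\emph{(i)} (the paper phrases the same step through an a-optimal learner and Lemma~\ref{lemma: characterizing a-optimally significant inputs}).

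For $\neg$A$\Rightarrow\neg$B the paper takes a slightly different route. Rather than patching $SOL_\HH$ at a single input and then verifying global optimality of the patched learner (your ``main obstacle''), the paper fixes once an a-optimal learner $A^{*}$ that breaks ties toward the true label $y_t$ on every input that is not a-optimally significant. Optimality of $A^{*}$ is then automatic (a-optimal implies optimal), and by construction $A^{*}$ errs at step $t$ only when $\Ldim(\HH_{S_{t-1}}^{(x_t,y_t)})<\Ldim(\HH_{S_{t-1}}^{(x_t,1-y_t)})$; in each failure case of condition~A one sees directly that the drop in $\Ldim$ along $S$ strictly exceeds the number of $A^{*}$-mistakes, giving $M_{A^{*}}(S)<d-\Ldim(\HH_S)$. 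Your construction is more elementary in that it never leaves $SOL_\HH$ and avoids the a-optimal machinery, at the price of the explicit mistake-budget computation needed to certify that the patched learner $A'$ stays optimal; the paper's choice of $A^{*}$ sidesteps that verification entirely.
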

\begin{proof}
    (A $\implies$ B) Assume that condition A holds and let $A^*$ be an a-optimal online learner for $\HH$. Note that, by Lemma \ref{lemma: characterizing a-optimally significant inputs},  each $(S_{t-1},x_t)$ is an a-optimally significant input and $A^*(S_{t-1},x_t) = r^*_t = \arg\max_{r\in\Yb} \Ldim(\HH_{S_{t-1}}^{(x_t,r)})$. Hence, it follows from condition A that the Littlestone dimension of the version space decreases iff $A^*$ errs and decreases by at most one at each time step. Therefore, $M_{A^*}(S_t) = d - \Ldim(\HH_{S_t})$ for any $t\in[T]$.
    
    We will show that condition B holds by showing that, for each $t \in [T]$, every optimal online learner must agree with $A^*$ on $(S_{t-1},x_t)$. Assume for the sake of contradiction that there exists an optimal online learner $A$ such that for some $t \in [T]$, $A(S_{t-1},x_t)=1-r^*_t$. Let $\tau$ be the earliest such time step. Then, on the sample $\concat{S_{\tau-1}}{((x_\tau,r^*_\tau))}$, $A$ errs $M_{A^*}(S_{\tau-1}) + 1$ times. However, by Lemma \ref{lemma: characterizing the mistake bound of a-optimal online learning}, $A$ can be made to err at least $\Ldim(\HH_{S_{\tau-1}}^{(x_\tau, r^*_\tau)}) = \Ldim(\HH_{S_{\tau-1}}) = d - M_{A^*}(S_{\tau-1})$ more times, a contradiction.\\

    (B $\implies$ A) 
    Let $A^*$ be an a-optimal online learner such that $A^*(S_{t-1},x_t)=y_t$ for all $(S_{t-1},x_t)$ that are not a-optimally significant. That is, $A^*$ errs iff $\Ldim(\HH_{S_{t-1}}^{(x_t,y_t)}) < \Ldim(\HH_{S_{t-1}}^{(x_t,1-y_t)})$. Furthermore, $M_{A^*}(S) \leq d - \Ldim(\HH_S)$, as every time $A^*$ errs the Littlestone dimension of the version space decreases by at least one. We will show that if condition A does not hold, this inequality is strict.

    First, if there exists $t\in[T]$ such that $\Ldim(\HH_{S_{t-1}}^{(x_t,y_t)}) < \Ldim(\HH_{S_{t-1}})$ and $\Ldim(\HH_{S_{t-1}}^{(x_t,1-y_t)}) < \Ldim(\HH_{S_{t-1}})$, there are two cases. Either $A^*$ does not err at time step $t$ and the Littlestone dimension of the version space decreases by at least one, or $A^*$ errs and the Littlestone dimension of the version space decreases by at least two. Similarly, if there exists $t\in[T]$ such that $\Ldim(\HH_{S_t}) \leq \Ldim(\HH_{S_{t-1}}) - 2$, the Littlestone dimension of the version space goes down by at least one more than the number of mistakes made. In either case, $M_{A^*}(S) < d - \Ldim(\HH_S)$.
\end{proof}

\section{Proof of Theorem \ref{theorem: DR class not optimally c-online learnable}}

\label{appendix: DR class not optimally c-online learnable}

\subsection{Littlestone dimension of $\HH^{DR}_{ext}$}
\label{sub-appendix: Littlestone dimension of H^DR_ext}

\begin{lemma}
\label{lemma: Littlestone dimension of H^DR_ext}
    $\Ldim(\HH^{DR}_{ext}) = 2$.
\end{lemma}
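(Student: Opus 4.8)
The plan is to prove both $\Ldim(\HH^{DR}_{ext}) \ge 2$ and $\Ldim(\HH^{DR}_{ext}) \le 2$; throughout write $\HH = \HH^{DR}_{ext}$. The argument rests on two structural observations read straight off the definition of $\HH$. First, every $h\in\HH$ satisfies $|h^{-1}(1)|\le 3$. Second, $\HH$ decomposes into \emph{blocks}: for each $e$ with $\varphi_e(0)\downarrow$, let $B_e$ be the set of those $h\in\HH$ whose support is contained in $\{2^e\}\cup\{2^e p^k : p\in\{3,5,7,11,13\},\, k\ge 1\}$. Then $B_e$ always contains $h_0^{(e)}:=\charfn{\{2^e,\,2^e3^{c_0(e)}\}}$, contains exactly two further hypotheses when $\varphi_e(e)\downarrow\in\Yb$, and contains nothing else otherwise; moreover, by unique factorisation (the $2$-adic valuation of every support element pins down its block), the supports of hypotheses from distinct blocks are pairwise disjoint, and a given $n=2^e$ lies in the support of $h$ if and only if $h\in B_e$. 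These facts both let us build a depth-$2$ tree and let us rule out a depth-$3$ one.

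For $\Ldim(\HH)\ge 2$ I would fix an index $e_0$ of the constant-$1$ function, so that $\varphi_{e_0}(0)\downarrow$, $\varphi_{e_0}(e_0)\downarrow=1$, and hence $c_0(e_0)$ and $c_{e_0}(e_0)$ both converge; and a second index $e_1\ne e_0$ with $\varphi_{e_1}(0)\downarrow$, which exists since $\{e:\varphi_e(0)\downarrow\}$ is infinite. The claim is that $(x_1,x_2,x_3)=\big(2^{e_0}5^{c_0(e_0)},\ 2^{e_0},\ 2^{e_0}7^{c_{e_0}(e_0)}\big)\in\T^2_\HH$, witnessed by $h_0^{(e_1)}$ for the path $(0,0)$, by $h_0^{(e_0)}$ for $(0,1)$, by $\charfn{\{2^{e_0},2^{e_0}5^{c_0(e_0)},2^{e_0}11^{c_{e_0}(e_0)}\}}$ for $(1,0)$, and by $\charfn{\{2^{e_0},2^{e_0}5^{c_0(e_0)},2^{e_0}7^{c_{e_0}(e_0)}\}}$ for $(1,1)$. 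Checking the two node-label pairs for each of the four witnesses is immediate from block-disjointness (which gives $h_0^{(e_1)}(x_j)=0$ for all three nodes) and from the primes $3,5,7,11$ being distinct.

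For $\Ldim(\HH)\le 2$ I would show there is no $\HH$-shattered tree of depth $3$. Suppose $(x_1,\dots,x_7)\in\T^3_\HH$. Distinct labelings of a root-to-leaf path disagree on some node, so the four labelings $(1,y_2,y_3)$ require four \emph{distinct} witnesses, each of which assigns label $1$ to the root $x_1$. It therefore suffices to prove that for every $n\in\N$ at most three hypotheses of $\HH$ contain $n$ in their support. Write $n = 2^e q$ with $q$ odd. If $q\notin\{1\}\cup\{3^k,5^k,7^k,11^k,13^k : k\ge1\}$, then no support contains $n$. If $q=1$, then $n=2^e$ lies only in supports of hypotheses in $B_e$, and $|B_e|\le 3$. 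In each remaining case a direct inspection of the three unions defining $\HH$ shows that $n$ lies in at most two supports, and only when the exponent matches $c_0(e)$ or $c_e(e)$ according to the prime. In every case the count is at most $3<4$, contradicting the existence of four distinct label-$1$ witnesses; hence $\Ldim(\HH)\le 2$, and combined with the previous paragraph, $\Ldim(\HH)=2$.

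The only real work here is bookkeeping, and that is where I expect the main obstacle to lie: one must check carefully that unique factorisation genuinely isolates the blocks — so that ``$h(2^e)=1$'' forces $h\in B_e$ — and that within a block the six candidate support elements $2^e,2^e3^{c_0(e)},2^e5^{c_0(e)},2^e7^{c_e(e)},2^e11^{c_e(e)},2^e13^{c_e(e)}$ are pairwise distinct (distinct odd primes, plus the purely power-of-$2$ element), so that the upper-bound case analysis is exhaustive and the lower-bound witnesses are well defined. Beyond this, the proof is just the elementary counting principle that a depth-$d$ shattered tree needs $2^d$ witnesses and hence at least $2^{d-1}$ hypotheses assigning the root any one fixed label.
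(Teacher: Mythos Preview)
Your proof is correct, but the upper bound takes a genuinely different route from the paper. For $\Ldim(\HH)\ge 2$ both you and the paper exhibit an explicit depth-$2$ shattered tree (yours rooted at $2^{e_0}5^{c_0(e_0)}$, theirs at $2^{e_1}$), and these are equally valid. For $\Ldim(\HH)\le 2$, however, the paper argues via mistake bound: it describes a (not necessarily computable) online learner $B$ that predicts $0$ until a first mistake on some $x_1$, then case-splits on the prime factorisation of $x_1$ to pin down the block and commit to a hypothesis that can err at most once more; the bound $\Ldim(\HH)\le 2$ then comes from Theorem~\ref{theorem: optimal mistake bound of online learning}. Your argument is instead purely combinatorial: you observe that any $n\in\N$ lies in the support of at most three hypotheses of $\HH$, while a depth-$3$ shattered tree would force four distinct witnesses all labelling the root $1$. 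Your approach is more elementary in that it avoids invoking the mistake-bound/Littlestone-dimension equivalence, and the ``at most three hypotheses per point'' fact is a clean structural statement about $\HH$; the paper's approach, by contrast, makes the online-learning content explicit and foreshadows the c-online learner used later in the same theorem. Both are short and neither dominates the other.
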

\begin{proof}
    For simplicity, let $\HH = \HH^{DR}_{ext}$. First, we will show that $\Ldim(\HH) \geq 2$. Consider any three distinct indices $e_1, e_2, e_3 \in \N$ such that $\varphi_{e_i}(0)\downarrow$ for all $i \in [3]$ and $\varphi_{e_1}(e_1)\downarrow=1$. Then, the $\N$-labeled tree of depth 2 given by $2^{e_2} \leftarrow 2^{e_1} \rightarrow 2^{e_1} 3^{c_0(e_1)}$ is shattered by $\charfn{\{2^{e_3}, 2^{e_3}3^{c_0(e_3)}\}}$ , $\charfn{\{2^{e_2}, 2^{e_2}3^{c_0(e_2)}\}}$, $\charfn{\{2^{e_1}, 2^{e_1}5^{c_0(e_1)}, 2^{e_1}7^{c_{e_1}(e_1)}\}}$, $\charfn{\{2^{e_1}, 2^{e_1}3^{c_0(e_1)}\}} \in \HH$.
    
    Next, we will show that $\Ldim(\HH) \leq 2$ by showing the existence of a learner $B$ (not necessarily computable) which errs at most twice on any $\HH$-realizable sample. $B$ predicts 0 until (possibly) a mistake is made on $x_1$. There are two cases for $x_1$. If $x_1 = 2^ey^i$ for some $e,i\in\N$ s.t. $i>0$ and $y \in \{3, 5, 7, 11, 13\}$, $B$ matches $\charfn{\{2^e, 2^e y^i\}}$ until a mistake is potentially made on $x_2$, at which point it matches the target function $\charfn{\{2^e,2^e y^i,x_2\}}$ and does not err again. If $x_1 = 2^e$ for some $e\in\N$, there are three cases. If $\varphi_e(e)\downarrow=1$, $B$ matches $\charfn{\{2^e, 2^e 5^{c_0(e)}\}}$, if $\varphi_e(e)\downarrow=0$, $B$ matches $\charfn{\{2^e, 2^e 13^{c_e(e)}\}}$, and otherwise $B$ matches $\charfn{\{2^e, 2^e 3^{c_0(e)}\}}$. In either case, $B$ can be made to err at most once more. 
\end{proof}

\subsection{Proof that $\HH^{DR}_{ext}$ is DR}
\label{sub-appendix: H^DR_ext is DR}

\begin{lemma}
    $\HH^{DR}_{ext}$ is decidably representable.
\end{lemma}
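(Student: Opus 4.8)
The plan is to exhibit a decision procedure for the set $Y = \{y : \exists h \in \HH^{DR}_{ext} \; (D_y = h^{-1}(1))\}$, which, since each $h \in \HH^{DR}_{ext}$ has finite support, is exactly what it means for $\HH^{DR}_{ext}$ to be decidably representable. The key observation is that every $h \in \HH^{DR}_{ext}$ has a support that is a finite set $F$ with $2 \leq |F| \leq 3$, and the structure of the union defining $\HH^{DR}_{ext}$ tells us precisely which finite sets arise. So on input $y$, first decode $D_y$; if $|D_y| \notin \{2,3\}$, reject. Otherwise try to recognize $D_y$ as one of the four possible shapes of support sets appearing in the three sub-unions.

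The main work is that recognizing these shapes requires \emph{finite, terminating} computations only. First I would check that the smallest element of $D_y$ is of the form $2^e$ for some $e \in \N$ (this is decidable: factor the least element and check it is a power of $2$, reading off $e$). Then I need to verify $\varphi_e(0)\downarrow$, and in the three-element cases also $\varphi_e(e)\downarrow$ with a particular value; crucially, all the remaining elements of $D_y$ have the form $2^e q^{k}$ where $q \in \{3,5,7,11,13\}$ and $k$ is some exponent, and the exponent $k$ is forced to equal $c_0(e)$ or $c_e(e)$. Since $D_y$ is \emph{given}, we read off the candidate exponents $k$ from the factorizations of the elements of $D_y$; then we only need to confirm that $\varphi_e(0)\downarrow$ with its halting certificate being exactly $C^{(0)}_{k}$ for the relevant $k$ (equivalently, that running $P_e$ on input $0$ halts, and the halting computation is the $k$-th one in the canonical enumeration $\{C^{(0)}_i\}$), and similarly $\varphi_e(e)\downarrow$ yielding the right value with certificate $C^{(e)}_{k'}$. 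The point is that we are never forced to decide a halting statement ``in the dark''—the exponents appearing in $D_y$ supply the putative certificates, so checking ``does $C^{(0)}_{k}$ witness that $P_e$ halts on $0$'' is a bounded computation, and if the check fails we reject.

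Concretely, the procedure would enumerate the finitely many templates: (i) $D_y = \{2^e,\, 2^e 3^{k}\}$, accept iff $\varphi_e(0)\downarrow$ with certificate $C^{(0)}_k$; (ii) $D_y = \{2^e,\, 2^e 5^{k},\, 2^e 7^{k'}\}$ or $\{2^e,\, 2^e 5^{k},\, 2^e 11^{k'}\}$, accept iff $\varphi_e(0)\downarrow$ with certificate $C^{(0)}_k$ and $\varphi_e(e)\downarrow=1$ with certificate $C^{(e)}_{k'}$; (iii) $D_y = \{2^e,\, 2^e 5^{k},\, 2^e 13^{k'}\}$ or $\{2^e,\, 2^e 3^{k},\, 2^e 13^{k'}\}$, accept iff $\varphi_e(0)\downarrow$ with certificate $C^{(0)}_k$ and $\varphi_e(e)\downarrow=0$ with certificate $C^{(e)}_{k'}$. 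Each of these checks terminates because, once $k$ and $k'$ are read from $D_y$, verifying that a specific computation is a halting certificate for a specific machine on a specific input is decidable. Since the list of templates is finite and each membership test is total, the overall characteristic function of $Y$ is computable, so $\HH^{DR}_{ext}$ is DR.

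The main obstacle—really the only subtle point—is making sure the ``forced exponent'' argument is airtight: one must check that distinct $h$'s in the class genuinely have distinct supports (so that $D_y$ unambiguously determines $e$ and the template), and that no finite set can be accidentally misread as a support of the wrong shape. This follows from the fact that the leading factor $2^e$ pins down $e$, the prime $q \in \{3,5,7,11,13\}$ multiplying in each higher element pins down which sub-union we are in, and within a sub-union the exponents are uniquely determined by $c_0(e)$ and $c_e(e)$; I would spell out this injectivity explicitly and then the decision procedure above finishes the proof.
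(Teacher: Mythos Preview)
Your proposal is correct and takes essentially the same approach as the paper: decode $D_y$, match it against the finite list of support templates, read off $e$ and the candidate exponents from the prime factorizations, and use those exponents as putative halting certificates whose validity can be checked by a bounded simulation. The paper packages the certificate check into an auxiliary machine $P_{cert}$ and then case-splits exactly as you do, so the two arguments are the same up to presentation.
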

\begin{proof}
    First, note that the set $\{(e, i, x): C_i^{(x)} \text{ is a halting certificate for $P_e$ on input $x$}\}$ is decidable by the following Turing machine $P_{cert}$. On any input $(e, i, x)$, after ensuring that $i>0$, $P_{cert}$ simulates running $P_e$ on input $x$ and checks each configuration that $P_e$ goes through against the corresponding one in $C_i^{(x)}$. If at any point the configurations are not the same or if there are no more configurations left to check from $C_i^{(x)}$, $P_{cert}$ halts and outputs $0$. Otherwise, if $P_e$ halts on input $x$ and all the configurations match, $P_{cert}$ halts and outputs 1. $P_{cert}$ is guaranteed to halt since $C_i^{(x)}$ is a finite sequence of configurations.

    Now, we will show that the set $\{y: \exists h \in \HH ~ (D_y = h^{-1}(1))\}$ is decidable by the following Turing machine $P$. Given the canonical index $y$ of any finite set as input, $P$ first decodes $y$ into its associated set $D_y$ and checks if $D_y$ equals any of the sets $\{2^e, 2^e 3^i\}$, $\{2^e, 2^e 5^i, 2^e 7^j\}$, $\{2^e, 2^e 5^i, 2^e 11^j\}$, $\{2^e, 2^e 5^i, 2^e 13^j\}$, $\{2^e, 2^e 3^i, 2^e 13^j\}$ for some $e, i, j \in \N$ such that $i, j > 0$. If not, $P$ halts and outputs $0$. Otherwise, if $D_y = \{2^e, 2^e 3^i\}$, $P$ halts and outputs the result of running $P_{cert}$ on $(e, i, 0)$. Otherwise, $P$ evaluates $P_{cert}$ on $(e, i, 0)$ and $(e, j, e)$ and, if either result is $0$, halts and outputs $0$. If both invocations of $P_{cert}$ yield 1, let $r$ be the result of evaluating $P_e$ on input $e$. $P$ outputs $1$ if $r=0$ and $2^e 13^j \in D_y$ or if $r=1$ and $2^e 13^j \not\in D_y$. Otherwise, it outputs $0$. 
\end{proof}

\subsection{Optimally significant inputs for $\HH^{DR}_{ext}$}
\label{sub-appendix: optimally significant inputs for H^DR_ext}

\begin{lemma}
    For each $e\in\N$, let $S^e = ((2^e,1))$ and define the p.c. function $x: e \mapsto 2^e 3^{c_0(e)}$. $(S^e, x(e))$ is a significant input w.r.t. optimal online learning $\HH^{DR}_{ext}$ iff $\varphi_e(0)\downarrow$ and $\varphi_e(e)\downarrow\in\Yb$. Furthermore, for any optimal online learner $A$ for $\HH^{DR}_{ext}$, if $\varphi_e(0)\downarrow$ and $\varphi_e(e)\downarrow=r$ for some $r\in\Yb$, $A(S^e,x(e)) = 1-r$
\end{lemma}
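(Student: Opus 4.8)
The plan is to invoke Lemma~\ref{lemma: characterizing optimally significant inputs} with the sample $S = S^e = ((2^e,1))$ (so $T=1$, $S_0 = \varepsilon$, $S_1 = S^e$), the query point $x_{T+1} = x(e)$, and $d = \Ldim(\HH^{DR}_{ext}) = 2$ by Lemma~\ref{lemma: Littlestone dimension of H^DR_ext}; write $\HH = \HH^{DR}_{ext}$. I would first note that $x(e) = 2^e 3^{c_0(e)}$ is defined exactly when $\varphi_e(0)\downarrow$, since $c_0(e)\downarrow$ iff $P_e$ halts on input $0$; thus the requirement $\varphi_e(0)\downarrow$ is forced merely for the input $(S^e, x(e))$ to exist, and we may assume it. Since halting computations are indexed from $1$, we also have $c_0(e)\geq 1$, so $x(e)\neq 2^e$, i.e.\ the query point differs from the instance of $S^e$, and $\charfn{\{2^e,\,2^e 3^{c_0(e)}\}}\in\HH$ witnesses that $S^e$ is $\HH$-realizable. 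It then remains to compute the version space $\HH_{S^e} = \HH^{(2^e,1)}$ together with its refinements $\HH_{S^e}^{(x(e),0)}$ and $\HH_{S^e}^{(x(e),1)}$, and to check the two conditions of Lemma~\ref{lemma: characterizing optimally significant inputs}.

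The key structural observation is that an instance of the form $2^e p^i$ (including $2^e$ itself) lies in the support of some $h\in\HH$ only if $h$ is one of the at most five hypotheses indexed by $e$ in the definition of $\HH$; since the prime tags $3,5,7,11,13$ are distinct and the exponents $c_0(e),c_e(e)$ are positive, one reads off directly from the explicit supports which of those hypotheses contains $x(e) = 2^e 3^{c_0(e)}$. Hence $\HH_{S^e}$ is precisely the set of index-$e$ hypotheses that occur in $\HH$, which is determined entirely by whether $\varphi_e(e)\downarrow$ and, if so, by its value. I would also establish that $\Ldim(\HH^{(2^e,0)}) = 2$: this set is $\HH$ with only these finitely many index-$e$ hypotheses removed, and a depth-$2$ shattered tree using three indices distinct from $e$ still exists (such indices exist because infinitely many programs halt on input $0$, at least one of which outputs $1$ on its own index, e.g.\ any program computing the constant function $1$), reusing the lower-bound construction from the proof of Lemma~\ref{lemma: Littlestone dimension of H^DR_ext}.

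The argument then splits into three cases. If $\varphi_e(e)\uparrow$ or $\varphi_e(e)\downarrow\notin\Yb$, then $\HH_{S^e} = \{\charfn{\{2^e,\,2^e 3^{c_0(e)}\}}\}$ is a singleton, so $\Ldim(\HH^{(2^e,1)}) = \Ldim(\HH_{S^e}) = 0 < 1 = \Ldim(\HH) - 1$, condition~2 of Lemma~\ref{lemma: characterizing optimally significant inputs} fails at $t=1$, and $(S^e, x(e))$ is \emph{not} optimally significant, as claimed. If $\varphi_e(e)\downarrow = r$ for some $r\in\Yb$, then $\HH_{S^e}$ consists of exactly three hypotheses; an instance lying in exactly one of their supports (e.g.\ $2^e 3^{c_0(e)}$ when $r=1$, or $2^e 5^{c_0(e)}$ when $r=0$) gives $\Ldim(\HH_{S^e}) \geq 1$, while a class of three hypotheses has Littlestone dimension at most $1$ (shattering a depth-$d$ tree needs $2^d$ distinct hypotheses), so $\Ldim(\HH_{S^e}) = 1$. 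Splitting on $x(e) = 2^e 3^{c_0(e)}$ separates the three hypotheses into a one-element part (dimension $0$) and a two-element part that is still split by an appropriate tag instance (dimension $1$), with the two-element part being the label-$0$ branch when $r=1$ and the label-$1$ branch when $r=0$. Then condition~1 at $t=1$ reads $\max\{\Ldim(\HH^{(2^e,0)}),\Ldim(\HH_{S^e})\} = \max\{2,1\} = 2 = \Ldim(\HH)$; at $t=2$ it reads $\max\{\Ldim(\HH_{S^e}^{(x(e),0)}),\Ldim(\HH_{S^e}^{(x(e),1)})\} = \max\{0,1\} = 1 = \Ldim(\HH_{S^e})$; and condition~2 at $t=1$ reads $\Ldim(\HH^{(2^e,1)}) = 1 \geq \Ldim(\HH) - 1 = 1$. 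So both conditions hold, $(S^e, x(e))$ is optimally significant, and the ``furthermore'' clause of Lemma~\ref{lemma: characterizing optimally significant inputs} gives $A(S^e, x(e)) = \arg\max_{b\in\Yb}\Ldim(\HH_{S^e}^{(x(e),b)}) = 1 - r$ for every optimal online learner $A$ for $\HH$.

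I expect the main obstacle to be purely the Littlestone-dimension bookkeeping: correctly listing the members of $\HH_{S^e}$, $\HH_{S^e}^{(x(e),0)}$, and $\HH_{S^e}^{(x(e),1)}$ in each case, certifying $\Ldim = 1$ for the three-element version spaces (via the ``$2^d$ distinct hypotheses are needed to shatter depth $d$'' bound plus an explicit splitting instance), and—most delicately—verifying that removing the finitely many index-$e$ hypotheses leaves $\Ldim(\HH^{(2^e,0)})$ equal to $2$ rather than dropping it. Each step is routine but hinges on careful inspection of the explicit prime-power supports and on the positivity of $c_0(e)$ and $c_e(e)$.
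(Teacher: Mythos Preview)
Your proposal is correct and follows essentially the same approach as the paper: invoke Lemma~\ref{lemma: characterizing optimally significant inputs}, compute $\HH_{S^e}$ and its refinements $\HH_{S^e}^{(x(e),r)}$ case-by-case according to whether $\varphi_e(e)$ halts and what value it takes, and read off the required Littlestone dimensions. If anything, you are more explicit than the paper in justifying auxiliary facts (e.g.\ that $\Ldim(\HH^{(2^e,0)})=2$, that a three-element class has Littlestone dimension at most $1$, and that $x(e)$ is undefined when $\varphi_e(0)\uparrow$), so the bookkeeping you anticipate as the main obstacle is exactly what the proof reduces to.
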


\begin{proof}
    Let $\HH = \HH^{DR}_{ext}$. First, consider any $e\in\N$ such that $\varphi_e(0)\downarrow$ and $\varphi_e(e)\downarrow\in\Yb$. We will show that $(S^e, x(e))$ is an optimally significant input by showing that it satisfies Lemma \ref{lemma: characterizing optimally significant inputs}. That is, we need to show that $\Ldim(\HH_{S^e}) \geq \Ldim(\HH) - 1$, $\Ldim(\HH) = \max_{r\in\Yb} \Ldim(\HH^{(2^e,r)})$, and $\Ldim(\HH_{S^e}) = \max_{r\in\Yb} \Ldim(\HH_{S^e}^{(x(e),r)})$. 
    
    By Lemma \ref{lemma: Littlestone dimension of H^DR_ext}, $\Ldim(\HH)=2$, and it is easy to verify that $\Ldim(\HH^{(2^e,0)}) = 2$ and $\Ldim(\HH_{S^e}) = 1$. Hence, the first two conditions are satisfied. For the third condition there are two cases. Note that for $r\in\Yb$, 
    $$
    \Ldim(\HH_{S^e}^{(x(e),r)}) = \begin{cases}
        \{\charfn{\{2^e, 2^e 5^{c_0(e)}, 2^e 7^{c_e(e)}\}}, \charfn{\{2^e, 2^e 5^{c_0(e)}, 2^e 11^{c_e(e)}\}}\} &\text{ if } r = 0 \text{ and } \varphi_e(e)\downarrow=1 \\
        \{\charfn{\{2^e, 2^e 3^{c_0(e)}\}}, \charfn{\{2^e, 2^e 3^{c_0(e)}, 2^e 13^{c_e(e)}\}}\} &\text{ if } r = 1 \text{ and } \varphi_e(e)\downarrow=0.
    \end{cases}
    $$
    
    Hence, $\Ldim(\HH_{S^e}^{(x(e),1-\varphi_e(e))}) = \Ldim(\HH_{S^e}) = 1$ and by Lemma \ref{lemma: characterizing optimally significant inputs}, $(S^e, x(e))$ is an optimally significant input and $A(S^e, x(e)) = 1 - \varphi_e(e)$ for any optimal online learner $A$, as required.
    
    Conversely, for any $e\in\N$ such that $\varphi_e(0)\uparrow$, $S^e$ is not $\HH$-realizable and $(S^e, x(e))$ cannot be an optimally significant input. Now, for any $e\in\N$ such that $\varphi_e(0)\downarrow$ but $\varphi_e(e)\not\in\Yb$, $\HH_{S^e} = \{\charfn{\{2^e, 2^e 3^{c_0(e)}\}}\}$ and $\Ldim(\HH_{S^e}) = 0 < \Ldim(\HH) - 1$. Hence, Lemma \ref{lemma: characterizing optimally significant inputs} is not satisfied and $(S^e, x(e))$ is not an optimally significant input.
\end{proof}

\section{Extending Theorem \ref{theorem: RER class not a-optimally c-online learnable} to the DR setting}
\label{appendix: DR class not a-optimally c-online learnable}

In this section, we extend Theorem \ref{theorem: RER class not a-optimally c-online learnable} to the DR setting. The technique is similar to that used in the proof of Theorem \ref{theorem: DR class not optimally c-online learnable}.

\begin{theorem}
    There exists a DR class $\HH \subset \Yb^\N$ of computable hypotheses with finite Littlestone dimension such that $\HH$ is optimally c-online learnable but not a-optimally c-online learnable. 
\end{theorem}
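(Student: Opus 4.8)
The plan is to adapt the class $\HH^{RER}_{halt}$ of Theorem \ref{theorem: RER class not a-optimally c-online learnable} by encoding halting certificates into the domain, as in the proof of Theorem \ref{theorem: DR class not optimally c-online learnable}, so that the class becomes DR while a computable a-optimal learner would still decide the halting problem. Let $c_e(e)$ be the index of the halting computation of $P_e$ on input $e$ (undefined when $\varphi_e(e)\uparrow$), as in Theorem \ref{theorem: DR class not optimally c-online learnable}, and fix a computable injection $u:\N\to\N$ whose range avoids the powers of $2$ and all numbers of the form $2^e3^i$ and $2^e5^i$ with $i\geq 1$ (for concreteness $u(e)=7^{e+1}$). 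I would take
$$
\HH=\bigcup_{e\in\N}\bigl\{\charfn{\{2^e\}}\bigr\}\ \cup\bigcup_{\substack{e\in\N:\\ \varphi_e(e)\downarrow}}\Bigl\{\charfn{\{2^e,~u(e),~2^e3^{c_e(e)}\}},~\charfn{\{2^e,~u(e),~2^e5^{c_e(e)}\}}\Bigr\}.
$$
Every hypothesis has finite support and is computable (the two conditional ones in each block are only defined when $c_e(e)\downarrow$). That $\HH$ is DR follows as in Theorem \ref{theorem: DR class not optimally c-online learnable}: given a canonical index $y$, one checks whether $D_y$ has one of the admissible shapes $\{2^e\}$, $\{2^e,u(e),2^e3^i\}$, or $\{2^e,u(e),2^e5^i\}$ (the unique pure power of $2$ in $D_y$ fixes $e$), and in the latter two cases runs the certificate-checker $P_{cert}$ of Theorem \ref{theorem: DR class not optimally c-online learnable} on $(e,i,e)$ to decide whether $i=c_e(e)$; all of this halts. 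Since the halting set is infinite, choosing distinct $e_1,e_2,e_3$ with $\varphi_{e_i}(e_i)\downarrow$ yields an $\HH$-shattered tree of depth $2$ with root $2^{e_1}$, left child $2^{e_2}$, and right child $2^{e_1}3^{c_{e_1}(e_1)}$, so $\Ldim(\HH)\geq 2$; the reverse bound will come from the optimal learner constructed below.

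For the negative direction, suppose $A$ is a computable a-optimal online learner for $\HH$. Put $S^e=((2^e,1))$, which is $\HH$-realizable for every $e$ (witnessed by $\charfn{\{2^e\}}$), and define $f(e)=A(\rlangle{S^e},u(e))$; since $A$ is a c-online learner and $S^e\in\SSS_\HH$, $f$ is total computable. Writing $C^e_1=\charfn{\{2^e,u(e),2^e3^{c_e(e)}\}}$ and $C^e_2=\charfn{\{2^e,u(e),2^e5^{c_e(e)}\}}$, one has $\HH_{S^e}=\{\charfn{\{2^e\}}\}$ when $\varphi_e(e)\uparrow$ and $\HH_{S^e}=\{\charfn{\{2^e\}},C^e_1,C^e_2\}$ when $\varphi_e(e)\downarrow$. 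Because both $C^e_i$ contain $u(e)$ in their support, $\HH_{S^e}^{(u(e),0)}=\{\charfn{\{2^e\}}\}$ (Littlestone dimension $0$) in every case, whereas $\HH_{S^e}^{(u(e),1)}$ is $\emptyset$ (dimension $-1$) when $\varphi_e(e)\uparrow$ and is $\{C^e_1,C^e_2\}$ (dimension $1$, since they disagree on $2^e3^{c_e(e)}$) when $\varphi_e(e)\downarrow$. Either way the two dimensions differ, so by Lemma \ref{lemma: characterizing a-optimally significant inputs} the input $(S^e,u(e))$ is a-optimally significant and $A(\rlangle{S^e},u(e))=\arg\max_{r\in\Yb}\Ldim(\HH_{S^e}^{(u(e),r)})$, which equals $1$ exactly when $\varphi_e(e)\downarrow$. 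Hence $f$ is a total computable function equal to $\charfn{\{e:\varphi_e(e)\downarrow\}}$, contradicting the undecidability of the halting problem.

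For the positive direction, I would exhibit a computable online learner $B$ with $M_B(\HH)=2$; since $\Ldim(\HH)\geq2$, this makes $B$ optimal and $\Ldim(\HH)=2$ by Theorem \ref{theorem: optimal mistake bound of online learning}. The learner $B$ maintains a finite-support guess $\hat h$, initially $\charfn{\emptyset}$, predicts $\hat h(x_t)$ each round, and updates $\hat h$ only on a mistake. On its first mistake, necessarily on some $x$ with label $1$, it reads the block $e$ off the power of $2$ dividing $x$ and sets the support of $\hat h$ to $\{2^e,u(e),x\}$ if $x$ is a certificate instance $2^e3^i$ or $2^e5^i$ (by $\HH$-realizability the target is then the unique conditional hypothesis carrying that certificate, so $\hat h$ is already exact), and to $\{2^e,u(e)\}$ otherwise; on a second mistake it adds the newly seen certificate point to the support of $\hat h$ if the label was $1$, and otherwise shrinks it to $\{2^e\}$ (seeing $u(e)$ with label $0$ forces the target to be $\charfn{\{2^e\}}$). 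One verifies that in every case $\hat h$ agrees with the unique target hypothesis on all of that hypothesis's support after the second mistake, so $M_B(\HH)\leq 2$, hence $=2$. Crucially, $B$ never simulates $P_e$ on input $e$ — it only extracts certificates directly from domain instances — so $B$ is a genuine c-online learner. I expect this last step to be the main obstacle: one must check that no adversary forces a third mistake, which relies on $B$ \emph{anticipating} the probe point $u(e)$ right after a first mistake on $2^e$ (the naive learner that predicts $0$ except on instances already seen with label $1$ errs three times against a conditional target) while never diverging — exactly the phenomenon underlying the optimal learner in the proof of Theorem \ref{theorem: RER class not a-optimally c-online learnable}.
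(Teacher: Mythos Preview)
Your construction takes a genuinely different, and in some ways cleaner, route than the paper's. In the paper's class $\HH_{halt}^{DR}$ every hypothesis is conditioned on $\varphi_e(0)\downarrow$ and the probe point is $2^e3^{c_0(e)}$; consequently the witness sample $S^e=((2^e,1))$ is realizable only when $\varphi_e(0)\downarrow$, so the map $e\mapsto A(\rlangle{S^e},x(e))$ is merely \emph{partial} computable, and the paper must close the a-optimality contradiction with a fixed-point diagonalization (build a p.c.\ $g$ with $g(0)\downarrow$, pick an index $e'>0$ for $g$, and derive $g(e')\downarrow\Leftrightarrow g(e')\uparrow$). By including $\charfn{\{2^e\}}$ unconditionally and using a certificate-free probe $u(e)=7^{e+1}$, you make $S^e$ realizable for every $e$, so $f(e)=A(\rlangle{S^e},u(e))$ is \emph{total} and the contradiction is the direct one from Theorem~\ref{theorem: RER class not a-optimally c-online learnable}: $f=\charfn{\{e:\varphi_e(e)\downarrow\}}$. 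Your DR verification and the Littlestone-dimension calculation are correct.

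There is, however, a real slip in your optimal learner $B$. When the first mistake falls on $x_1=u(e')=7^{e'+1}$ with $e'>0$, your rule ``read $e$ off the power of $2$ dividing $x$'' returns $e=0$ (since $7^{e'+1}$ is odd), not $e'$; $B$ then sets $\hat h=\charfn{\{1,7\}}$, which does not even contain the point $u(e')$ just seen with label $1$. The adversary can now force a second mistake on $(2^{e'},1)$ (or simply re-present $(u(e'),1)$) and a third on the certificate point of block $e'$, so $M_B(\HH)\geq 3$ and $B$ is not optimal as described. The fix is immediate and does not affect the rest of the argument: decode $e$ case-by-case from the prime shape of $x_1$ (pure power of $2$, pure power of $7$, or $2^e3^i$/$2^e5^i$), using $u^{-1}$ in the $7^{e'+1}$ case. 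With that correction your case analysis for $M_B(\HH)\le 2$ goes through, including the case you flagged at the end --- after a first mistake on $2^e$, anticipating $u(e)$ via $\hat h=\charfn{\{2^e,u(e)\}}$ indeed limits the damage to one further mistake.
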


\begin{proof}
    For each $x\in\N$, let the p.c. function $c_x$ be defined as in Theorem \ref{theorem: DR class not optimally c-online learnable} and consider the following class:
	\begin{align*}
    	\HH_{halt}^{DR}
    	&= \bigcup_{\substack{e\in\N:~\varphi_e(0)\downarrow}} \left\{ \charfn{\left\{ 2^e,~ 2^e 3^{c_0(e)} \right\}} \right\} \\ 
    	&\cup \bigcup_{\substack{e\in\N:~\varphi_e(0)\downarrow} \text{ and } \varphi_e(e)\downarrow} \left\{ \charfn{\left\{ 2^e, ~2^e 5^{c_0(e)},~2^e 7^{c_e(e)} \right\}}, \charfn{\left\{ 2^e,~ 2^e 5^{c_0(e)},~ 2^e 11^{c_e(e)} \right\}} \right\}.
	\end{align*}
	
	For simplicity, let $\HH = \HH_{halt}^{DR}$. Since $|h^{-1}(1)| \leq 3$ for each $h\in\HH$, we have that $\Ldim(\HH) < \infty$. Furthermore, each $h\in\HH$ is computable since $c_x(e)$ is evaluated only if $\varphi_e(x)\downarrow$. 
	To show that $\HH$ is DR, the same proof technique presented in \appendixref{sub-appendix: H^DR_ext is DR} can be applied.

	Now, assume for the sake of contradiction that there exists a computable a-optimal online learner $A$ for $\HH$. For each $e\in\N$, let $S^e = ((2^e,1))$ and define the p.c. functions $x: e \mapsto 2^e 3^{c_0(e)}$ and $f: e \mapsto A(\rlangle{S^e}, x(e))$. We will show that
	
	$$
	f(e) = A(\rlangle{S^e}, x(e)) = \begin{cases}
		1 &\text{if } \varphi_e(0)\downarrow \text{ and } \varphi_e(e)\uparrow \\
		0 &\text{if } \varphi_e(0)\downarrow \text{ and } \varphi_e(e)\downarrow \\
		\text{undefined} &\text{if }  \varphi_e(0)\uparrow. 
	\end{cases}
	$$
	
	First, note that $f(e)\downarrow$ iff $\varphi_e(0)\downarrow$: if $\varphi_e(0)\downarrow$, $S^e$ is $\HH$-realizable and $c_0(e)\downarrow$; otherwise, $S^e$ is not $\HH$-realizable and $c_0(e)\uparrow$. Next, we show by Lemma \ref{lemma: characterizing a-optimally significant inputs} that if $\varphi_e(0)\downarrow$, we must have that $(S^e,x(e))$ is a-optimally significant for $\HH$. Note that for any $e$ such that $\varphi_e(0)\downarrow$ we must have that
	
	$$
        \HH_{S^e}^{(x(e),1)} = \left\{\charfn{\left\{2^e, 2^e 3^{c_0(e)}\right\}}\right\}
    $$
    
     and 
     
    $$
        \HH_{S^e}^{(x(e),0)} = \begin{cases}
            \emptyset &\text{ if } \varphi_e(e)\uparrow \\
            \left\{ \charfn{\left\{ 2^e, ~2^e 5^{c_0(e)},~2^e 7^{c_e(e)} \right\}}, \charfn{\left\{ 2^e,~ 2^e 5^{c_0(e)},~ 2^e 11^{c_e(e)} \right\}} \right\} &\text{ if } \varphi_e(e)\downarrow 
        \end{cases}
    $$
    
    Therefore, if $\varphi_e(0)\downarrow$ and $\varphi_e(e)\uparrow$, $\Ldim(\HH_{S^e}^{(x(e),1)}) = 0 > \Ldim(\HH_{S^e}^{(x(e),0)}) = -1$ and $f(e)=1$. On the other hand, if $\varphi_e(0)\downarrow$ and $\varphi_e(e)\downarrow$, $\Ldim(\HH_{S^e}^{(x(e),0)}) = 1 > \Ldim(\HH_{S^e}^{(x(e),1)}) = 0$ and $f(e)=0$. Next, we can use $f$ to construct the following p.c. function:
    
    $$
    g(e) = \begin{cases}
        1 &\text{ if } e = 0 \\
        1 &\text{ if } e > 0 \text{ and } f(e)=1 \\
        \text{undefined} &\text{ if } e > 0 \text{ and } f(e)=0 \text{ or } f(e)\uparrow.
    \end{cases}
    $$
    
    Since $g$ is a p.c. function, there exists $e$ such that $\varphi_e = g$. Furthermore, since each p.c. function has infinitely many indices, we can assume that $e>0$. Now, by definition of $g$, since $e>0$,
    
    $$
    g(e)\downarrow \iff f(e) = 1 \iff \varphi_e(0)\downarrow ~ \wedge ~ \varphi_e(e)\uparrow \iff g(e)\uparrow,
    $$
    
    contradicting the existence of an a-optimal c-online learner for $\HH$.
    
    Although $\HH$ is not a-optimally c-online learnable, we can show that there exists a computable optimal online learner $B$ for $\HH$. It is easy to verify that $\Ldim(\HH) \geq 2$; hence, it suffices to show that $M_B(\HH)=2=\Ldim(\HH)$. $B$ predicts 0 until a mistake is made on $(x_1,1)$. There are three cases for $x_1$. If $x_1 = 2^e y^i$ for some $e,i\in\N$ such that $i>0$ and $y \in \{5,7,11\}$, $B$ will match the function $\charfn{\{2^e,2^e 5^{c_0(e)}, 2^e y^i\}}$. Since $(x_1,1)$ is realizable iff $\varphi_e(0)\downarrow$ and $\varphi_e(e)\downarrow$, $B$'s hypothesis is computable and can be made to err at most once before the target function is determined. If $x_1 = 2^e 3^i$ for some $e,i\in\N$ such that $i>0$, $B$ will match the target function $\charfn{\{2^e, 2^e 3^i\}}$ and make no further mistakes. Finally, if $x_1 = 2^e$ for some $e\in\N$, $B$ matches $\charfn{\{2^e, 2^e 5^{c_0(e)}\}}$, which is computable since $\varphi_e(0)\downarrow$. $B$ can only be made to err on $(2^e 3^{c_0(e)},1)$, $(2^e 5^{c_0(e)},0)$, $(2^e 7^{c_e(e)},1)$, or $(2^e 11^{c_e(e)},1)$ (the last two only if $\varphi_e(e)\downarrow$), after which it will match the target function and not err again.
\end{proof}

\section{Littlestone dimension of $\HH_{init}$}
\label{appendix: Ldim of H_init}

In this section, we show that the class $\HH_{init}$ presented by \citet[Theorem 10]{sterkenburg2022characterizations} has infinite Littlestone dimension.

\begin{proposition}
    Define $\HH_{init}=\{h_s\}_{s\in\N}$, where, for each $s,x\in\N$, 
    $$
    h_s(x) = \begin{cases}
        1 &\text{if } \varphi_{x,s}(x)\downarrow \\
        0 &\text{otherwise,}
    \end{cases}
    $$
    and $\varphi_{i,s}(x)\downarrow$ denotes that $\varphi_i$ halts on input $x$ within $s$ computation steps.
    Then, $\Ldim(\HH_{init}) = \infty$.
\end{proposition}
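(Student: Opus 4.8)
The plan is to show that, although the pointwise values of the $h_s$ look complicated, the family $\HH_{init}$ restricted to a carefully chosen finite set of domain points is exactly a family of threshold functions, and then to invoke the standard binary-search construction showing that thresholds on an infinite linear order shatter trees of every depth.

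First I would record the structure of the supports. Writing $K=\{x:\varphi_x(x)\downarrow\}$ for the diagonal halting set, note that for each $s$ the support $h_s^{-1}(1)=\{x:\varphi_{x,s}(x)\downarrow\}$ is finite, that $h_s^{-1}(1)\subseteq h_{s'}^{-1}(1)$ whenever $s\le s'$, and that $\bigcup_{s\in\N}h_s^{-1}(1)=K$. Since $K$ is undecidable it is in particular infinite, so infinitely many of the finite sets $h_s^{-1}(1)$ are distinct; listing the distinct ones in order of inclusion gives a strictly increasing chain $F^{(0)}\subsetneq F^{(1)}\subsetneq F^{(2)}\subsetneq\cdots$ of finite sets, each of the form $h_s^{-1}(1)$ for some $s$, so $\charfn{F^{(j)}}\in\HH_{init}$ for every $j$.

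Next I would build the threshold embedding. Fix $d\ge 1$ and put $n=2^d-1$. For each $k\in[n]$ choose $a_k\in F^{(k+1)}\setminus F^{(k)}$; the $a_k$ are distinct, and since the $F^{(j)}$ are nested we get $\charfn{F^{(j)}}(a_k)=\one_{[k\le j-1]}$ for $1\le j\le n+1$. Hence the hypotheses $\charfn{F^{(1)}},\dots,\charfn{F^{(n+1)}}\in\HH_{init}$ realize on $\{a_1,\dots,a_n\}$ precisely the $n+1$ threshold labelings $a_k\mapsto\one_{[k\le\ell]}$ for $\ell=0,1,\dots,n$ (the all-zeros labeling $\ell=0$ coming from $\charfn{F^{(1)}}$, which avoids every $a_k$). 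Finally, arrange $a_1,\dots,a_n$ as a complete binary tree of depth $d$ whose in-order traversal is the index order (root $=a_{2^{d-1}}$, left subtree on $a_1,\dots,a_{2^{d-1}-1}$, right subtree on $a_{2^{d-1}+1},\dots,a_n$, recursively). For any path $(y_1,\dots,y_d)\in\Yb^d$, following that path performs a binary search that pins down a unique $\ell\in\{0,\dots,n\}$ whose threshold labeling agrees with $(y_1,\dots,y_d)$ on every visited node; since the $2^d$ leaves biject with these $2^d$ values of $\ell$, every path is realized by $\charfn{F^{(\ell+1)}}$. So this tuple lies in $\T_{\HH_{init}}^d$, giving $\Ldim(\HH_{init})\ge d$, and letting $d\to\infty$ finishes.

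I do not expect a deep obstacle: the content is the reduction in the first two steps, namely recognizing that modulo its irrelevant pointwise values $\HH_{init}$ restricted to a well-chosen finite set is literally a family of thresholds. The only genuine subtlety is making sure the all-zeros labeling is available on that set, which the chain-of-supports bookkeeping (choosing each $a_k$ strictly above $F^{(k)}$ rather than in the bottom set $F^{(0)}$) handles. The last step, that thresholds on an infinite linear order have infinite Littlestone dimension, is standard and could be cited rather than re-derived.
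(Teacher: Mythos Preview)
Your approach is essentially the paper's: both extract a threshold structure from $\HH_{init}$ and then invoke the standard fact that $2^d$ thresholds shatter a depth-$d$ tree (the paper cites this from \cite{alon2020private}). The paper phrases the extraction via halting times---choosing $x_1,\dots,x_k\in K$ with strictly increasing halting times $s_{x_1}<\cdots<s_{x_k}$ and taking the hypotheses $h_{s_{x_i}}$---while you phrase it via the monotone chain of supports and pick the $a_k$ from successive differences; these are the same picture viewed from two sides.

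One genuine slip to fix: the supports $h_s^{-1}(1)=\{x:\varphi_{x,s}(x)\downarrow\}$ are \emph{not} finite in general (infinitely many indices $x$ code machines that halt immediately on every input, so already $h_1^{-1}(1)$ is infinite). You lean on finiteness to conclude that infinitely many of the supports are distinct, but that deduction fails as written. The repair is immediate from the undecidability you already invoked: if the chain $h_0^{-1}(1)\subseteq h_1^{-1}(1)\subseteq\cdots$ stabilized at $h_{s_0}^{-1}(1)$, then $K=h_{s_0}^{-1}(1)$ would be decidable (each $h_s$ is computable), a contradiction. With that one-line replacement your argument goes through, and the rest---choosing $a_k\in F^{(k+1)}\setminus F^{(k)}$, the bookkeeping that secures the all-zeros labeling, and the binary-search tree---is correct.
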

\begin{proof}
	We say that a hypothesis class $\HH \subseteq \Yb^\X$ contains $k$ thresholds if there are $x_1,\ldots,x_k\in\X$ and $h_1,\ldots,h_k\in\HH$ such that for all $i,j \in [k]$, $h_i(x_j) = \one_{[i \geq j]}$. It is not difficult to show that if $\HH$ contains $2^n$ thresholds, then $\Ldim(\HH) \geq n$ \citep[see][Appendix A]{alon2020private}. We will show that $\Ldim(\HH_{init}) = \infty$ by showing that for each $k\in\N$, $\HH$ contains $k$ thresholds.
	
	Define $H = \{z\in\N: \varphi_z(z)\downarrow\}$ and for any $z \in H$, let $s_z = \arg\min_{s\in\N}\varphi_{z,s}(z)\downarrow$. That is, $s_z$ is the earliest time step at which $\varphi_z(z)\downarrow$. First, we will show that for each $z_1 \in H$, there exists $z_2 \in H$ such that $s_{z_2} > s_{z_1}$. That is, $\varphi_{z_2}(z_2)$ converges strictly after $\varphi_{z_1}(z_1)$. Assume by way of contradiction that there exists some $z_1 \in H$ such that for all $z_2 \in H$, $s_{z_2} \leq s_{z_1}$. Then, $H = \{z\in\N: \varphi_{z,s_{z_1}}(z)\downarrow\}$ and $\overline{H} = \{z\in\N: \varphi_{z,s_{z_1}}(z)\uparrow\}$. However, this would imply that $\overline{H}$ is recursively enumerable, which contradicts the undecidability of $H$.

	Therefore, for any $k\in\N$, there exist $x_1,\ldots,x_k \in H$ such that $s_{x_1} < \ldots < s_{x_k}$. Note that $h_{s_{x_1}},\ldots,h_{s_{x_k}}$ form $k$ thresholds over these instances, since for each $i,j\in[k]$, $h_{s_{x_i}}(x_j) = \one_{[\varphi_{x_j,s_{x_i}}(x_j)\downarrow]} = \one_{[s_{x_i} ~ \geq ~ s_{x_j}]} = \one_{[i ~ \geq ~ j]}$.
\end{proof}

\end{document}